\newcommand{\der}[1]{\dot #1}
\newcommand{\dder}[1]{\ddot #1}
\newcommand{\e}[1]{\mathrm{e}^{#1}}
\newcommand{\norm}[1]{\|#1\|}
\newcommand{\R}{\mathbb{R}}
\newcommand{\inverse}[1]{\frac{1}{#1}}
\newcommand{\goto}{\rightarrow}
\newcommand{\xsr}{X^{\mathrm{sr}}}
\newcommand{\tsr}{t_{\mathrm{sr}}}
\renewcommand{\d}{{\mathrm{d}}}
\def\half{\frac{1}{2}}
\def\N{Nesterov's scheme }
\def\D{\Delta}
\def\iid{\textrm{i.i.d.~}}
\definecolor{wjs}{RGB}{250,0,0}
\begin{document}

\title{A Differential Equation for Modeling Nesterov's Accelerated Gradient Method: Theory and Insights}

\author{\name Weijie Su \email wjsu@stanford.edu \\
       \addr Department of Statistics\\
       Stanford University\\
       Stanford, CA 94305, USA
       \AND
       \name Stephen Boyd \email boyd@stanford.edu \\
       \addr Department of Electrical Engineering\\
        Stanford University\\
        Stanford, CA 94305, USA 
        \AND
        \name Emmanuel J.~Cand\`es \email candes@stanford.edu\\
        \addr Departments of Statistics and Mathematics\\
       Stanford University\\
       Stanford, CA 94305, USA
}
\editor{}
\maketitle

\begin{abstract}
We derive a second-order ordinary differential equation (ODE) which is the limit of Nesterov's accelerated gradient method. This ODE exhibits approximate equivalence to Nesterov's scheme and thus can serve as a tool for analysis. We show that the continuous time ODE allows for a better understanding of Nesterov's scheme. As a byproduct, we obtain a family of schemes with similar convergence rates. The ODE interpretation also suggests restarting Nesterov's scheme leading to an algorithm, which can be rigorously proven to converge at a linear rate whenever the objective is strongly convex.
\end{abstract}

\begin{keywords}
Nesterov's accelerated scheme, convex optimization, first-order methods, differential equation, restarting
\end{keywords}

\section{Introduction}
In many fields of machine learning, minimizing a convex function is at the core of efficient model estimation. In the simplest and most standard form,  we are interested in solving 
\[\mbox{minimize}\quad f(x),\] where $f$ is a convex function, smooth or non-smooth, and $x \in \R^n$ is the variable. Since Newton, numerous algorithms and methods have been proposed to solve the minimization problem, notably gradient and subgradient descent, Newton's methods, trust region methods, conjugate gradient methods, and interior point methods \citep[see e.g.][for expositions]{polyak1987, boydconvex,nocedal2006numerical,ruszczynski2006,ADMM,shor2012,beck2014introduction}.

First-order methods have regained popularity as data sets and problems
are ever increasing in size and, consequently, there has been much
research on the theory and practice of accelerated first-order
schemes.  Perhaps the earliest first-order method for minimizing a
convex function $f$ is the gradient method, which dates back to Euler
and Lagrange. Thirty years ago, however, in a seminal paper Nesterov
proposed an accelerated gradient method \citep{nesterov}, which may
take the following form: starting with $x_0$ and $y_0 = x_0$,
inductively define
\begin{equation}\label{eq:nesterov-scheme}
\begin{aligned}  
& x_k = y_{k-1} - s\nabla f(y_{k-1})\\
& y_k = x_k + \frac{k-1}{k+2} (x_k - x_{k-1}).
\end{aligned}
\end{equation}
For any fixed step size $s \le 1/L$, where $L$ is the Lipschitz constant of $\nabla f$, this scheme exhibits the convergence rate
\begin{equation}\label{eq:sk_quad_err}
f(x_k) - f^\star \le O \left(\frac{\|x_0 - x^\star\|^2}{sk^2} \right).
\end{equation}
Above, $x^\star$ is any minimizer of $f$ and $f^\star = f(x^\star)$. It is well-known that this rate is optimal among all methods having only information about the gradient of $f$ at consecutive iterates~\citep{nesterov-book}. This is in contrast to vanilla gradient descent methods, which have the same computational complexity but can only achieve a rate of $O(1/k)$. This improvement relies on the introduction of the momentum term $x_k-x_{k-1}$ as well as the particularly tuned coefficient $(k-1)/(k+2)\approx 1 - 3/k$. Since the introduction of Nesterov's scheme, there has been much work on the development of first-order accelerated methods, see \citet{nesterov-book, nesterovsmooth, nesterov_compo} for theoretical developments, and \citet{tseng} for a unified analysis of these ideas. Notable applications can be found in sparse linear regression \citep{BeckTeboulle, qin2012}, compressed sensing \citep{becker} and, deep and recurrent neural networks \citep{sutskever2013}.

In a different direction, there is a long history relating ordinary differential equation (ODEs) to optimization, see \citet{dynamic}, \citet{schropp}, and \citet{fiori2005} for example. The connection between ODEs and numerical optimization is often established via taking step sizes to be very small so that the trajectory or solution path converges to a curve modeled by an ODE. The conciseness and well-established theory of ODEs provide deeper insights into optimization, which has led to many interesting findings. Notable examples include linear regression via solving differential equations induced by linearized Bregman iteration algorithm \citep{osher2014sparse}, a continuous-time Nesterov-like algorithm in the context of control design \citep{durr2012a, durr2012b}, and modeling design iterative optimization algorithms as nonlinear dynamical systems \citep{recht}.

In this work, we derive a second-order ODE which is the exact limit of
Nesterov's scheme by taking small step sizes in
\eqref{eq:nesterov-scheme}; to the best of our knowledge, this work is
the first to use ODEs to model Nesterov's scheme or its variants in
this limit.  One surprising fact in connection with this subject is
that a \textit{first-order} scheme is modeled by a
\textit{second-order} ODE.  This ODE takes the following form:
\begin{equation}\label{key}
\ddot{X} + \frac{3}{t}\dot{X} + \nabla f(X) = 0
\end{equation}
for $t > 0$, with initial conditions $X(0) = x_0, \der{X}(0) = 0$; here, $x_0$ is the starting point in Nesterov's scheme, $\dot X \equiv \d X/\d t$ denotes the time derivative or velocity and similarly $\ddot X \equiv \d^2 X/\d t^2$ denotes the acceleration. The time parameter in this ODE is related to the step size in \eqref{eq:nesterov-scheme} via $t \approx k\sqrt{s}$. Expectedly, it also enjoys inverse quadratic convergence rate as its discrete analog,
\[
f(X(t)) - f^\star \le O\left( \frac{\|x_0 - x^\star\|^2}{t^2}\right).
\]
Approximate equivalence between Nesterov's scheme and the ODE is established later in various perspectives, rigorous and intuitive. In the main body of this paper, examples and case studies are provided to demonstrate that the homogeneous and conceptually simpler ODE can serve as a tool for understanding, analyzing and generalizing Nesterov's scheme.

In the following, two insights of Nesterov's scheme are highlighted, the first one on oscillations in the trajectories of this scheme, and the second on the peculiar constant 3 appearing in the ODE.

\subsection{From Overdamping to Underdamping}
\label{sec:from-overd-underd}
In general, Nesterov's scheme is not monotone in the objective function value due to the introduction of the momentum term. Oscillations or overshoots along the trajectory of iterates approaching the minimizer are often observed when running Nesterov's scheme. Figure \ref{fig:intro_scaling} presents typical phenomena of this kind, where a two-dimensional convex function is minimized by Nesterov's scheme. Viewing the ODE as a damping system, we obtain interpretations as follows.
\par
{\noindent \bf Small $t$.} In the beginning, the damping ratio $3/t$ is large. This leads the ODE to be an overdamped system, returning to the equilibrium without oscillating;\\
{\noindent \bf Large $t$.} As $t$ increases, the ODE with a small $3/t$ behaves like an underdamped system, oscillating with the amplitude gradually decreasing to zero.
\par
As depicted in Figure \ref{fig:overshoot1}, in the beginning the ODE
curve moves smoothly towards the origin, the minimizer $x^\star$. The
second interpretation ``{\bf Large $t$'}' provides partial explanation for the oscillations
observed in Nesterov's scheme at later stage. Although our analysis
extends farther, it is similar in spirit to that carried in
\citet{restart}. In particular, the zoomed Figure \ref{fig:overshoot2}
presents some butterfly-like oscillations for both the scheme and
ODE. There, we see that the trajectory constantly moves away from the origin and returns back later. Each overshoot in Figure \ref{fig:overshoot2} causes a bump in the function values, as shown in Figure \ref{fig:bump_w_text}. We observe also
from Figure \ref{fig:bump_w_text} that the periodicity captured by the
bumps are very close to that of the ODE solution. In passing, it is
worth mentioning that the solution to the ODE in this case can be
expressed via Bessel functions, hence enabling quantitative
characterizations of these overshoots and bumps, which are given in
full detail in Section \ref{sec:interpret}.

\begin{figure}[!htp]
\centering
\begin{subfigure}[b]{0.32\textwidth}
\centering
\includegraphics[width = \textwidth]{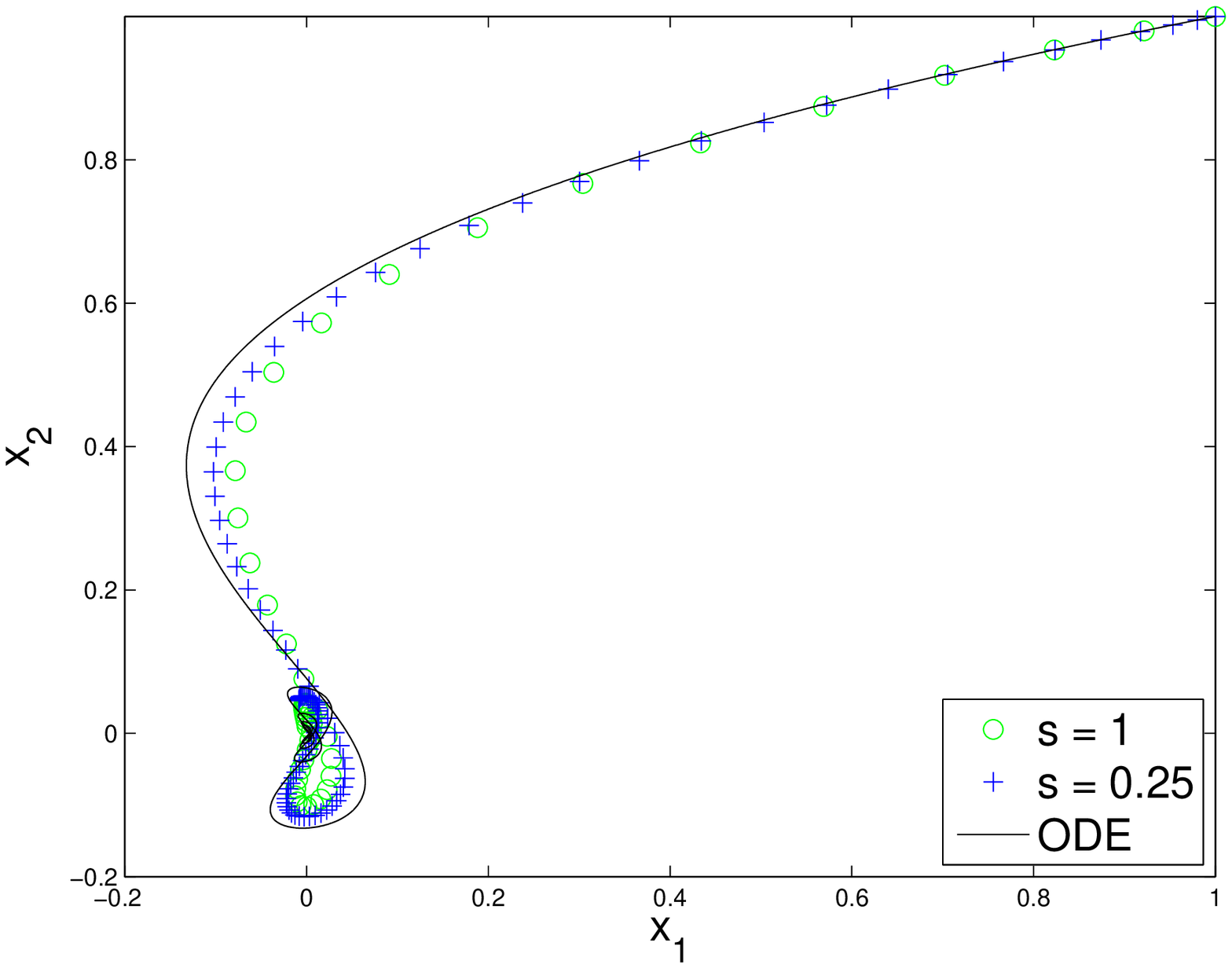}
\caption{Trajectories.}
\label{fig:overshoot1}
\end{subfigure}
\hfill
\begin{subfigure}[b]{0.32\textwidth}
\centering
\includegraphics[width = \textwidth]{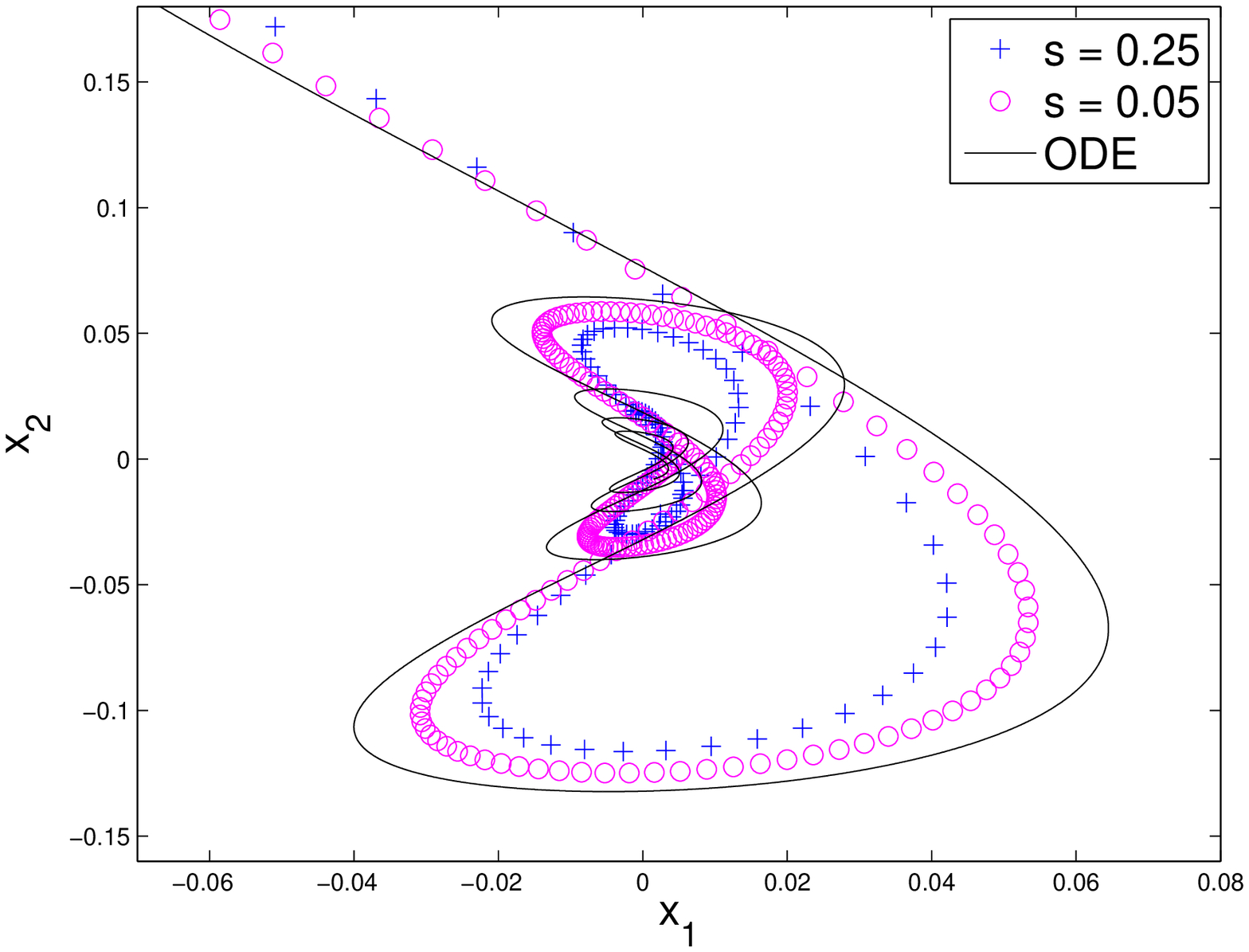}
\caption{Zoomed trajectories.}
\label{fig:overshoot2}
\end{subfigure}
\hfill
\begin{subfigure}[b]{0.32\textwidth}
\centering
\includegraphics[width = \textwidth]{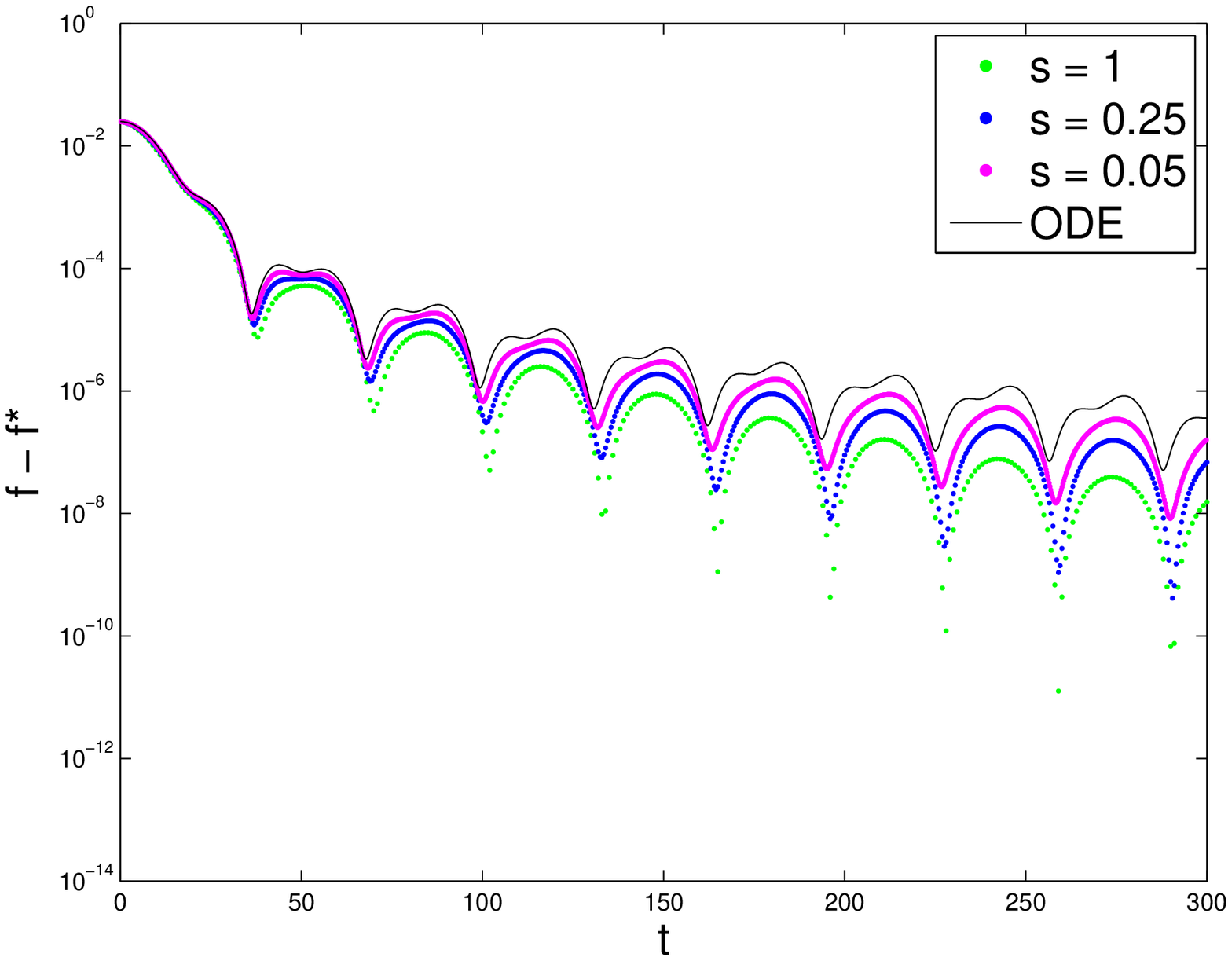}
\caption{Errors $f - f^\star$.}
\label{fig:bump_w_text}
\end{subfigure}
\caption{Minimizing $f = 2\times 10^{-2} x_1^2 + 5\times 10^{-3} x_2^2$, starting from $x_0 = (1, ~ 1)$. The black and solid curves correspond to the solution to the ODE. In (c), for the x-axis we use the identification between time and iterations, $t = k\sqrt{s}$.}
\label{fig:intro_scaling}
\end{figure}

\subsection{A Phase Transition}
\label{sec:phase-transition}
The constant 3, derived from $(k+2) - (k-1)$ in \eqref{key}, is not haphazard. In fact, it is the smallest constant that guarantees $O(1/t^2)$ convergence rate. Specifically, parameterized by a constant $r$, the generalized ODE
\[
\ddot X + \frac{r}{t}\dot X + \nabla f(X) = 0
\]
can be translated into a generalized Nesterov's scheme that is the same as the original \eqref{eq:nesterov-scheme} except for $(k-1)/(k+2)$ being replaced by $(k-1)/(k+r-1)$. Surprisingly, for both generalized ODEs and schemes, the inverse quadratic convergence is guaranteed if and only if $r \ge 3$. This phase transition suggests there might be deep causes for acceleration among first-order methods. In particular, for $r \ge 3$, the worst case constant in this inverse quadratic convergence rate is minimized at $r = 3$. 

Figure \ref{fig:small_r_fail} illustrates the growth of $t^2(f(X(t)) - f^\star)$ and $sk^2(f(x_k) - f^\star)$, respectively, for the generalized ODE and scheme with $r = 1$, where the objective function is simply $f(x) = \frac12 x^2$. Inverse quadratic convergence fails to be observed in both Figures \ref{fig:ode_abs_x} and \ref{fig:scheme_abs_x}, where the scaled errors grow with $t$ or iterations, for both the generalized ODE and scheme.

\begin{figure}[!htp]
\centering
\begin{subfigure}[b]{0.45\textwidth}
\centering
\includegraphics[width = \textwidth, height=1.6in]{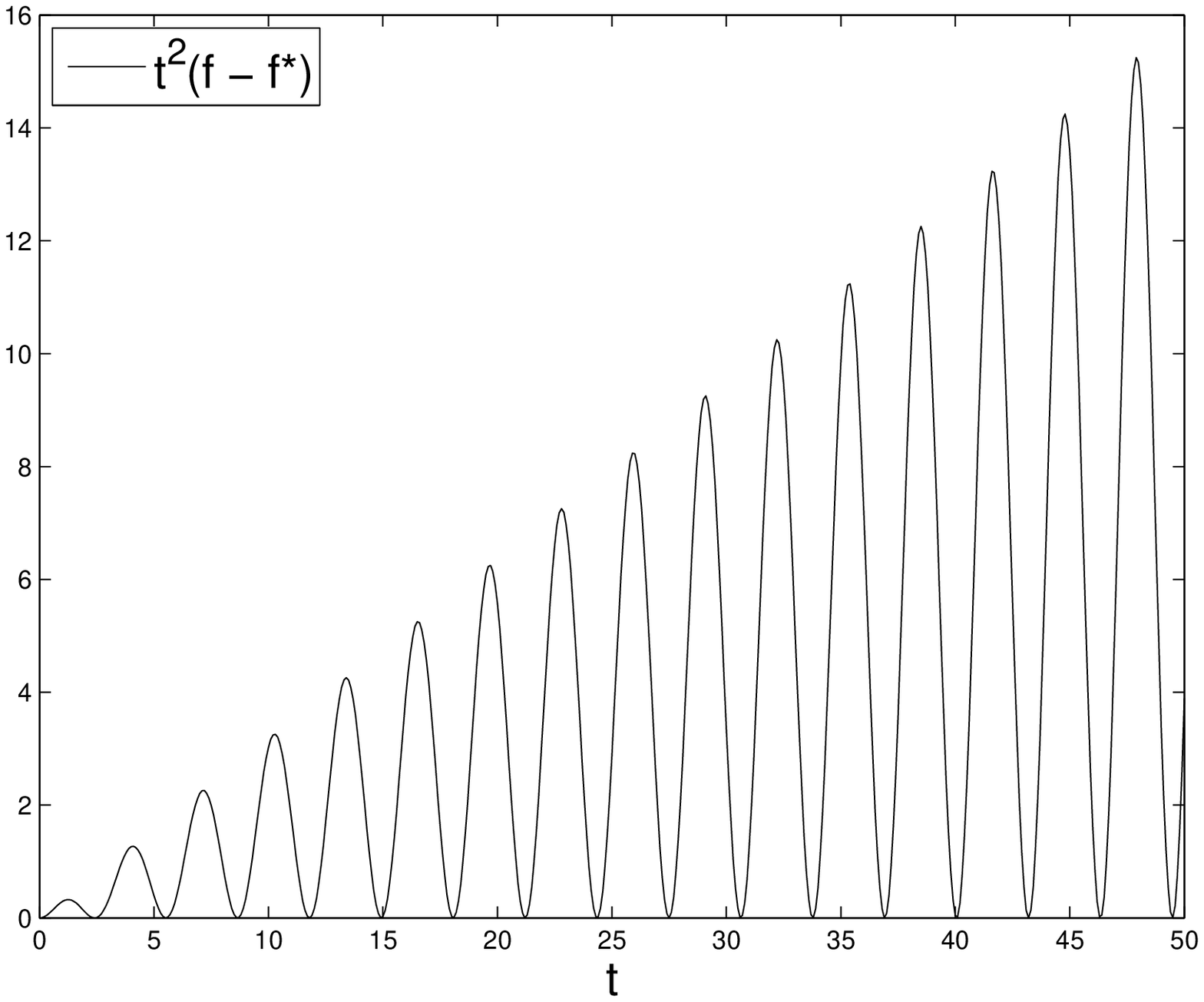}
\caption{Scaled errors $t^2(f(X(t)) - f^\star)$.}
\label{fig:ode_abs_x}
\end{subfigure}
\hfill
\begin{subfigure}[b]{0.45\textwidth}
\centering
\includegraphics[width = \textwidth, height=1.6in]{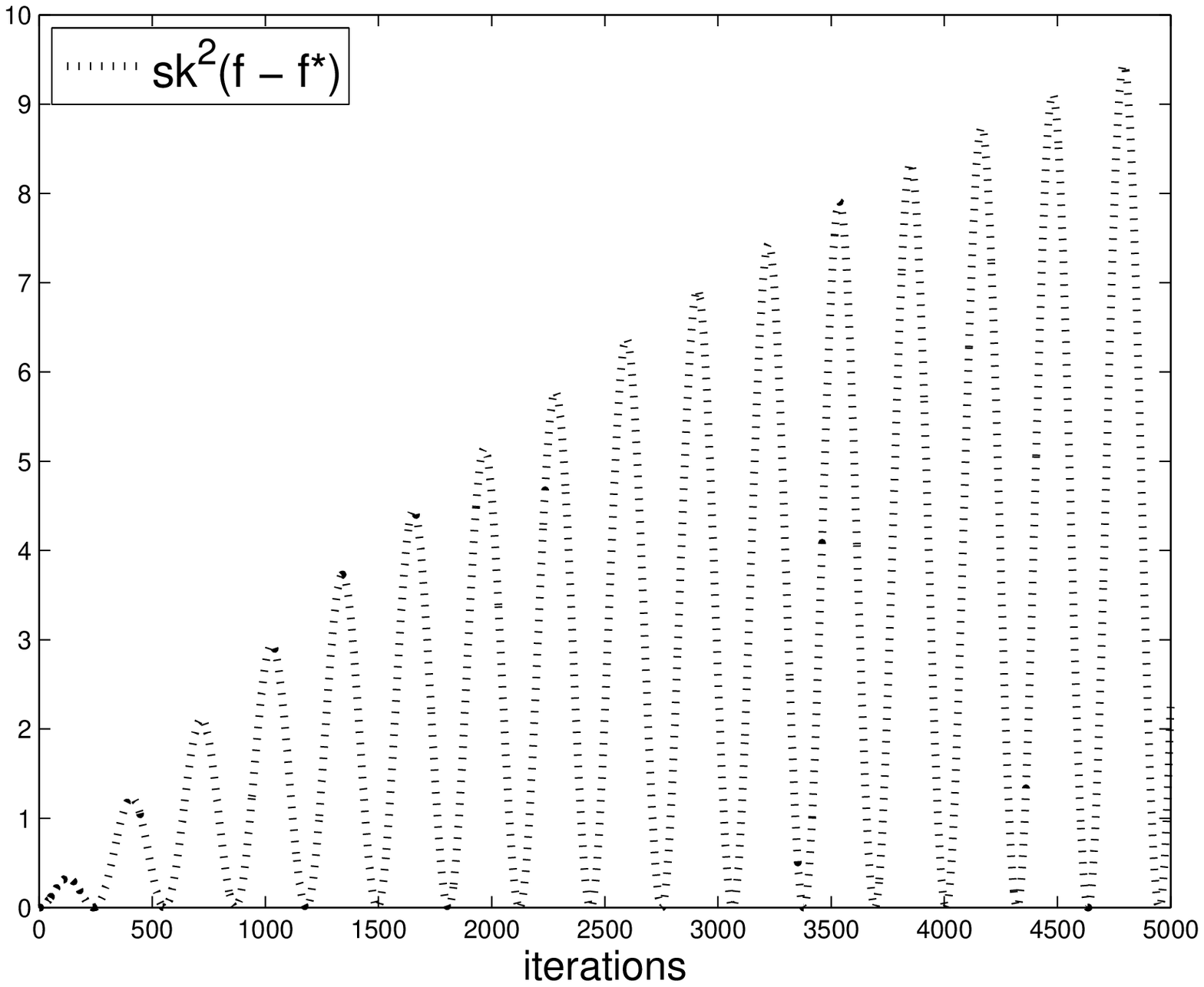}
\caption{Scaled errors $sk^2(f(x_k) - f^\star)$.}
\label{fig:scheme_abs_x}
\end{subfigure}
\caption{Minimizing $f = \frac12 x^2$ by the generalized ODE and scheme with $r = 1$, starting from $x_0 = 1$. In (b), the step size $s = 10^{-4}$. }
\label{fig:small_r_fail}
\end{figure}

\subsection{Outline and Notation}
\label{sec:outline-highlights}
The rest of the paper is organized as follows. In Section \ref{sec:derivation}, the ODE is rigorously derived from Nesterov's scheme, and a generalization to composite optimization, where $f$ may be non-smooth, is also obtained. Connections between the ODE and the scheme, in terms of trajectory behaviors and convergence rates, are summarized in Section \ref{sec:interpret}. In Section \ref{sec:varycoeff}, we discuss the effect of replacing the constant $3$ in \eqref{key} by an arbitrary constant on the convergence rate. A new restarting scheme is suggested in Section \ref{sec:accelerate}, with linear convergence rate established and empirically observed.

Some standard notations used throughout the paper are collected here. We denote by $\mathcal{F}_L$ the class of convex functions $f$ with $L$--Lipschitz continuous gradients defined on $\R^n$, i.e., $f$ is convex, continuously differentiable, and satisfies
\[
\norm{\nabla f(x) - \nabla f(y)} \le L\norm{x - y}
\]
for any $x, y\in \R^n$, where $\norm{\cdot}$ is the standard Euclidean norm and $L > 0$ is the Lipschitz constant. Next, $\mathcal{S}_{\mu}$ denotes the class of $\mu$--strongly convex functions $f$ on $\R^n$ with continuous gradients, i.e., $f$ is continuously differentiable and $f(x) - \mu\|x\|^2/2$ is convex. We set $\mathcal{S}_{\mu,L} = \mathcal{F}_{L}\cap \mathcal{S}_{\mu}$. 

\section{Derivation}
\label{sec:derivation}
First, we sketch an informal derivation of the ODE \eqref{key}. Assume
$f\in \mathcal{F}_L$ for $L > 0$. Combining the two equations of
\eqref{eq:nesterov-scheme} and applying a rescaling gives
\begin{equation}\label{eq:nesterovscheme_one}
\frac{x_{k+1} -x_k}{\sqrt{s}} =  \frac{k-1}{k+2}\frac{x_k -x_{k-1}}{\sqrt{s}} -  \sqrt{s}\nabla f(y_k).
\end{equation}
Introduce the \textit{Ansatz} $x_k \approx X(k\sqrt{s})$ for some
smooth curve $X(t)$ defined for $t \ge 0$.  Put $k = t/\sqrt{s}$. Then
as the step size $s$ goes to zero, $X(t) \approx x_{t/\sqrt{s}} = x_k$
and $X(t+\sqrt{s} ) \approx x_{(t+\sqrt{s})/\sqrt{s}} = x_{k+1}$, and
Taylor expansion gives
\[
(x_{k+1} -x_k)/ \sqrt{s}  = \dot{X}(t) + \frac{1}{2}\ddot{X}(t)\sqrt{s} + o(\sqrt{s}), \quad (x_{k} -x_{k-1})/ \sqrt{s}  = \dot{X}(t) - \frac{1}{2}\ddot{X}(t)\sqrt{s} + o(\sqrt{s})
\]
and $\sqrt{s}\nabla f(y_k) = \sqrt{s}\nabla f(X(t)) +
o(\sqrt{s})$. Thus \eqref{eq:nesterovscheme_one} can be written as
\begin{multline}\label{eq:nesterov_taylor}
\dot{X}(t) + \frac{1}{2}\ddot{X}(t)\sqrt{s} + o(\sqrt{s})\\
 = \Big{(}1 - \frac{3\sqrt{s}}{t} \Big{)}  \Big{(}\dot{X}(t) - \frac{1}{2}\ddot{X}(t)\sqrt{s} + o(\sqrt{s}) \Big{)} - \sqrt{s}\nabla f(X(t))  + o(\sqrt{s}).
\end{multline}
By comparing the coefficients of $\sqrt{s}$ in \eqref{eq:nesterov_taylor}, we obtain
\begin{equation}\nonumber
\ddot{X} + \frac{3}{t}\dot{X} + \nabla f(X) = 0.
\end{equation}
The first initial condition is $X(0) = x_0$. Taking $k=1$ in \eqref{eq:nesterovscheme_one} yields 
\[
(x_2-x_1)/\sqrt{s} = -\sqrt{s}\nabla f(y_1) = o(1).
\]
Hence, the second initial condition is simply $\der{X(0)} = 0$ (vanishing initial velocity).

One popular alternative momentum coefficient is $\theta_k(\theta_{k-1}^{-1} - 1)$, where $\theta_k$ are iteratively defined as $\theta_{k+1} = \left(\sqrt{\theta_k^4 + 4\theta_k^2} - \theta_k^2 \right)/2$, starting from $\theta_0 = 1$ \citep{nesterov, BeckTeboulle}. Simple analysis reveals that $\theta_k(\theta_{k-1}^{-1} - 1)$ asymptotically equals $1 - 3/k + O(1/k^2)$, thus leading to the same ODE as \eqref{eq:nesterov-scheme}.

Classical results in ODE theory do not directly imply the existence or
uniqueness of the solution to this ODE because the coefficient $3/t$
is singular at $t=0$. In addition, $\nabla f$ is typically not
analytic at $x_0$, which leads to the inapplicability of the power
series method for studying singular ODEs. Nevertheless, the ODE is
well posed: the strategy we employ for showing this constructs a
series of ODEs approximating \eqref{key}, and then chooses a convergent
subsequence by some compactness arguments such as the Arzel\'a-Ascoli
theorem. Below, $C^{2}((0,\infty);\R^n)$ denotes the class of twice continuously differentiable maps from $(0, \infty)$ to $\R^n$; similarly, $C^{1}([0,\infty);\R^n)$ denotes the class of continuously differentiable maps from $[0, \infty)$ to $\R^n$.
\begin{theorem}\label{thm:regularity}
For any $f\in\mathcal{F}_{\infty} :=  \cup_{L > 0}\mathcal{F}_L$ and any $x_0\in\R^n$, the ODE \eqref{key} with initial conditions $X(0) = x_0, \der{X}(0) = 0$ has a \textit{unique global} solution $X\in C^{2}((0,\infty);\R^n)\cap C^{1}([0,\infty);\R^n)$.
\end{theorem}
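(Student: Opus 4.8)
The plan is to follow the compactness strategy hinted at in the text: regularize the singular coefficient $3/t$, prove a priori bounds that are uniform in the regularization, and then extract a convergent subsequence. For $\delta\in(0,1]$ I first consider the regularized problem
\[
\ddot{X}_\delta + \frac{3}{t+\delta}\dot{X}_\delta + \nabla f(X_\delta) = 0, \qquad X_\delta(0) = x_0,\ \dot X_\delta(0) = 0 .
\]
Written as a first-order system in $(X_\delta,\dot X_\delta)$, the right-hand side is Lipschitz in the state variables on every finite time interval (the coefficient $3/(t+\delta)\le 3/\delta$ is bounded, and since $f\in\mathcal{F}_L$ the map $\nabla f$ is globally $L$-Lipschitz); Cauchy--Lipschitz therefore gives a unique solution $X_\delta\in C^2([0,\infty);\R^n)$, which is global because $\nabla f$ has at most linear growth.

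The heart of the argument is an a priori estimate independent of $\delta$. Multiplying the regularized equation by the integrating factor $(t+\delta)^3$ gives $\frac{\d}{\d t}\bigl((t+\delta)^3\dot X_\delta\bigr) = -(t+\delta)^3\nabla f(X_\delta)$, and since $\dot X_\delta(0)=0$ the boundary term at $0$ vanishes, so
\[
\dot X_\delta(t) = -\frac{1}{(t+\delta)^3}\int_0^t (u+\delta)^3\nabla f(X_\delta(u))\,\d u .
\]
As $(u+\delta)^3\le(t+\delta)^3$ for $u\le t$, this yields $\norm{\dot X_\delta(t)}\le \int_0^t\norm{\nabla f(X_\delta(u))}\,\d u\le t\sup_{u\in[0,t]}\norm{\nabla f(X_\delta(u))}$. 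Combining this with $\norm{\nabla f(X_\delta(u))}\le\norm{\nabla f(x_0)}+L\int_0^u\norm{\dot X_\delta(v)}\,\d v$ and Gr\"onwall's inequality produces bounds on $\norm{X_\delta}$ and $\norm{\dot X_\delta}$ on each compact $[0,T]$ that do not depend on $\delta$. (An equivalent route uses that the energy $\tfrac12\norm{\dot X_\delta}^2+f(X_\delta)$ is nonincreasing, together with convexity of $f$.) The crucial point is that the \emph{singular} term is thereby tamed: $\frac{3}{t+\delta}\norm{\dot X_\delta(t)}\le 3\sup_{[0,t]}\norm{\nabla f(X_\delta)}$, so the equation also gives a $\delta$-uniform bound on $\norm{\ddot X_\delta}$ over $[0,T]$.

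Consequently $\{X_\delta\}$ and $\{\dot X_\delta\}$ are uniformly bounded and equicontinuous on every $[0,T]$, and Arzel\`a--Ascoli together with a diagonal argument over $T=1,2,\dots$ produces a sequence $\delta_j\to0$ with $X_{\delta_j}\to X$ and $\dot X_{\delta_j}\to\dot X$ locally uniformly on $[0,\infty)$; hence $X\in C^1([0,\infty);\R^n)$ with $X(0)=x_0$ and $\dot X(0)=\lim_j\dot X_{\delta_j}(0)=0$. Fixing $0<t_0\le t$ and passing to the limit in
\[
\dot X_{\delta_j}(t) = \dot X_{\delta_j}(t_0) - \int_{t_0}^t\Bigl(\frac{3}{u+\delta_j}\dot X_{\delta_j}(u)+\nabla f(X_{\delta_j}(u))\Bigr)\,\d u ,
\]
where the integrand converges uniformly on $[t_0,T]$ since there $u\ge t_0>0$, yields $\dot X(t)=\dot X(t_0)-\int_{t_0}^t\bigl(\tfrac{3}{u}\dot X(u)+\nabla f(X(u))\bigr)\,\d u$; the integrand being continuous on $(0,\infty)$, this shows $X\in C^2((0,\infty);\R^n)$ and that $X$ solves \eqref{key} for every $t>0$.

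For uniqueness, if $X$ and $Y$ are two solutions and $Z=X-Y$, the same integrating factor gives $\frac{\d}{\d t}(t^3\dot Z)=-t^3(\nabla f(X)-\nabla f(Y))$, and since $t^3\dot Z(t)\to0$ as $t\to0^+$ we get $\dot Z(t)=-t^{-3}\int_0^t u^3(\nabla f(X(u))-\nabla f(Y(u)))\,\d u$; Lipschitz continuity of $\nabla f$ then forces $\norm{Z(t)}\le\frac{Lt^2}{8}\sup_{[0,t]}\norm{Z}$, so $Z\equiv0$ on some $[0,t_1]$, after which the ODE is regular and standard uniqueness propagates $Z\equiv0$ to all of $[0,\infty)$. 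I expect the genuine obstacle to be concentrated entirely in the a priori estimate above — specifically, in extracting the $O(t)$ behaviour of $\dot X_\delta$ near the origin, which is precisely what neutralizes the $3/t$ singularity both in the limit passage and in the uniqueness proof; the rest is routine compactness and Gr\"onwall bookkeeping.
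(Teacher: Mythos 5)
Your proposal is correct and follows essentially the same route as the paper: regularize the singular coefficient near $t=0$, derive $\delta$-uniform a priori bounds (in particular the $O(t)$ bound on the velocity that tames the $3/t$ term), extract a limit by the Arzel\'a--Ascoli theorem, and prove uniqueness from the integral identity $t^3\dot Z(t)=-\int_0^t u^3\left(\nabla f(X(u))-\nabla f(Y(u))\right)\d u$ via a contraction estimate near the origin. The differences are only in execution: the paper regularizes with $3/\max(\delta,t)$ and bounds $\sup_u\norm{\dot X_\delta(u)}/u$ on a fixed small interval before extending, whereas you use $3/(t+\delta)$, Gr\"onwall on arbitrary compacts, and extract locally uniform convergence of $\dot X_{\delta_j}$ as well as $X_{\delta_j}$, which streamlines both the verification of $\dot X(0)=0$ and the identification of the limiting equation.
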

The next theorem, in a rigorous way, guarantees the validity of the derivation of this ODE. The proofs of both theorems are deferred to the appendices.

\begin{theorem}\label{thm:nesterov_limit}
For any $f \in \mathcal{F}_{\infty}$, as the step size $s \goto 0$, Nesterov's scheme \eqref{eq:nesterov-scheme} converges to the ODE \eqref{key} in the sense that for all fixed $T > 0$,
\[
\lim_{s\goto 0}\max_{0 \le k \le \frac{T}{\sqrt{s}}}\left\| x_k - X\left( k\sqrt{s} \right) \right\| = 0.
\] 

\end{theorem}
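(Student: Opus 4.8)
The plan is to recast Nesterov's scheme as a forward–Euler–type integrator for the first–order system $\dot X = V$, $\dot V = -\tfrac{3}{t}V-\nabla f(X)$ equivalent to \eqref{key}, and to compare its iterates with the exact ODE solution $X$ furnished by Theorem~\ref{thm:regularity} through a discrete Gronwall argument. Setting $z_k := (x_k-x_{k-1})/\sqrt s$ and $t_k := k\sqrt s$, equation \eqref{eq:nesterovscheme_one} reads
\[
z_{k+1} = z_k - \frac{3\sqrt s}{t_k+2\sqrt s}\,z_k - \sqrt s\,\nabla f(y_k), \qquad x_{k+1} = x_k + \sqrt s\, z_{k+1},
\]
with $y_k = x_k + O(\sqrt s\,\|z_k\|)$, a consistent discretization of that system with step $\sqrt s$. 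The naive strategy — bound the local truncation error and invoke the Lipschitz constant of the vector field — fails because that ``constant'' is $3/t$, singular at $t=0$; the key device is to integrate the damping exactly. Continuously, $\tfrac{\d}{\d t}(t^3 V) = -t^3\nabla f(X)$ gives $V(t) = -t^{-3}\int_0^t\tau^3\nabla f(X(\tau))\,\d\tau$; discretely, multiplying the $z$–recursion by the telescoping factor $a_k := (k-1)k(k+1)$, which satisfies $a_{k+1}\tfrac{k-1}{k+2}=a_k$ and $a_1=0$, yields $z_{k+1} = -\tfrac{\sqrt s}{a_{k+1}}\sum_{j=1}^{k} a_{j+1}\nabla f(y_j)$.

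With these ``variation of parameters'' formulas in hand I would proceed in four steps. (i) \emph{A priori bounds.} Using $\|\nabla f(x)\|\le\|\nabla f(x_0)\|+L\|x-x_0\|$ together with $\sum_{i<k}a_{i+1} = \tfrac14(k-1)k(k+1)(k+2)$, the closed form gives $\|z_k\|\le A_0 + LT\sqrt s\sum_{i<k}\|z_i\|$ on $\{t_k\le T\}$, so discrete Gronwall bounds $\max_{t_k\le T}\|z_k\|$, hence also $\max_{t_k\le T}\|x_k-x_0\|$, by a constant $B=B(f,x_0,T)$; on the continuous side, Theorem~\ref{thm:regularity} and the integral formula bound $X$, $V$ and $\dot V$ on $[0,T]$ (in particular $V(t)=O(t)$ near $0$, so $V$ is Lipschitz there). (ii) \emph{Consistency.} The discrete sum defining $z_{k+1}$ is a left–Riemann sum, mesh $\sqrt s$, of the integral defining $V(t_k)$: the weights $\sqrt s\,a_{j+1}/a_{k+1}$ match $\sqrt s\,\tau_j^3/t_k^3$ with a relative error that, being weighted by $a_{j+1}\sim j^3$, contributes only $O(\sqrt s)$, and the Lipschitz constant of $\tau\mapsto\tau^3\nabla f(X(\tau))$ on $[0,t_k]$ is $O(t_k^2)$, so after division by $t_k^3$ the Riemann error is $O(\sqrt s)$ uniformly in $k$ with $t_k\le T$. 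This gives $\|z_{k+1}-V(t_k)\| \le L\sum_{j\le k} w_{jk}\,\|y_j-X(t_j)\| + O(\sqrt s)$ with nonnegative weights $w_{jk}=\sqrt s\,a_{j+1}/a_{k+1}$ whose sum is $O(T)$, and $\|y_j-X(t_j)\|\le e_j + \sqrt s B$ where $e_k := \|x_k-X(t_k)\|$. (iii) \emph{Closing the loop.} Since $x_k-x_0=\sqrt s\sum_{j\le k}z_j$ and $X(t_k)-x_0=\int_0^{t_k}V$, with $\sqrt s\sum V(t_j)$ approximating that integral to $O(\sqrt s)$ ($V$ Lipschitz), step (ii) yields $e_k \le O(\sqrt s) + L\sqrt s\sum_{j\le k}\sum_{i<j}w_{ij}e_i$; interchanging the sums and using the telescoping identity $\sum_{l>i}a_{i+1}/a_l = a_{i+1}/(2i(i+1)) = (i+2)/2$, which follows from $\tfrac1{(l-1)l(l+1)} = \tfrac12\big(\tfrac1{(l-1)l}-\tfrac1{l(l+1)}\big)$, one gets $\sum_{l>i}w_{il}\le T$ and hence $e_k \le O(\sqrt s) + LT\sqrt s\sum_{i<k}e_i$. (iv) \emph{Gronwall.} Discrete Gronwall over $k\le T/\sqrt s$ gives $e_k \le C(f,x_0,T)\sqrt s\;e^{LT^2}\to 0$ as $s\to0$, which is the claim; a standard bootstrap (continuity in $k$) legitimizes using the a priori bounds of (i) while establishing them.

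The main obstacle is precisely this singularity of $3/t$ at the origin. If one instead excised a layer $[0,\delta]$ and ran a routine Gronwall estimate on $[\delta,T]$ with the crude bound $3/t\le 3/\delta$ for the damping, the Gronwall factor $e^{O(T/\delta)}$ would swamp the $O(\delta)$ error carried out of the initial layer, and sending $\delta\to0$ would not rescue the argument; it is essential to treat the dissipative term exactly through the integrating factor $t^3$ (discretely $(k-1)k(k+1)$), which replaces the singular kernel by the bounded kernel $\tau^3/t^3$ and keeps the Gronwall constant independent of $\delta$ and $s$. A secondary point is that $f\in\mathcal F_L$ gives only $X\in C^2$, not $C^3$, so Taylor expansion with an $O(s^{3/2})$ per–step remainder is unavailable; but only first–order (left–Riemann) accuracy is needed here, controlled by the Lipschitz continuity in $t$ of $\tau\mapsto\tau^3\nabla f(X(\tau))$, which suffices for a qualitative $s\to0$ limit rather than a convergence rate.
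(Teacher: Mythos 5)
Your proposal is correct in substance, and it takes a genuinely different route from the paper. The paper's proof (Appendix~B) introduces the smoothed ODE $\ddot X + \tfrac{3}{\max(\delta,t)}\dot X + \nabla f(X)=0$, whose vector field is globally Lipschitz, compares Nesterov's iterates with the forward Euler iterates of \emph{that} equation via an explicit two-sequence recursion (Lemma~\ref{lm:euler_nesterov}, giving an error $C\delta^2+o_s(1)$), and then closes with a three-term triangle inequality and a double limit ($s\to0$ first, then $\delta\to0$ along the Arzel\`a--Ascoli subsequence from the existence proof). You instead handle the singularity of $3/t$ head-on with the exact integrating factor: $t^3$ in continuous time and $a_k=(k-1)k(k+1)$ in discrete time, which is precisely the summation-by-parts analog of the identity $\d(t^3\dot X)/\d t=-t^3\nabla f(X)$ that the paper itself uses elsewhere (Lemmas~\ref{lm:m-estimate-s} and \ref{lm:m_upper}) but not in the proof of this theorem. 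This converts both the scheme and the ODE into Volterra-type equations with the bounded kernel $\tau^3/t^3$ (discretely $a_{j+1}/a_{k+1}$, whose row sums are $O(T)$ and column sums $O(T)$ by the telescoping of $1/((l-1)l(l+1))$), after which a single discrete Gronwall suffices. What your approach buys is the elimination of the smoothing parameter $\delta$ and, more importantly, a quantitative $O(\sqrt{s})$ convergence rate uniform on $[0,T]$, which the paper's qualitative double-limit argument does not deliver; your diagnosis of why the naive excision-plus-Gronwall with $3/t\le 3/\delta$ fails is also accurate. The only places needing care in a full write-up are routine: the consistency estimate for $k=O(1)$ is best handled by noting that both $z_{k+1}$ and $\dot X(t_k)$ are themselves $O(\sqrt{s})$ there (rather than by the relative-error argument, which degenerates for small $k$), and the a priori bound of step~(i) is self-contained, so the ``bootstrap'' you mention at the end is not actually needed.
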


\subsection{Simple Properties}
We collect some elementary properties that are helpful in understanding the ODE.
\\
\noindent{\bf{Time Invariance.}}~ If we adopt a linear time transformation, $\tilde t = ct$ for some $c > 0$, by the chain rule it follows that
\begin{equation*}
\frac{\d X}{\d \tilde t} = \frac{1}{c}\frac{\d X}{\d t}, ~ \frac{\d^2 X}{\d \tilde t^2} = \frac{1}{c^2}\frac{\d^2 X}{\d t^2}.
\end{equation*}
This yields the ODE parameterized by $\tilde t$,
\begin{equation}\nonumber
\frac{\d^2 X}{\d \tilde t^2} + \frac{3}{\tilde t}\frac{\d X}{\d \tilde t} + \nabla f(X)/c^2 = 0.
\end{equation}
Also note that minimizing $f/c^2$ is equivalent to minimizing $f$. Hence, the ODE is invariant under the time change. In fact, it is easy to see that time invariance holds if and only if the coefficient of $\dot X$ has the form $C/t$ for some constant $C$.
\\
\noindent{\bf{Rotational Invariance.}}~ Nesterov's scheme and other gradient-based schemes are invariant under rotations. As expected, the ODE is also invariant under orthogonal transformation. To see this, let $Y = QX$ for some orthogonal matrix $Q$. This leads to $\dot Y = Q\dot X, \ddot Y = Q\ddot X$ and $\nabla_Y f = Q\nabla_X f$. Hence, denoting by $Q^T$ the transpose of $Q$, the ODE in the new coordinate system reads $Q^T\ddot Y + \frac{3}{t}Q^T\dot Y + Q^T\nabla_Yf = 0$, which is of the same form as \eqref{key} once multiplying $Q$ on both sides.
\\
\noindent{\bf{Initial Asymptotic.}}~ Assume sufficient smoothness of $X$ such that $\lim_{t\goto 0} \ddot X(t)$ exists. The mean value theorem guarantees the existence of some $\xi\in(0, t)$ that satisfies $\dot X(t)/t = (\dot X(t) - \dot X(0))/t = \ddot X(\xi)$. Hence, from the ODE we deduce $\ddot X(t) + 3\dder{X}(\xi) + \nabla f(X(t)) = 0$. Taking the limit $t\goto 0$ gives $\ddot X(0) = -\nabla f(x_0)/4$. Hence, for small $t$ we have the asymptotic form:
\begin{equation}\nonumber
X(t) =  - \frac{\nabla f(x_0)t^2}{8} + x_0 + o(t^2).
\end{equation}
This asymptotic expansion is consistent with the empirical observation that Nesterov's scheme moves slowly in the beginning.

\subsection{ODE for Composite Optimization}
\label{sec:extension-non-smooth}
It is interesting and important to generalize the ODE to minimizing $f$ in the composite form $f(x) = g(x) + h(x)$, where the smooth part $g\in \mathcal{F}_{L}$ and the non-smooth part $h:\R^n\goto(-\infty,\infty]$ is a structured general convex function. Both \cite{nesterov_compo} and \cite{BeckTeboulle} obtain $O(1/k^2)$ convergence rate by employing the proximal structure of $h$. In analogy to the smooth case, an ODE for composite $f$ is derived in the appendix.

\section{Connections and Interpretations}
\label{sec:interpret}
In this section, we explore the approximate equivalence between the ODE and Nesterov's scheme, and provide evidence that the ODE can serve as an amenable tool for interpreting and analyzing Nesterov's scheme. The first subsection exhibits inverse quadratic convergence rate for the ODE solution, the next two address the oscillation phenomenon discussed in Section \ref{sec:from-overd-underd}, and the last subsection is devoted to comparing Nesterov's scheme with gradient descent from a numerical perspective.

\subsection{Analogous Convergence Rate}
\label{sec:complex}
The original result from \citet{nesterov} states that, for any $f\in\mathcal{F}_L$, the sequence $\{x_k\}$ given by \eqref{eq:nesterov-scheme} with step size $s \le 1/L$ satisfies
\begin{equation}\label{eq:nesterov_ineq}
f(x_k) - f^\star \le \frac{2\|x_0 - x^\star\|^2}{s(k+1)^2}.
\end{equation}
Our next result indicates that the trajectory of \eqref{key} closely resembles the sequence $\{x_k\}$ in terms of the convergence rate to a minimizer $x^\star$. Compared with the discrete case, this proof is shorter and simpler.
\begin{theorem}\label{thm:ode_rate}
For any $f\in\mathcal{F}_{\infty}$, let $X(t)$ be the unique global solution to \eqref{key} with initial conditions $X(0) = x_0, \dot X(0) = 0$. Then, for any $t > 0$,
\begin{equation}\label{eq:t_2_rate}
f(X(t)) - f^\star \le \frac{2\norm{x_0-x^\star}^2}{t^2}.
\end{equation}
\end{theorem}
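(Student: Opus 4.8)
The plan is to exhibit a Lyapunov (energy) functional that is non-increasing along the trajectory of \eqref{key} and whose value as $t\downarrow 0$ equals $2\norm{x_0-x^\star}^2$. Concretely, I would set, for $t>0$,
\[
\mathcal{E}(t) = t^2\bigl(f(X(t)) - f^\star\bigr) + 2\,\Bigl\| X(t) + \tfrac{t}{2}\dot X(t) - x^\star \Bigr\|^2 .
\]
The theorem then follows immediately: the first summand is nonnegative, so $t^2(f(X(t))-f^\star)\le \mathcal{E}(t)$, and once we know $\mathcal{E}$ is non-increasing with $\lim_{t\downarrow 0}\mathcal{E}(t)=2\norm{x_0-x^\star}^2$, dividing by $t^2$ gives \eqref{eq:t_2_rate}.

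The core computation is to differentiate $\mathcal{E}$ on $(0,\infty)$, which is legitimate since $X\in C^{2}((0,\infty);\R^n)$ by Theorem \ref{thm:regularity}. The derivative of the first term is $2t(f(X)-f^\star)+t^2\langle \nabla f(X),\dot X\rangle$. For the second term, write $v(t)=X(t)+\tfrac{t}{2}\dot X(t)-x^\star$ and substitute $\ddot X = -\tfrac{3}{t}\dot X-\nabla f(X)$ from the ODE; the velocity telescopes to the clean expression $\dot v = -\tfrac{t}{2}\nabla f(X)$, so that $\tfrac{\d}{\d t}\bigl(2\norm{v}^2\bigr)=4\langle v,\dot v\rangle = -2t\langle X-x^\star,\nabla f(X)\rangle - t^2\langle \dot X,\nabla f(X)\rangle$. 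Adding the two pieces, the $t^2\langle\nabla f(X),\dot X\rangle$ terms cancel and one is left with $\dot{\mathcal{E}}(t)=2t\bigl[(f(X(t))-f^\star)-\langle\nabla f(X(t)),X(t)-x^\star\rangle\bigr]$. By convexity of $f$, $f^\star=f(x^\star)\ge f(X)+\langle\nabla f(X),x^\star-X\rangle$, so the bracket is $\le 0$ and hence $\dot{\mathcal{E}}\le 0$ on $(0,\infty)$.

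It remains to pin down the initial value, and this is the only delicate point, since the coefficient $3/t$ is singular at $t=0$ and one must argue about the limit rather than plug in. Here Theorem \ref{thm:regularity} does the work: $X\in C^{1}([0,\infty);\R^n)$, so $X(t)\to x_0$ and $f(X(t))\to f(x_0)$, giving $t^2(f(X(t))-f^\star)\to 0$; and $\dot X(t)\to\dot X(0)=0$, giving $\tfrac{t}{2}\dot X(t)\to 0$. Hence $\lim_{t\downarrow 0}\mathcal{E}(t)=2\norm{x_0-x^\star}^2$. Since $\mathcal{E}$ is non-increasing on $(0,\infty)$, for every $t>0$ we get $\mathcal{E}(t)\le \lim_{\tau\downarrow 0}\mathcal{E}(\tau)=2\norm{x_0-x^\star}^2$, which finishes the proof.

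I expect the main obstacle to be this boundary behaviour at $t=0$ — verifying that the energy extends continuously to $t=0$ with the claimed value — which is precisely why Theorem \ref{thm:regularity} (in particular $\dot X(0)=0$) is invoked. The other thing one has to get right is the ansatz for the Lyapunov functional itself: the use of the ``half-velocity-shifted'' point $X+\tfrac{t}{2}\dot X$ mirrors the look-ahead point $y_k=x_k+\tfrac{k-1}{k+2}(x_k-x_{k-1})$ under the identification $t\approx k\sqrt{s}$, and the factor $t^2$ mirrors $sk^2$ in \eqref{eq:sk_quad_err}. Once that guess is in hand, everything above is a few lines of calculus and one application of the convexity inequality, which is exactly why the proof is ``shorter and simpler'' than its discrete counterpart.
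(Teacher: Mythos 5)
Your proposal is correct and uses exactly the same energy functional $\mathcal{E}(t) = t^2(f(X(t))-f^\star) + 2\norm{X + t\dot X/2 - x^\star}^2$ and the same monotonicity-plus-convexity argument as the paper's proof. The only difference is that you treat the boundary value at $t=0$ more carefully via the $C^1([0,\infty))$ regularity and $\dot X(0)=0$ from Theorem \ref{thm:regularity}, whereas the paper simply writes $\mathcal{E}(t)\le\mathcal{E}(0)$; your added care is a welcome refinement, not a different route.
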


\begin{proof}
Consider the energy functional\footnote{We may also view this functional as the negative entropy. Similarly, for the gradient flow $\dot X + \nabla f(X) = 0$, an energy function of form $\mathcal{E}_{\mathrm{gradient}}(t) = t(f(X(t))-f^\star) + \|X(t) - x^\star\|^2/2$ can be used to derive the bound $f(X(t)) - f^\star \le \frac{\|x_0-x^\star\|^2}{2t}$.} defined as $\mathcal{E}(t) = t^2(f(X(t)) - f^\star) + 2\|X + t\dot{X}/2 - x^\star \|^2$, whose time derivative is 
\[
\dot{\mathcal{E}} = 2t(f(X) - f^\star) + t^2\langle \nabla f, \dot{X} \rangle + 4\left\langle X + \frac{t}{2}\dot{X} - x^\star, \frac{3}{2}\dot{X} + \frac{t}{2}\ddot{X} \right\rangle.
\]
Substituting $3\dot{X}/2 + t\ddot{X}/2$ with $-t\nabla f(X)/2$, the above equation gives
\[
\dot{\mathcal{E}} = 2t(f(X) - f^\star) + 4\langle X - x^\star, - t\nabla f(X)/2 \rangle  = 2t(f(X) - f^\star) -2t\langle X - x^\star, \nabla f(X) \rangle \le 0,
\]
where the inequality follows from the convexity of $f$. Hence by monotonicity of $\mathcal{E}$ and non-negativity of $2\norm{X + t\dot{X}/2 - x^\star}^2$, the gap satisfies
\begin{equation*}
f(X(t)) - f^\star \le \frac{\mathcal{E}(t)}{t^2} \le \frac{\mathcal{E}(0)}{t^2} = \frac{2\norm{x_0 - x^\star}^2}{t^2}.
\end{equation*}
\end{proof}
Making use of the approximation $t \approx k\sqrt{s}$, we observe that the convergence rate in \eqref{eq:nesterov_ineq} is essentially a discrete version of that in \eqref{eq:t_2_rate}, providing yet another piece of evidence for the approximate equivalence between the ODE and the scheme.

We finish this subsection by showing that the number 2 appearing in the numerator of the error bound in \eqref{eq:t_2_rate} is optimal. Consider an arbitrary $f \in \mathcal{F}_{\infty}(\R)$ such that $f(x) = x$ for $x \ge 0$. Starting from some $x_0 > 0$, the solution to \eqref{key} is $X(t) = x_0 - t^2/8$ before hitting the origin. Hence, $t^2(f(X(t)) - f^\star) = t^2(x_0 - t^2/8)$ has a maximum $2x_0^2 = 2|x_0 - 0|^2$ achieved at $t = 2\sqrt{x_0}$. Therefore, we cannot replace 2 by any smaller number, and we can expect that this tightness also applies to the discrete analog \eqref{eq:nesterov_ineq}.

\subsection{Quadratic $f$ and Bessel Functions}
\label{sec:bessel}
For quadratic $f$, the ODE \eqref{key} admits a solution in closed form. This closed form solution turns out to be very useful in understanding the issues raised in the introduction. 

Let $f(x) = \frac12\langle x, Ax \rangle + \langle b, x \rangle$, where $A \in \R^{n \times n}$ is a positive semidefinite matrix and $b$ is in the column space of $A$ because otherwise this function can attain $-\infty$. Then a simple translation in $x$ can absorb the linear term $\langle b, x\rangle$ into the quadratic term. Since both the ODE and the scheme move within the affine space perpendicular to the kernel of $A$, without loss of generality, we assume that $A$ is positive definite, admitting a spectral decomposition $A = Q^T\Lambda Q$, where $\Lambda$ is a diagonal matrix formed by the eigenvalues. Replacing $x$ with $Qx$, we assume $f = \frac12 \langle x, \Lambda x \rangle$ from now on. Now, the ODE for this function admits a simple decomposition of form
\begin{equation}\nonumber
\ddot X_i + \frac{3}{t}\dot X_i + \lambda_i X_i = 0, \quad i=1,\ldots, n
\end{equation}
with $X_i(0) = x_{0,i}, \dot X_i(0) = 0$. Introduce $Y_i(u) = uX_i(u/\sqrt{\lambda_i})$, which satisfies
\begin{equation}\nonumber
u^2\ddot{Y_i} + u\dot{Y_i} + (u^2 - 1)Y_i=0.
\end{equation}
This is Bessel's differential equation of order one. Since $Y_i$ vanishes at $u=0$, we see that $Y_i$ is a constant multiple of $J_1$, the Bessel function of the first kind of order one.\footnote{Up to a constant multiplier, $J_1$ is the unique solution to the Bessel's differential equation $u^2 \ddot J_1 + u\dot J_1 + (u^2-1)J_1 = 0$ that is finite at the origin. In the analytic expansion of $J_1$, $m !!$ denotes the double factorial defined as $m !! = m \times (m-2) \times \cdots \times 2$ for even $m$, or $m !! = m \times (m-2) \times \cdots \times 1$ for odd $m$.}  It has an analytic expansion:
\[
J_1(u) =\sum_{m=0}^{\infty} \frac{(-1)^m}{(2m)!!(2m+2)!!}u^{2m+1},
\]
which gives the asymptotic expansion 
\[
J_1(u) = (1 + o(1))\frac{u}2
\]
when $u\goto 0$. Requiring $X_i(0) = x_{0,i}$, hence, we obtain
\begin{equation}\label{eq:bessel_present}
X_i(t) = \frac{2x_{0,i}}{t\sqrt{\lambda_i}}J_1(t\sqrt{\lambda_i}).
\end{equation}
For large $t$, the Bessel function has the following asymptotic form \citep[see e.g.][]{besselbook}:
\begin{equation}\label{eq:bessel-large}
J_1(t) = \sqrt{\frac{2}{\pi t}} \Big{(} \cos(t - 3\pi/4) + O(1/t)\Big{)}.
\end{equation}
This asymptotic expansion yields (note that $f^\star = 0$)
\begin{equation}\label{eq:bessel-three-power}
f(X(t))-f^\star = f(X(t)) = \sum_{i=1}^n\frac{2x_{0,i}^2}{t^2}J_1\left( t\sqrt{\lambda_i} \right)^2 = O\left(\frac{\|x_0 - x^\star\|^2}{t^3\sqrt{\min\lambda_i}}\right).
\end{equation}
On the other hand, \eqref{eq:bessel-large} and \eqref{eq:bessel-three-power} give a lower bound:
\begin{equation}\label{eq:bessel-three-lower}
\begin{aligned}
\limsup_{t\rightarrow\infty}t^3(f(X(t)) - f^\star) &\ge \lim_{t\goto\infty}\frac{1}{t} \int_0^t u^3(f(X(u)) - f^\star) \d u\\ &=\lim_{t\goto\infty}\frac{1}{t}\int_{0}^t\sum_{i=1}^n2x_{0,i}^2uJ_1(u\sqrt{\lambda_i})^2 \d u \\
&= \sum_{i=1}^n\frac{2x_{0,i}^2}{\pi\sqrt{\lambda_i}} \geq \frac{2\norm{x_0 - x^\star}^2}{\pi\sqrt{L}},
\end{aligned}
\end{equation}
where $L = \|A\|_2$ is the spectral norm of $A$. The first inequality follows by interpreting $\lim_{t\goto\infty}\frac{1}{t} \int_0^t u^3(f(X(u)) - f^\star) \d u$ as the mean of $u^3(f(X(u)) - f^\star)$ on $(0, \infty)$ in certain sense.

In view of \eqref{eq:bessel-three-power}, \N might possibly exhibit $O(1/k^3)$ convergence rate for strongly convex functions. This convergence rate is consistent with the second inequality in Theorem \ref{thm:large_r_dis}. In Section \ref{sec:bett-conv-rate}, we prove the $O(1/t^3)$ rate for a generalized version of \eqref{key}. However, \eqref{eq:bessel-three-lower} rules out the possibility of a higher order convergence rate.


Recall that the function considered in Figure \ref{fig:intro_scaling} is $f(x) = 0.02x_1^2 + 0.005x_2^2$, starting from $x_0 = (1, ~1)$. As the step size $s$ becomes smaller, the trajectory of \N converges to the solid curve represented via the Bessel function. While approaching the minimizer $x^\star$, each trajectory displays the oscillation pattern, as well-captured by the zoomed Figure \ref{fig:overshoot2}. This prevents Nesterov's scheme from achieving better convergence rate. The representation \eqref{eq:bessel_present} offers excellent explanation as follows. Denote by $T_1, T_2$, respectively, the approximate periodicities of the first component $|X_1|$ in absolute value and the second $|X_2|$. By \eqref{eq:bessel-large}, we get $T_1 = \pi/\sqrt{\lambda_1} = 5\pi$ and $T_2 = \pi/\sqrt{\lambda_2} = 10\pi$. Hence, as the amplitude gradually decreases to zero, the function $f = 2x_{0,1}^2J_1(\sqrt{\lambda_1}t)^2/t^2 + 2x_{0,2}^2J_1(\sqrt{\lambda_2}t)^2/t^2$ has a major cycle of $10\pi$, the least common multiple of $T_1$ and $T_2$. A careful look at Figure \ref{fig:bump_w_text} reveals that within each major bump, roughly, there are $10\pi/T_1 = 2 $ minor peaks. 

\subsection{Fluctuations of Strongly Convex $f$}
\label{sec:fluc}
The analysis carried out in the previous subsection only applies to convex quadratic functions. In this subsection, we extend the discussion to one-dimensional strongly convex functions. The Sturm-Picone theory \citep[see e.g.][]{sturm} is extensively used all along the analysis.

Let $f \in \mathcal{S}_{\mu,L}(\R)$. Without loss of generality, assume $f$ attains minimum at $x^\star = 0$. Then, by definition $\mu \le f'(x)/x \le L $ for any $x \ne 0$. Denoting by $X$ the solution to the ODE \eqref{key}, we consider the self-adjoint equation,
\begin{equation}\label{eq:adjoint}
(t^3Y')' + \frac{t^3f'(X(t))}{X(t)}Y = 0,
\end{equation}
which, apparently, admits a solution $Y(t) = X(t)$. To apply the Sturm-Picone comparison theorem, consider
\begin{equation}\nonumber
(t^3Y')' + \mu t^3Y = 0
\end{equation}
for a comparison. This equation admits a solution $\widetilde Y(t) = J_1(\sqrt{\mu}t)/t$. Denote by $\tilde t_1 < \tilde t_2 < \cdots$ all the positive roots of $J_1(t)$, which satisfy \citep[see e .g.][]{besselbook}
\begin{equation}\nonumber
3.8317 = \tilde t_1 - \tilde t_0 > \tilde t_2 - \tilde t_3 > \tilde t_3 - \tilde t_4 > \cdots > \pi,
\end{equation}
where $\tilde t_0 = 0$. Then, it follows that the positive roots of $\widetilde Y$ are $\tilde t_1/\sqrt{\mu},\, \tilde t_2/\sqrt{\mu}, \ldots$. Since $t^3f'(X(t))/X(t) \geq \mu t^3$, the Sturm-Picone comparison theorem asserts that $X(t)$ has a root in each interval $[\tilde t_i/\sqrt{\mu}, \tilde t_{i+1}/\sqrt{\mu}]$. 

To obtain a similar result in the opposite direction, consider
\begin{equation}\label{eq:adjoint-l}
(t^3Y')' + L t^3Y = 0.
\end{equation}
Applying the Sturm-Picone comparison theorem to \eqref{eq:adjoint} and \eqref{eq:adjoint-l}, we ensure that between any two consecutive positive roots of $X$, there is at least one $\tilde t_i/\sqrt{L}$. Now, we summarize our findings in the following. Roughly speaking, this result concludes that the oscillation frequency of the ODE solution is between $O(\sqrt{\mu})$ and $O(\sqrt{L})$.

\begin{theorem}
Denote by $0 < t_1 < t_2 < \cdots $ all the roots of $X(t) - x^\star$. Then these roots satisfy, for all $i \ge 1$,
\begin{equation}\nonumber
t_1 < \frac{7.6635}{\sqrt{\mu}}, ~ t_{i+1} - t_{i} < \frac{7.6635}{\sqrt{\mu}}, ~ t_{i+2} - t_i > \frac{\pi}{\sqrt{L}}.
\end{equation}
\end{theorem}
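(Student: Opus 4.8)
The plan is to feed the two Sturm-type comparisons already set up just above the statement into spacing bounds for the zeros of $X$, reading off the recalled facts about the positive zeros $0 = \tilde t_0 < \tilde t_1 < \tilde t_2 < \cdots$ of $J_1$: that $\tilde t_1 = 3.8317$ and that the successive gaps $\tilde t_{j+1} - \tilde t_j$ decrease strictly from $\tilde t_1$ down to the limit $\pi$. Throughout, assume $x_0 \ne x^\star = 0$, since otherwise $X \equiv 0$ and there is nothing to prove. Then at any zero $t_* > 0$ of $X$ we must have $\dot X(t_*) \ne 0$: indeed $X(t_*) = \dot X(t_*) = 0$ would, by uniqueness for the regular ODE \eqref{key} on $(0,\infty)$ (valid since $\nabla f$ is Lipschitz), force $X \equiv 0$. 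Hence the zeros of $X$ are isolated, and off this discrete set the coefficient $q(t) := t^3 f'(X(t))/X(t)$ of \eqref{eq:adjoint} is continuous and satisfies $\mu t^3 \le q(t) \le L t^3$, which is all a Sturm-type comparison needs.

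For the two upper bounds, compare \eqref{eq:adjoint} with $(t^3 Y')' + \mu t^3 Y = 0$, whose solution $\widetilde Y(t) = J_1(\sqrt\mu t)/t$ has exactly the positive zeros $\tilde t_j/\sqrt\mu$, $j \ge 1$. Since $q(t) \ge \mu t^3$, the comparison theorem places a zero of $X$ strictly between every pair of consecutive zeros of $\widetilde Y$; in particular $X$ has infinitely many zeros, so $t_1 < t_2 < \cdots$ is indeed an infinite sequence, and moreover no open interval $(t_i, t_{i+1})$ delimited by consecutive zeros of $X$ can contain two zeros of $\widetilde Y$ (else a zero of $X$ would lie strictly inside it). Consequently $t_i$ and $t_{i+1}$ lie in the closure of a single gap of $\widetilde Y$, or of two adjacent gaps; since every gap of $\widetilde Y$ — including the leftmost region $(0, \tilde t_1/\sqrt\mu)$, which is the widest — has length at most $\tilde t_1/\sqrt\mu$, we obtain $t_{i+1} - t_i < 2\tilde t_1/\sqrt\mu < 7.6635/\sqrt\mu$. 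Running the same argument with the convention $t_0 := 0$ (so that $(0, t_1)$ contains at most the single zero $\tilde t_1/\sqrt\mu$ of $\widetilde Y$) gives $t_1 < 2\tilde t_1/\sqrt\mu < 7.6635/\sqrt\mu$.

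For the lower bound, compare \eqref{eq:adjoint} with \eqref{eq:adjoint-l}, whose solution $J_1(\sqrt L t)/t$ has positive zeros $\tilde t_j/\sqrt L$. Since $q(t) \le L t^3$, the comparison theorem now places a zero $\tilde t_{j(i)}/\sqrt L$ of $J_1(\sqrt L\,\cdot)/\cdot$ strictly inside each $(t_i, t_{i+1})$. Because $\tilde t_{j(i)}/\sqrt L < t_{i+1} < \tilde t_{j(i+1)}/\sqrt L$ we get $j(i+1) \ge j(i)+1$, so $t_{i+2} - t_i > (\tilde t_{j(i+1)} - \tilde t_{j(i)})/\sqrt L \ge (\tilde t_{j(i)+1} - \tilde t_{j(i)})/\sqrt L > \pi/\sqrt L$, using that every consecutive gap of the $\tilde t_j$ exceeds $\pi$.

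The step I expect to take the most care is the bookkeeping in the second paragraph: turning ``at most one zero of $\widetilde Y$ per interval $(t_i, t_{i+1})$'' into the explicit constant $2\tilde t_1$ requires treating the leftmost region $(0, \tilde t_1/\sqrt\mu)$ — where $\tilde t_0 = 0$ is not itself a zero of $\widetilde Y$ — on the same footing as a genuine gap, and checking that $\tilde t_1/\sqrt\mu$ dominates all gap lengths. A secondary technical point is justifying that a Sturm-type comparison is legitimate here given that $q(t)$ is defined only off the discrete zero set of $X$ and is merely bounded, not continuous, there; this is standard but deserves a sentence. Everything else is a direct substitution of the stated zero-spacing properties of $J_1$.
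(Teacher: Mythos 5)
Your proof is correct and follows essentially the same route as the paper, which states the theorem as a direct summary of the two Sturm--Picone comparisons against $(t^3Y')'+\mu t^3Y=0$ and $(t^3Y')'+Lt^3Y=0$ set up just before it. You in fact supply details the paper leaves implicit -- the isolation of the zeros of $X$ via backward/forward uniqueness, the ``at most one zero of $\widetilde Y$ per interval $(t_i,t_{i+1})$'' bookkeeping that yields the constant $2\tilde t_1=7.6634\ldots$, and the index-shift argument $j(i+1)\ge j(i)+1$ for the lower bound -- all of which are consistent with the paper's intended argument.
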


\subsection{Nesterov's Scheme Compared with Gradient Descent}
\label{sec:nest-scheme-comp}
The ansatz $t \approx k\sqrt{s}$ in relating the ODE and Nesterov's scheme is formally confirmed in Theorem \ref{thm:nesterov_limit}. Consequently, for any constant $t_c > 0$, this implies that $x_k$ does not change much for a range of step sizes $s$ if $k \approx t_c/\sqrt{s}$. To empirically support this claim, we present an example in Figure \ref{fig:lasso_scaling}, where the scheme minimizes $f(x) = \|y - Ax\|^2/2 + \|x\|_1$ with $y = (4, ~2, ~0)$ and $A(:, 1) = (0, ~2, ~4), ~ A(:, 2) = (1, ~1, ~1)$ starting from $x_0 = (2, ~0)$ (here $A(:, j)$ is the $j$th column of $A$). From this figure, we are delight to observe that $x_k$ with the same $t_c$ are very close to each other.

This interesting square-root scaling has the potential to shed light on the superiority of Nesterov's scheme over gradient descent. Roughly speaking, each iteration in Nesterov's scheme amounts to traveling $\sqrt{s}$ in time along the integral curve of \eqref{key}, whereas it is known that the simple gradient descent $x_{k+1} = x_k - s \nabla f(x_k)$ moves $s$ along the integral curve of $\dot X + \nabla f(X) = 0$. We expect that for small $s$ Nesterov's scheme moves more in each iteration since $\sqrt{s}$ is much larger than $s$. Figure \ref{fig:comp1_time} illustrates and supports this claim, where the function minimized is $f = |x_1|^3 + 5|x_2|^3 + 0.001(x_1 + x_2)^2$ with step size $s = 0.05$ (The coordinates are appropriately rotated to allow $x_0$ and $x^\star$ lie on the same horizontal line). The circles are the iterates for $k = 1, 10, 20, 30, 45, 60, 90, 120, 150, 190, 250, 300$. For Nesterov's scheme, the seventh circle has already passed $t = 15$, while for gradient descent the last point has merely arrived at $t = 15$.

\begin{figure}[!htp]
\centering
\begin{subfigure}[b]{0.48\textwidth}
\includegraphics[width = \textwidth]{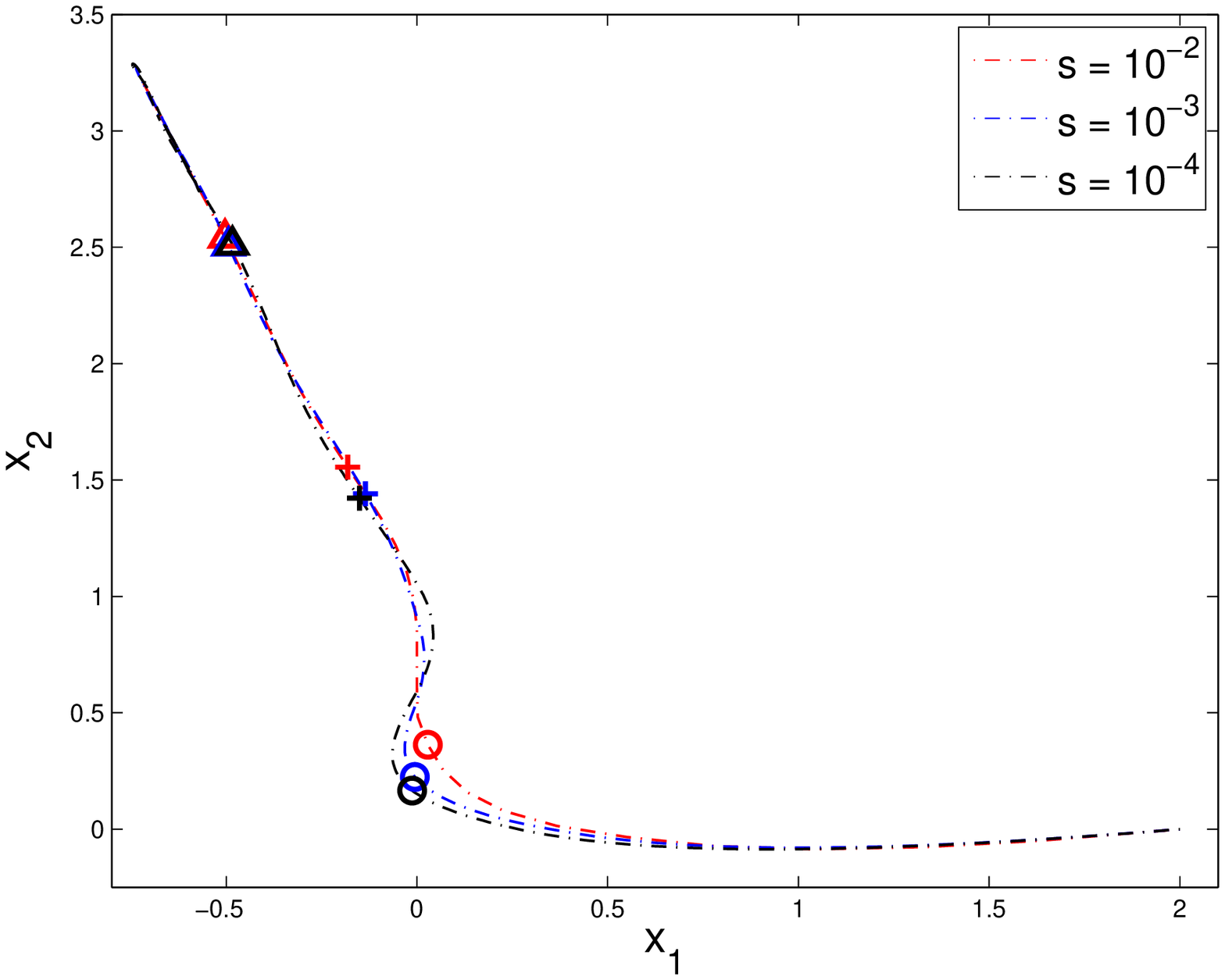}
\caption{Square-root scaling of $s$.}
\label{fig:lasso_scaling}
\end{subfigure}
\hfill
\begin{subfigure}[b]{0.48\textwidth}
\includegraphics[width = \textwidth]{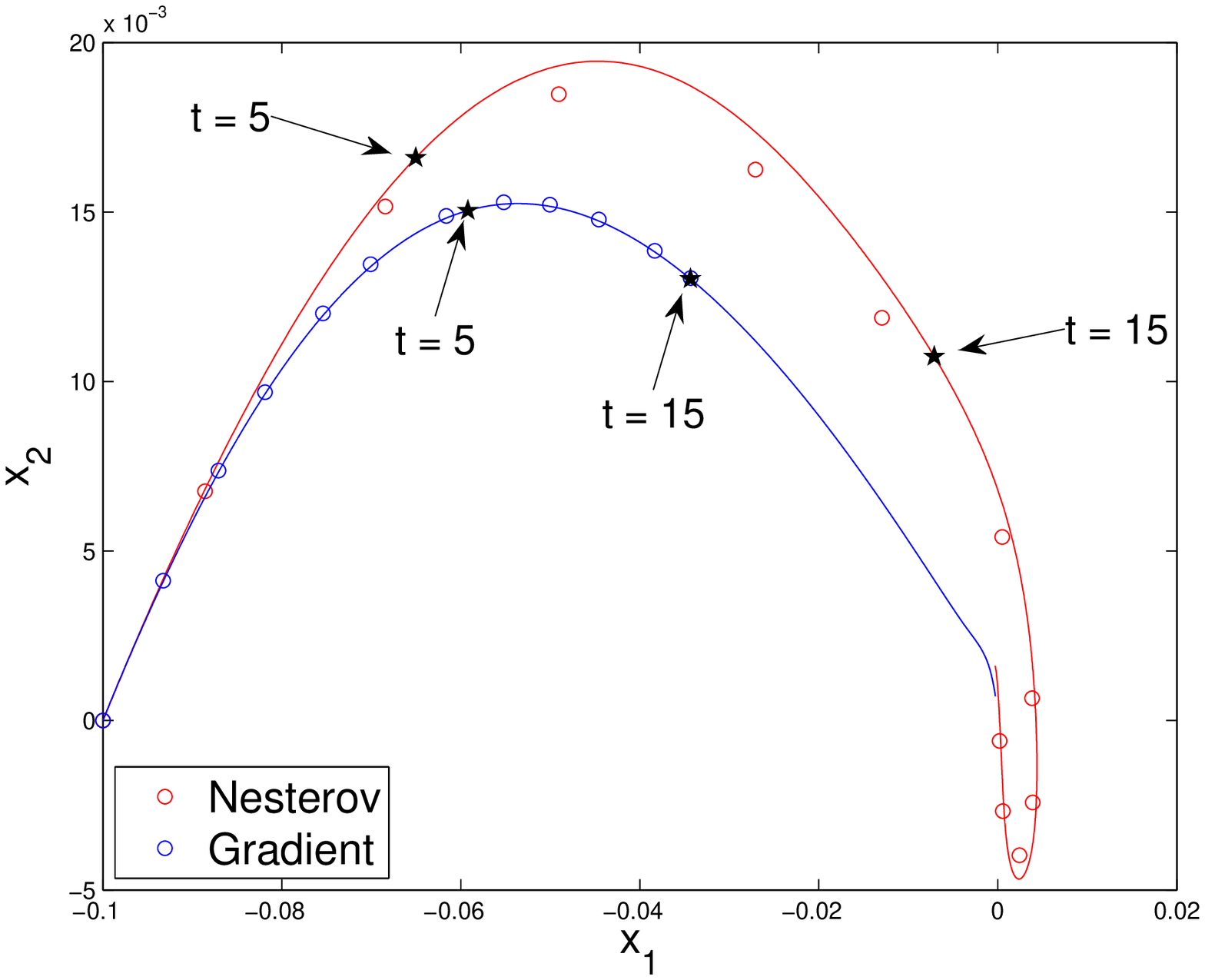}
\caption{Race between Nesterov's and gradient.}
\label{fig:comp1_time}
\end{subfigure}
\caption{In (a), the circles, crosses and triangles are $x_k$ evaluated at $k = \left\lceil 1/\sqrt{s} \right\rceil, \left\lceil 2/\sqrt{s} \right\rceil$ and $\left\lceil 3/\sqrt{s} \right\rceil$, respectively. In (b), the circles are iterations given by Nesterov's scheme or gradient descent, depending on the color, and the stars are $X(t)$ on the integral curves for $t = 5, 15$.}
\end{figure}

A second look at Figure \ref{fig:comp1_time} suggests that Nesterov's scheme allows a large deviation from its limit curve, as compared with gradient descent. This raises the question of the stable step size allowed for numerically solving the ODE \eqref{key} in the presence of accumulated errors. The finite difference approximation by the forward Euler method is
\begin{equation}\label{eq:ode_euler}
\frac{X(t+\D t) -2X(t) + X(t-\D t)}{\Delta t^2} + \frac{3}{t}\frac{X(t) - X(t-\D t)}{\D t} + \nabla f(X(t)) = 0,
\end{equation}
which is equivalent to
\begin{equation}\label{eq:like_nesterov}
X(t+\D t) = \Big{(}2 - \frac{3\D t}{t}\Big{)}X(t) - \D t^2 \nabla f(X(t)) -  \Big{(}1 - \frac{3\D t}{t}\Big{)} X(t-\D t).
\end{equation}
Assuming $f$ is sufficiently smooth, we have $\nabla f(x + \delta x) \approx \nabla f(x) + \nabla^2f(x)
\delta x$ for small perturbations $\delta x$, where $\nabla^2f(x)$ is the Hessian of $f$ evaluated at
$x$. Identifying $k = t/\D t$, the characteristic equation of this finite difference scheme is approximately
\begin{equation}\label{eq:character}
\det \left(\lambda^2 - \left( 2 - \Delta t^2 \nabla^2f -\frac{3\D t}{t}\right) \lambda + 1 - \frac{3\D t}{t} \right)= 0.
\end{equation}
The numerical stability of \eqref{eq:ode_euler} with respect to accumulated errors is equivalent to this: all the roots of \eqref{eq:character} lie in the unit circle \citep[see e.g.][]{leadernumerical}. When $\nabla^2f \preceq L I_n$ (i.e.~$LI_n - \nabla^2 f$ is positive semidefinite), if $\Delta t/t$ small and $\Delta t <
2/\sqrt{L}$, we see that all the roots of \eqref{eq:character} lie in
the unit circle. On the other hand, if $\Delta t > 2/\sqrt{L}$,
\eqref{eq:character} can possibly have a root $\lambda$ outside the
unit circle, causing numerical instability. Under our identification
$s = \D t^2$, a step size of $s = 1/L$ in Nesterov's scheme
\eqref{eq:nesterov-scheme} is approximately equivalent to a step size
of $\D t = 1/\sqrt{L}$ in the forward Euler method, which is stable
for numerically integrating \eqref{eq:ode_euler}.

As a comparison, note that the finite difference scheme of the ODE $\dot{X}(t) + \nabla f(X(t)) = 0$, which models gradient descent
with updates $x_{k+1} = x_k - s\nabla f(x_k)$, has the characteristic equation
$\det(\lambda - (1 - \Delta t \nabla^2 f)) = 0$. Thus, to guarantee
$-I_n \preceq 1 - \Delta t \nabla^2 f \preceq I_n$ in worst case
analysis, one can only choose $\Delta t \le 2/L$ for a fixed step
size, which is much smaller than the step size $2/\sqrt{L}$ for \eqref{eq:ode_euler} when $\nabla f$ is very variable, i.e., $L$ is large.

\section{The Magic Constant 3}
\label{sec:varycoeff}
Recall that the constant 3 appearing in the coefficient of $\dot X$ in \eqref{key} originates from $(k+2) - (k-1) = 3$. This number leads to the momentum coefficient in \eqref{eq:nesterov-scheme} taking the form $(k-1)/(k+2) = 1 - 3/k + O(1/k^2)$. In this section, we demonstrate that 3 can be replaced by any larger number, while maintaining the $O(1/k^2)$ convergence rate. To begin with, let us consider the following ODE parameterized by a constant $r$:
\begin{equation}\label{eq:ode_r}
\ddot{X} + \frac{r}{t}\dot{X} + \nabla f(X) = 0
\end{equation}
with initial conditions $X(0) = x_0, \dot X(0) = 0$. The proof of Theorem \ref{thm:regularity}, which seamlessly applies here, guarantees the existence and uniqueness of the solution $X$ to this ODE. 

Interpreting the damping ratio $r/t$ as a measure of
friction\footnote{In physics and engineering, damping may be modeled
  as a force proportional to velocity but opposite in direction,
  i.e.~resisting motion; for instance, this force may be used as an
  approximation to the friction caused by drag. In our model, this
  force would be proportional to $-\frac{r}{t} \dot{X}$ where
  $\dot{X}$ is velocity and $\frac{r}{t}$ is the damping coefficient.}
in the damping system, our results say that more friction does not end
the $O(1/t^2)$ and $O(1/k^2)$ convergence rate.  On the other hand, in
the lower friction setting, where $r$ is smaller than 3, we can no
longer expect inverse quadratic convergence rate, unless some
additional structures of $f$ are imposed. We believe that this
striking phase transition at 3 deserves more attention as an
interesting research challenge.

\subsection{High Friction}
\label{sec:high-friction}
Here, we study the convergence rate of \eqref{eq:ode_r} with $r > 3$ and $f \in \mathcal{F}_{\infty}$. Compared with \eqref{key}, this new ODE as a damping suffers from higher friction. Following the strategy adopted in the proof of Theorem \ref{thm:ode_rate}, we consider a new energy functional defined as
\begin{equation}\nonumber
\mathcal{E}(t) = \frac{2t^2}{r-1}(f(X(t)) - f^\star) + (r-1)\left\|X(t) + \frac{t}{r-1}\dot{X(t)} - x^\star\right\|^2.
\end{equation}
By studying the derivative of this functional, we get the following result.

\begin{theorem}\label{thm:large_r}
The solution $X$ to \eqref{eq:ode_r} satisfies
\begin{equation}\nonumber
f(X(t)) - f^\star \le \frac{(r-1)^2\|x_0 - x^\star\|^2}{2t^2}, \quad \int^{\infty}_0 t(f(X(t))-f^\star)\d t \le \frac{(r-1)^2\|x_0 - x^\star\|^2}{2(r-3)}.
\end{equation}
\end{theorem}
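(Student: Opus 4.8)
The plan is to follow exactly the energy-functional strategy used in the proof of Theorem \ref{thm:ode_rate}, but with the $r$-dependent functional
$\mathcal{E}(t) = \frac{2t^2}{r-1}(f(X(t)) - f^\star) + (r-1)\bigl\|X(t) + \frac{t}{r-1}\dot X(t) - x^\star\bigr\|^2$
already written down in the excerpt. First I would differentiate $\mathcal{E}$ in $t$. The first term contributes $\frac{4t}{r-1}(f(X)-f^\star) + \frac{2t^2}{r-1}\langle\nabla f(X),\dot X\rangle$; the second contributes $2(r-1)\langle X + \frac{t}{r-1}\dot X - x^\star,\ \frac{r}{r-1}\dot X + \frac{t}{r-1}\ddot X\rangle$. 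The key algebraic move, exactly as before, is to use the ODE \eqref{eq:ode_r} in the form $\frac{r}{r-1}\dot X + \frac{t}{r-1}\ddot X = \frac{1}{r-1}(r\dot X + t\ddot X) = -\frac{t}{r-1}\nabla f(X)$; substituting this, the inner product collapses to $-\frac{2t}{r-1}\langle X - x^\star,\nabla f(X)\rangle - \frac{2t^2}{(r-1)^2}\langle\dot X,\nabla f(X)\rangle \cdot (r-1)$, and after collecting terms the $\langle\nabla f(X),\dot X\rangle$ pieces cancel. What remains is
$\dot{\mathcal{E}} = \frac{4t}{r-1}(f(X)-f^\star) - 2t\langle X - x^\star,\nabla f(X)\rangle$.
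Now convexity gives $\langle X - x^\star,\nabla f(X)\rangle \ge f(X) - f^\star$, so $\dot{\mathcal{E}} \le \bigl(\frac{4t}{r-1} - 2t\bigr)(f(X)-f^\star) = -\frac{2t(r-3)}{r-1}(f(X)-f^\star) \le 0$ whenever $r \ge 3$. This is the point at which the constant $3$ enters: for $r \ge 3$ the coefficient $\frac{4}{r-1} \le 2$, so monotonicity holds.

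From $\mathcal{E}(t) \le \mathcal{E}(0) = (r-1)\|x_0 - x^\star\|^2$ (the first term vanishes at $t=0$ since it carries $t^2$, and $\dot X(0)=0$) and dropping the nonnegative squared-norm term, I get $\frac{2t^2}{r-1}(f(X(t))-f^\star) \le (r-1)\|x_0-x^\star\|^2$, which rearranges to the first claimed bound $f(X(t)) - f^\star \le \frac{(r-1)^2\|x_0-x^\star\|^2}{2t^2}$.

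For the integral bound I would not just use $\dot{\mathcal{E}} \le 0$ but keep the sharper inequality $\dot{\mathcal{E}} \le -\frac{2(r-3)}{r-1}\, t(f(X(t))-f^\star)$ derived above. Integrating from $0$ to $\infty$ and using $\mathcal{E} \ge 0$, $\mathcal{E}(0) = (r-1)\|x_0-x^\star\|^2$:
$\frac{2(r-3)}{r-1}\int_0^\infty t(f(X(t))-f^\star)\,\d t \le \mathcal{E}(0) - \lim_{t\to\infty}\mathcal{E}(t) \le \mathcal{E}(0) = (r-1)\|x_0-x^\star\|^2$,
which gives $\int_0^\infty t(f(X(t))-f^\star)\,\d t \le \frac{(r-1)^2\|x_0-x^\star\|^2}{2(r-3)}$, the second claim (this is where one needs $r > 3$ strictly, so the denominator is positive).

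The only genuinely delicate point — everything else is the bookkeeping of differentiating a quadratic and invoking convexity — is justifying the improper integral step: one must know $t(f(X(t))-f^\star)$ is integrable and that $\mathcal{E}(t)$ has a limit (or at least a finite liminf) as $t \to \infty$. Since $\mathcal{E}$ is nonincreasing and bounded below by $0$, $\lim_{t\to\infty}\mathcal{E}(t)$ exists and is finite, so $\int_0^\infty (-\dot{\mathcal{E}})\,\d t = \mathcal{E}(0) - \lim_{t\to\infty}\mathcal{E}(t) < \infty$; combined with the pointwise lower bound $-\dot{\mathcal{E}} \ge \frac{2(r-3)}{r-1} t(f(X(t))-f^\star) \ge 0$ and monotone convergence, the integral of $t(f(X(t))-f^\star)$ is finite and bounded as claimed. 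I expect this monotonicity-plus-boundedness argument for the tail to be the main obstacle to state cleanly; the differentiation and the convexity inequality are routine copies of the Theorem \ref{thm:ode_rate} argument.
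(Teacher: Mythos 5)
Your proposal is correct and follows essentially the same route as the paper: the same energy functional, the same substitution $r\dot X + t\ddot X = -t\nabla f(X)$ yielding $\dot{\mathcal{E}} \le -\tfrac{2(r-3)t}{r-1}(f(X)-f^\star)$, the same use of $\mathcal{E}(t)\le\mathcal{E}(0)$ for the first bound, and the same integration of the sharper derivative inequality for the second. Your extra remark justifying the improper integral via monotonicity and boundedness of $\mathcal{E}$ is a point the paper passes over silently, but it is not a difference in approach.
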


\begin{proof}
Noting $r\dot{X} + t\ddot{X} = -t\nabla f(X)$, we get $\dot{\mathcal{E}}$ equal to
\begin{multline}\label{eq:r_e_der}
\frac{4t}{r-1}(f(X) - f^\star) + \frac{2t^2}{r-1}\langle \nabla f, \dot{X} \rangle 
+ 2\langle X + \frac{t}{r-1}\dot{X} - x^\star, r\dot{X} + t\ddot{X} \rangle\\
= \frac{4t}{r-1}(f(X) - f^\star) - 2t\langle X - x^\star, \nabla f(X) \rangle 
\le -\frac{2(r-3)t}{r-1}(f(X) - f^\star),
\end{multline}
where the inequality follows from the convexity of $f$. Since $f(X)\geq f^\star$, the last display implies that $\mathcal{E}$ is non-increasing. Hence
\begin{equation}\nonumber
\frac{2t^2}{r-1}(f(X(t)) - f^\star) \le \mathcal{E}(t) \le \mathcal{E}(0) = (r-1)\|x_0 - x^\star\|^2,
\end{equation}
yielding the first inequality of this theorem. To complete the proof, from \eqref{eq:r_e_der} it follows that
\begin{equation}\nonumber
\int^{\infty}_0\frac{2(r-3)t}{r-1}(f(X) - f^\star)\d t \le -\int^{\infty}_0 \frac{\d\mathcal{E}}{\d t}\d t = \mathcal{E}(0) - \mathcal{E}(\infty) \le  (r-1)\|x_0 - x^\star\|^2,
\end{equation}
as desired for establishing the second inequality.
\end{proof}
The first inequality is the same as \eqref{eq:t_2_rate} for the ODE \eqref{key}, except for a larger constant $(r-1)^2/2$. The second inequality measures the error $f(X(t)) - f^\star$ in an average sense, and cannot be deduced from the first inequality.

Now, it is tempting to obtain such analogs for the discrete Nesterov's scheme as well. Following the formulation of \cite{BeckTeboulle}, we wish to minimize $f$ in the composite form $f(x) = g(x) + h(x)$, where $g\in\mathcal{F}_L$ for some $L>0$ and $h$ is convex on $\R^n$ possibly assuming extended value $\infty$. Define the proximal subgradient
\begin{equation}\nonumber
G_s(x) \triangleq \frac{x - \mbox{argmin}_{z}\left( \|z - (x-s\nabla g(x))\|^2/(2s) + h(z) \right) }{s}.
\end{equation}
Parametrizing by a constant $r$, we propose the generalized Nesterov's scheme,
\begin{gather}\label{eq:nesterov_general}
\begin{aligned}
&x_k = y_{k-1} -  sG_s(y_{k-1})\\
&y_k = x_k + \frac{k-1}{k+r-1}(x_k -x_{k-1}),
\end{aligned}
\end{gather}
starting from $y_0 = x_0$. The discrete analog of Theorem \ref{thm:large_r} is below.
\begin{theorem}\label{thm:large_r_dis}
The sequence $\{x_k\}$ given by \eqref{eq:nesterov_general} with $0 < s \le 1/L$ satisfies
\begin{equation}\nonumber
f(x_k) - f^\star \le \frac{(r-1)^2\|x_0 - x^\star\|^2}{2s(k+r-2)^2}, \quad \sum_{k=1}^{\infty}(k+r-1)(f(x_k)-f^\star) \le \frac{(r-1)^2\|x_0-x^\star\|^2}{2s(r-3)}.
\end{equation}
\end{theorem}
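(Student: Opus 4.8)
The plan is to mimic, in the discrete setting, the energy-functional argument used in the proof of Theorem \ref{thm:large_r}, replacing time derivatives by finite differences and the continuous energy $\mathcal{E}(t)$ by a discrete Lyapunov sequence $\mathcal{E}_k$. A natural guess, suggested by the identification $t \approx k\sqrt{s}$ and $X + \tfrac{t}{r-1}\dot X \approx x_k + \tfrac{k}{r-1}(x_k - x_{k-1})$, is
\[
\mathcal{E}_k = \frac{2s(k+r-2)^2}{r-1}\bigl(f(x_k) - f^\star\bigr) + (r-1)\Bigl\| \frac{k-1}{r-1}(x_k - x_{k-1}) + x_k - x^\star \Bigr\|^2,
\]
possibly with the index shifts in the coefficients tuned so that the telescoping works cleanly. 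First I would record the two basic inequalities governing one step of the generalized scheme \eqref{eq:nesterov_general}: the standard proximal-gradient progress bound, which for $0 < s \le 1/L$ gives, for any $z$,
\[
f(z) \ge f(y_{k-1} - sG_s(y_{k-1})) + \langle G_s(y_{k-1}), z - y_{k-1}\rangle + \frac{s}{2}\|G_s(y_{k-1})\|^2,
\]
applied once with $z = x_{k-1}$ and once with $z = x^\star$, exactly as in Beck--Teboulle. These play the role that convexity of $f$ and the ODE identity $r\dot X + t\ddot X = -t\nabla f(X)$ played in the continuous proof.

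Next I would combine these two instances with the weights dictated by $\mathcal{E}_k$ to show $\mathcal{E}_k - \mathcal{E}_{k-1} \le -c\,(k+r-1)\bigl(f(x_k) - f^\star\bigr)$ for an explicit constant $c = 2(r-3)/(r-1)$ times $s$, the discrete shadow of the inequality $\dot{\mathcal{E}} \le -\tfrac{2(r-3)t}{r-1}(f(X)-f^\star)$ in \eqref{eq:r_e_der}. The algebra here is the crux: one expands $\|\tfrac{k-1}{r-1}(x_k-x_{k-1}) + x_k - x^\star\|^2 - \|\tfrac{k-2}{r-1}(x_{k-1}-x_{k-2}) + x_{k-1} - x^\star\|^2$, substitutes $x_k - y_{k-1} = -sG_s(y_{k-1})$ and $y_{k-1} - x_{k-1} = \tfrac{k-2}{k+r-2}(x_{k-1}-x_{k-2})$, and uses the two progress inequalities to control the cross terms; the coefficient $(r-1)^2$ in the statement and the need for $r \ge 3$ will emerge from requiring the leftover quadratic-in-$G_s$ terms to have the right sign, which is precisely where $s \le 1/L$ is consumed. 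Once the one-step drop is in hand, both conclusions follow by telescoping exactly as in the continuous case: summing from $1$ to $k$ and using $\mathcal{E}_k \ge \tfrac{2s(k+r-2)^2}{r-1}(f(x_k)-f^\star) \ge 0$ together with $\mathcal{E}_0 = (r-1)\|x_0 - x^\star\|^2$ gives the first inequality, and summing the drop to infinity and using $\mathcal{E}_k \ge 0$ gives $\sum (k+r-1)(f(x_k)-f^\star) \le \tfrac{(r-1)^2\|x_0-x^\star\|^2}{2s(r-3)}$.

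The main obstacle I anticipate is pinning down the exact index shifts in the coefficients of $\mathcal{E}_k$ (whether it should be $(k+r-2)^2$, $(k+r-1)^2$, or a product of consecutive such terms, and likewise $k-1$ versus $k-2$ in the momentum weight) so that the discrete difference telescopes without an uncontrolled residual — the continuous proof hides this because $\tfrac{\d}{\d t} t^2$ is exactly $2t$, whereas $(k+r-2)^2 - (k+r-3)^2 = 2k + 2r - 5 \ne 2(k+r-1)$, so one must absorb the mismatch by carefully choosing which $f(x_j)$ and which momentum vector appear in each term. A secondary technical point is handling the nonsmooth part $h$ and the proximal subgradient $G_s$ in place of $\nabla g$; this is routine given the Beck--Teboulle machinery but requires stating the progress inequality in its proximal form rather than the plain descent lemma. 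I do not expect the passage to $k \to \infty$ for the second bound to cause difficulty, since all terms are nonnegative and $\mathcal{E}_k$ is monotone.
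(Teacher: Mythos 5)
Your proposal follows essentially the same route as the paper: your guessed Lyapunov sequence is identical to the paper's $\mathcal{E}(k)$, since $z_k = (k+r-1)y_k/(r-1) - kx_k/(r-1) = x_k + \frac{k-1}{r-1}(x_k-x_{k-1})$, and the paper likewise combines the proximal progress inequality at $x_{k-1}$ and $x^\star$ (with weights $\frac{k-1}{k+r-2}$ and $\frac{r-1}{k+r-2}$), exploits the identity $z_k = z_{k-1} - \frac{s(k+r-2)}{r-1}G_s(y_{k-1})$ to telescope the quadratic term, and obtains the one-step drop $\mathcal{E}(k) + \frac{2s[(r-3)(k+r-2)+1]}{r-1}(f(x_{k-1})-f^\star) \le \mathcal{E}(k-1)$ before summing. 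The index bookkeeping you flag as the main obstacle resolves exactly as you anticipate, so the proposal is correct and matches the paper's argument.
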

The first inequality suggests that the generalized Nesterov's schemes still achieve $O(1/k^2)$ convergence rate. However, if the error bound satisfies $f(x_{k'}) - f^\star \geq c/k'^2$ for some arbitrarily small $c > 0$ and a dense subsequence $\{k'\}$, i.e., $|\{k'\}\cap \{1, \ldots, m\}| \geq \alpha m$ for all $m \ge 1$ and some $\alpha > 0$, then the second inequality of the theorem would be violated. To see this, note that if it were the case, we would have $(k'+r-1)(f(x_{k'})-f^\star) \gtrsim \frac1{k'}$; the sum of the harmonic series $\frac1{k'}$ over a dense subset of $\{1, 2, \ldots\}$ is infinite. Hence, the second inequality is not trivial because it implies the error bound is, in some sense, $O(1/k^2)$ suboptimal. 

Now we turn to the proof of this theorem. It is worth pointing out that, though based on the same idea, the proof below is much more complicated than that of Theorem \ref{thm:large_r}. 
\begin{proof}
Consider the discrete energy functional,
\begin{equation}\nonumber
\mathcal{E}(k) = \frac{2(k+r-2)^2s}{r-1}(f(x_k)-f^\star) + (r-1)\|z_k - x^\star\|^2,
\end{equation}
where $z_k = (k+r-1)y_k/(r-1) - kx_k/(r-1)$.
If we have
\begin{equation}\label{eq:recursive}
\mathcal{E}(k) + \frac{2s[(r-3)(k+r-2)+1]}{r-1}(f(x_{k-1})-f^\star) \le \mathcal{E}(k-1),
\end{equation}
then it would immediately yield the desired results by summing \eqref{eq:recursive} over $k$. That is, by recursively applying \eqref{eq:recursive}, we see
\begin{multline}\nonumber
\mathcal{E}(k) + \sum_{i=1}^k\frac{2s[(r-3)(i+r-2)+1]}{r-1}(f(x_{i-1})-f^\star) \\
\le \mathcal{E}(0) = \frac{2(r-2)^2s}{r-1}(f(x_0)-f^\star) + (r-1)\|x_0 - x^\star\|^2, 
\end{multline}
which is equivalent to
\begin{equation}\label{eq:energy_bound}
\mathcal{E}(k) + \sum_{i=1}^{k-1}\frac{2s[(r-3)(i+r-1)+1]}{r-1}(f(x_{i})-f^\star) \le (r-1)\|x_0 - x^\star\|^2.
\end{equation}
Noting that the left-hand side of \eqref{eq:energy_bound} is lower bounded by $2s(k+r-2)^2(f(x_k)-f^\star)/(r-1)$, we thus obtain the first inequality of the theorem. Since $\mathcal{E}(k)\geq 0$, the second inequality is verified via taking the limit $k\goto\infty$ in \eqref{eq:energy_bound} and replacing $(r-3)(i+r-1)+1$ by $(r-3)(i+r-1)$.

We now establish \eqref{eq:recursive}. For $s\le 1/L$, we have the basic inequality,
\begin{equation}\label{eq:prox_ineq}
f(y-sG_s(y)) \le f(x) + G_s(y)^T(y-x) - \frac{s}{2}\|G_s(y)\|^2,
\end{equation}
for any $x$ and $y$. Note that $y_{k-1}-sG_s(y_{k-1})$ actually coincides with $x_k$. Summing of $(k-1)/(k+r-2)\times\eqref{eq:prox_ineq}$ with $x=x_{k-1}, y = y_{k-1}$ and $(r-1)/(k+r-2)\times\eqref{eq:prox_ineq}$ with $x=x^\star, y=y_{k-1}$ gives
\begin{equation}\nonumber
\begin{aligned}
f(x_k) &\le \frac{k-1}{k+r-2}f(x_{k-1}) + \frac{r-1}{k+r-2}f^\star\\
& + \frac{r-1}{k+r-2}G_s(y_{k-1})^T\Big{(}\frac{k+r-2}{r-1}y_{k-1}-\frac{k-1}{r-1}x_{k-1} - x^\star\Big{)} - \frac{s}{2}\|G_s(y_{k-1})\|^2\\
&= \frac{k-1}{k+r-2}f(x_{k-1}) + \frac{r-1}{k+r-2}f^\star + \frac{(r-1)^2}{2s(k+r-2)^2}\Big{(}\|z_{k-1}-x^\star\|^2 - \|z_{k}-x^\star\|^2\Big{)}, \\
\end{aligned}
\end{equation}
where we use $z_{k-1}-s(k+r-2)G_s(y_{k-1})/(r-1)=z_{k}$. Rearranging the above inequality and multiplying by $2s(k+r-2)^2/(r-1)$ gives the desired \eqref{eq:recursive}.

\end{proof}

In closing, we would like to point out this new scheme is equivalent to setting $\theta_k = (r-1)/(k+r-1)$ and letting $\theta_k(\theta_{k-1}^{-1} - 1)$ replace the momentum coefficient $(k-1)/(k+r-1)$. Then, the equal sign $``="$ in the update $\theta_{k+1} = (\sqrt{\theta_k^4 + 4\theta_k^2} - \theta_k^2 )/2$ has to be replaced by an inequality sign $``\ge"$. In examining the proof of Theorem 1(b) in \cite{tsengapprox}, we can get an alternative proof of Theorem \ref{thm:large_r_dis}.

\subsection{Low Friction}
\label{sec:low-friction}
Now we turn to the case $r < 3$. Then, unfortunately, the energy functional approach for proving Theorem \ref{thm:large_r} is no longer valid, since the left-hand side of \eqref{eq:r_e_der} is positive in general. In fact, there are counterexamples that fail the desired $O(1/t^2)$ or $O(1/k^2)$ convergence rate. We present such examples in continuous time. Equally, these examples would also violate the $O(1/k^2)$ convergence rate in the discrete schemes, and we forego the details.

Let $f(x) = \frac12\|x\|^2$ and $X$ be the solution to \eqref{eq:ode_r}. Then, $Y = t^{\frac{r-1}{2}}X$ satisfies
\begin{equation}\nonumber
t^2\ddot Y + t\dot Y + (t^2 - (r-1)^2/4)Y = 0.
\end{equation}
With the initial condition $Y(t) \approx t^{\frac{r-1}{2}}x_0$ for small $t$, the solution to the above Bessel equation in a vector form of order $(r-1)/2$ is $Y(t) = 2^{\frac{r-1}{2}}\Gamma((r+1)/2)J_{(r-1)/2}(t)x_0$. Thus,
\begin{equation}\nonumber
X(t) = \frac{2^{\frac{r-1}{2}}\Gamma((r+1)/2)J_{(r-1)/2}(t)}{t^{\frac{r-1}{2}}}x_0.
\end{equation}
For large $t$, the Bessel function $J_{(r-1)/2}(t) = \sqrt{2/(\pi t)}\big( \cos(t-(r-1)\pi/4-\pi/4) + O(1/t) \big)$. Hence,
\begin{equation}\nonumber
f(X(t)) - f^\star = O\left( \|x_0 - x^\star\|^2/t^r \right),
\end{equation}
where the exponent $r$ is tight. This rules out the possibility of inverse quadratic convergence of the generalized ODE and scheme for all $f \in \mathcal{F}_L$ if $r < 2$. An example with $r = 1$ is plotted in Figure \ref{fig:small_r_fail}.

Next, we consider the case $2 \le r < 3$ and let $f(x) = |x|$ (this
also applies to multivariate $f = \|x\|$).\footnote{This function does
  not have a Lipschitz continuous gradient. However, a similar pattern
  as in Figure \ref{fig:small_r_fail} can be also observed if we
  smooth $|x|$ at an arbitrarily small vicinity of 0.} Starting from
$x_0 > 0$, we get $X(t) = x_0 - \frac{t^2}{2(1+r)}$ for $t \le
\sqrt{2(1+r)x_0}$. Requiring continuity of $X$ and $\dot X$ at the
change point 0, we get
\[
X(t) = \frac{t^2}{2(1+r)} + \frac{2(2(1+r)x_0)^{\frac{r+1}{2}}}{(r^2-1)t^{r-1}} - \frac{r+3}{r-1}x_0
\]
for $\sqrt{2(1+r)x_0} < t \le \sqrt{2c^\star(1+r)x_0}$, where $c^\star$ is the positive root other than 1 of $(r-1)c + 4c^{-\frac{r-1}{2}} = r+3$. Repeating this process solves for $X$. Note that $t^{1-r}$ is in the null space of $\ddot X + r\dot X/t$ and satisfies $t^2 \times t^{1-r} \goto \infty$ as $t \goto \infty$. For illustration, Figure \ref{fig:small_r_abs_x} plots $t^2(f(X(t)) - f^\star)$ and $sk^2(f(x_k) - f^\star)$ with $r = 2, 2.5$, and $r = 4$ for comparison\footnote{For Figures \ref{fig:r_2b}, \ref{fig:r_25b} and \ref{fig:r_4b}, if running generalized Nesterov's schemes with too many iterations (e.g.~$10^5$), the deviations from the ODE will grow. Taking a sufficiently small $s$ can solve this issue.}. It is clearly that inverse quadratic convergence does not hold for $r = 2, 2.5$, that is, \eqref{eq:sk_quad_err} does not hold for $r < 3$. Interestingly, in Figures \ref{fig:r_2} and \ref{fig:r_2b}, the scaled errors at peaks grow linearly, whereas for $r = 2.5$, the growth rate, though positive as well, seems sublinear.

\begin{figure}[!htp]
\centering
\begin{subfigure}[b]{0.32\textwidth}
\includegraphics[width = \textwidth, height=1.2in]{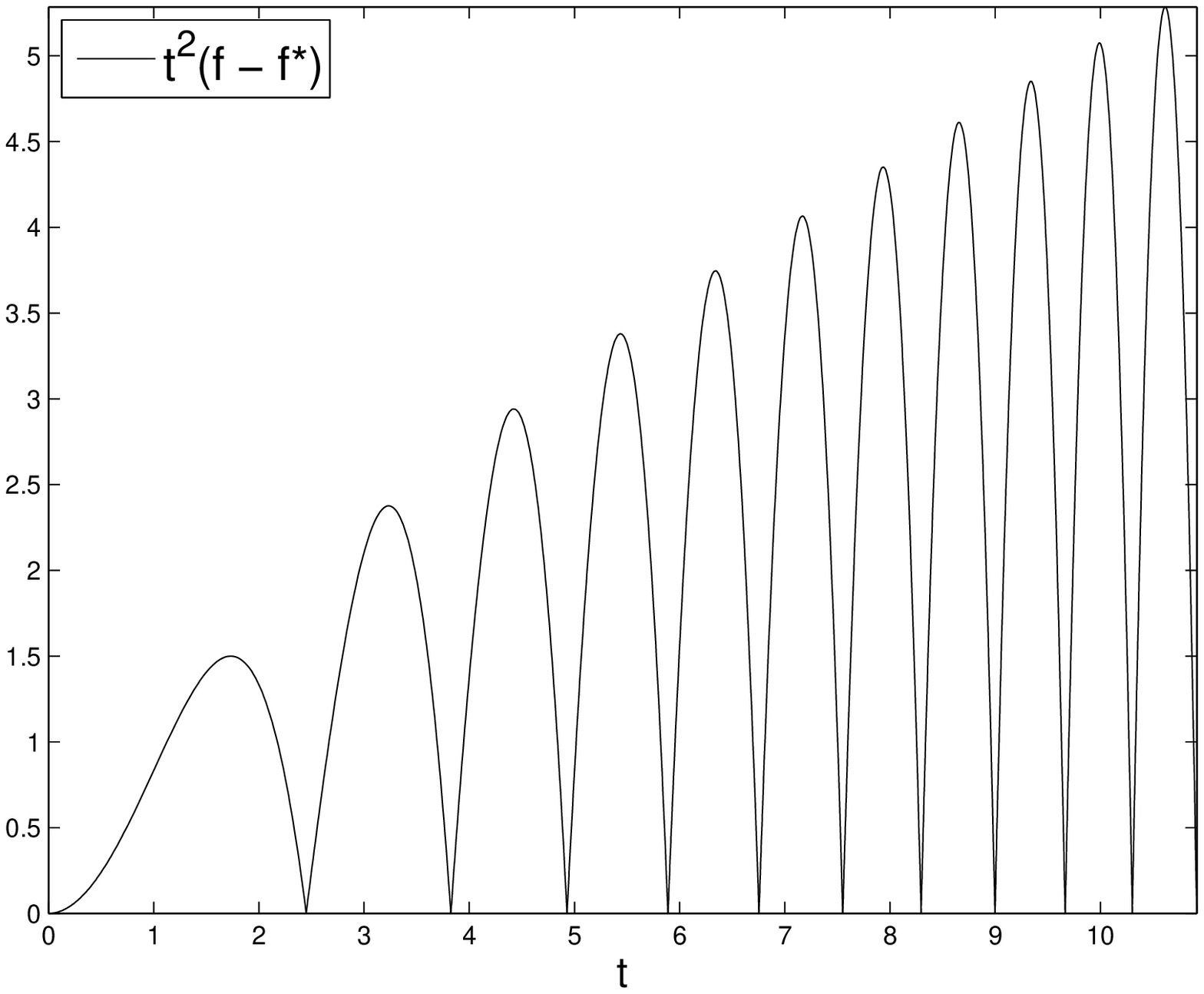}
\caption{ODE \eqref{eq:ode_r} with $r = 2$.}
\label{fig:r_2}
\end{subfigure}
\hfill
\begin{subfigure}[b]{0.32\textwidth}
\includegraphics[width = \textwidth, height=1.2in]{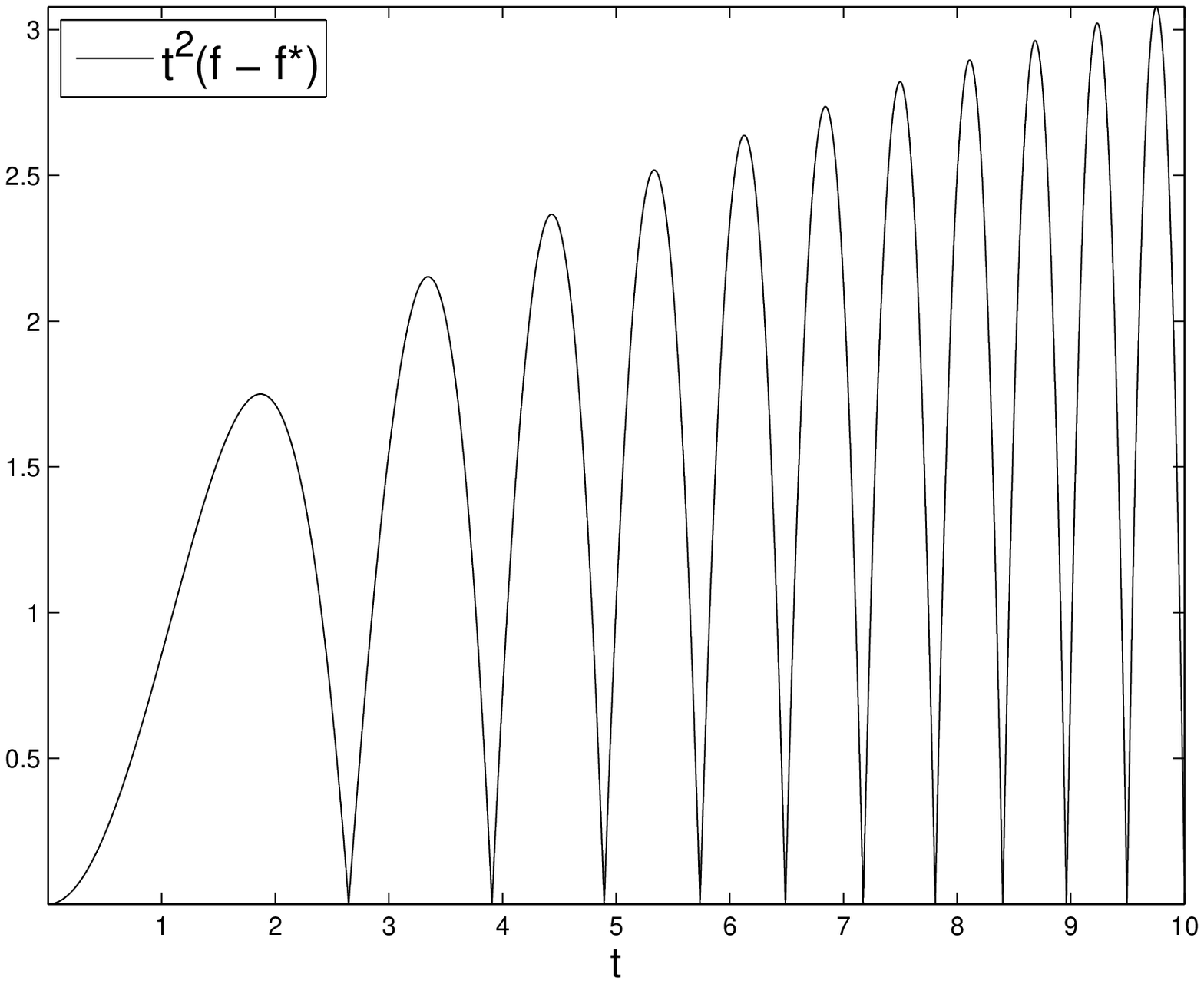}
\caption{ODE \eqref{eq:ode_r} with $r = 2.5$.}
\label{fig:r_25}
\end{subfigure}
\hfill
\begin{subfigure}[b]{0.32\textwidth}
\includegraphics[width = \textwidth, height=1.2in]{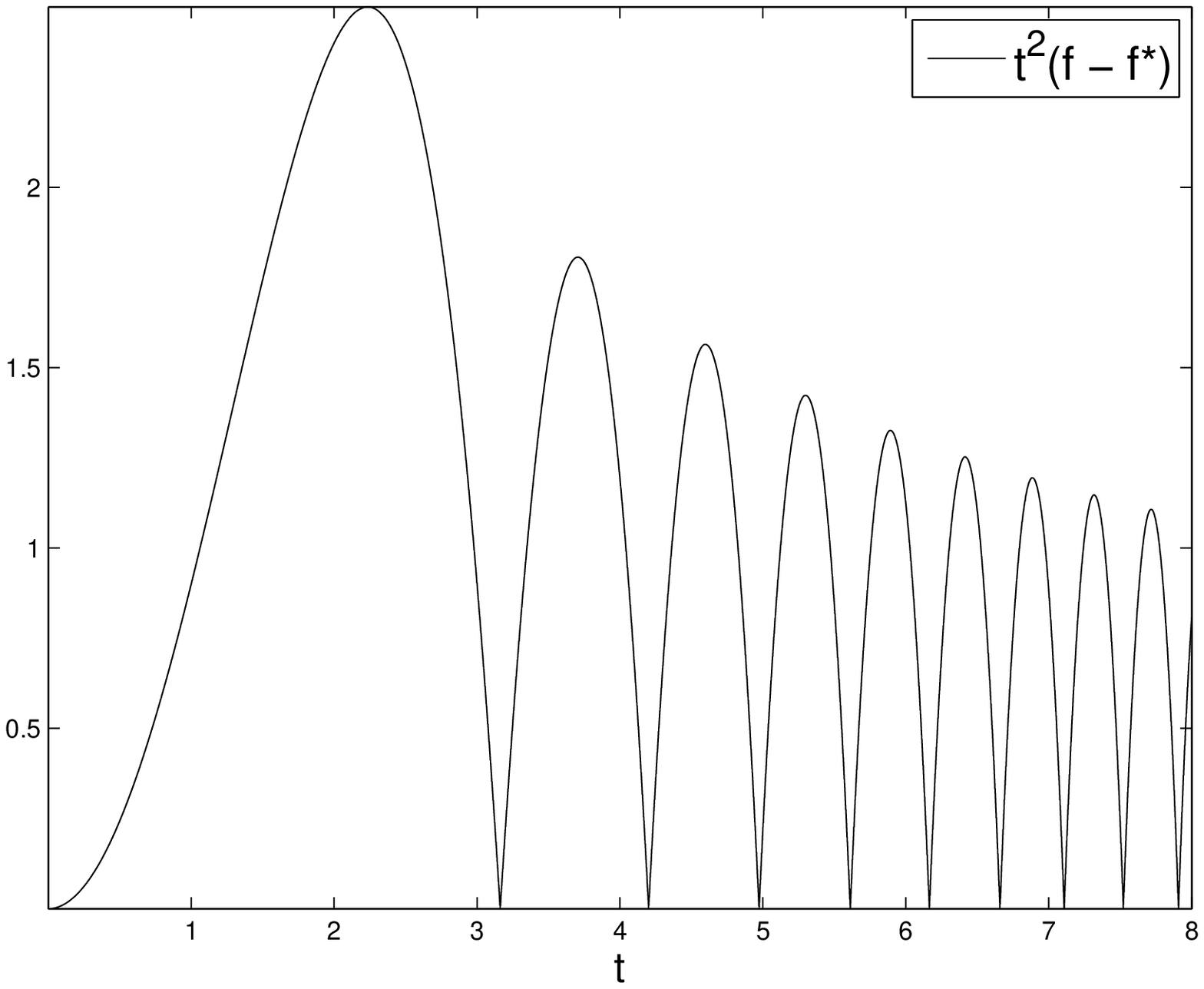}
\caption{ODE \eqref{eq:ode_r} with $r = 4$.}
\label{fig:r_4}
\end{subfigure}
\begin{subfigure}[b]{0.32\textwidth}
\includegraphics[width = \textwidth, height=1.2in]{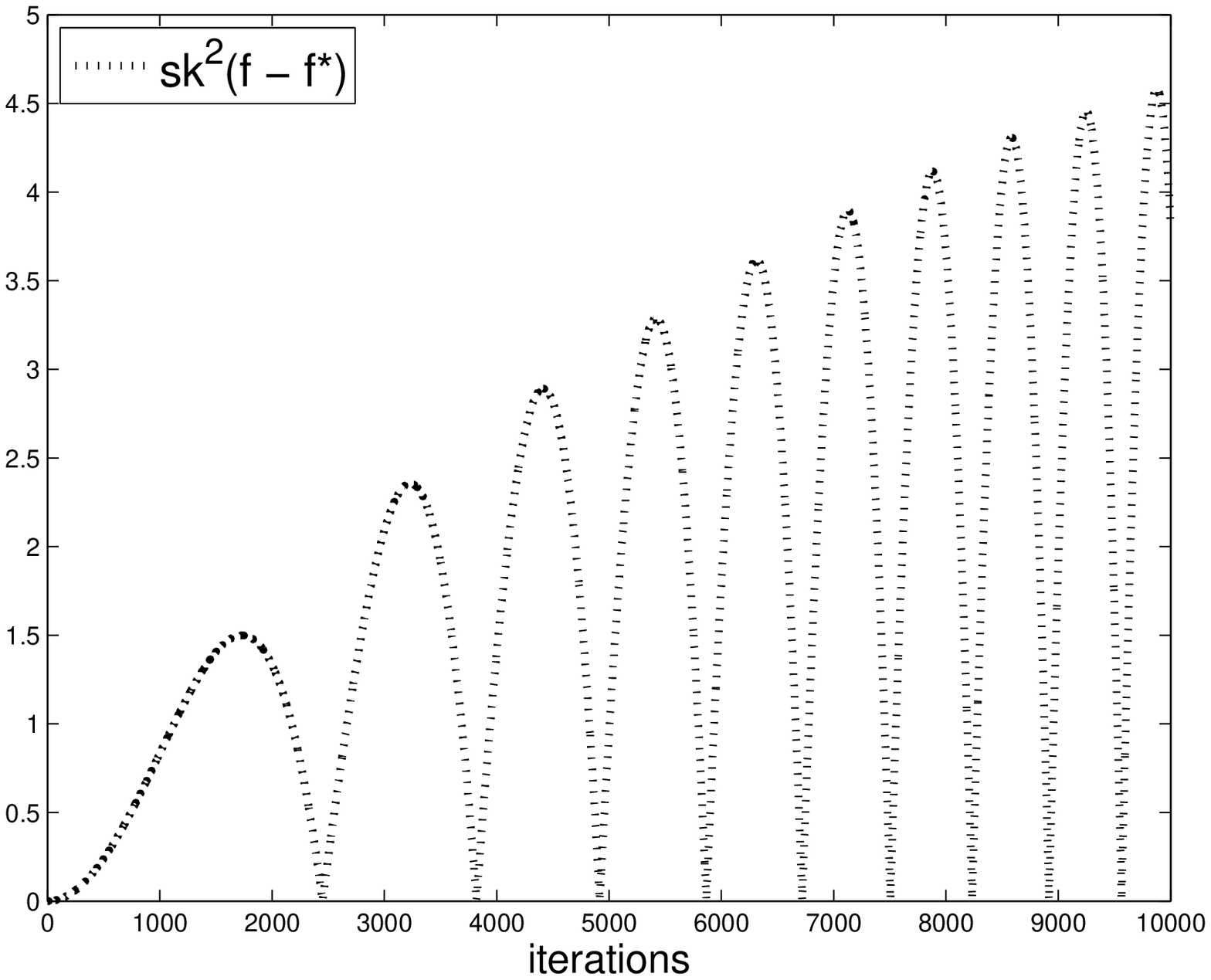}
\caption{Scheme \eqref{eq:nesterov_general} with $r = 2$.}
\label{fig:r_2b}
\end{subfigure}
\hfill
\begin{subfigure}[b]{0.32\textwidth}
\includegraphics[width = \textwidth, height=1.2in]{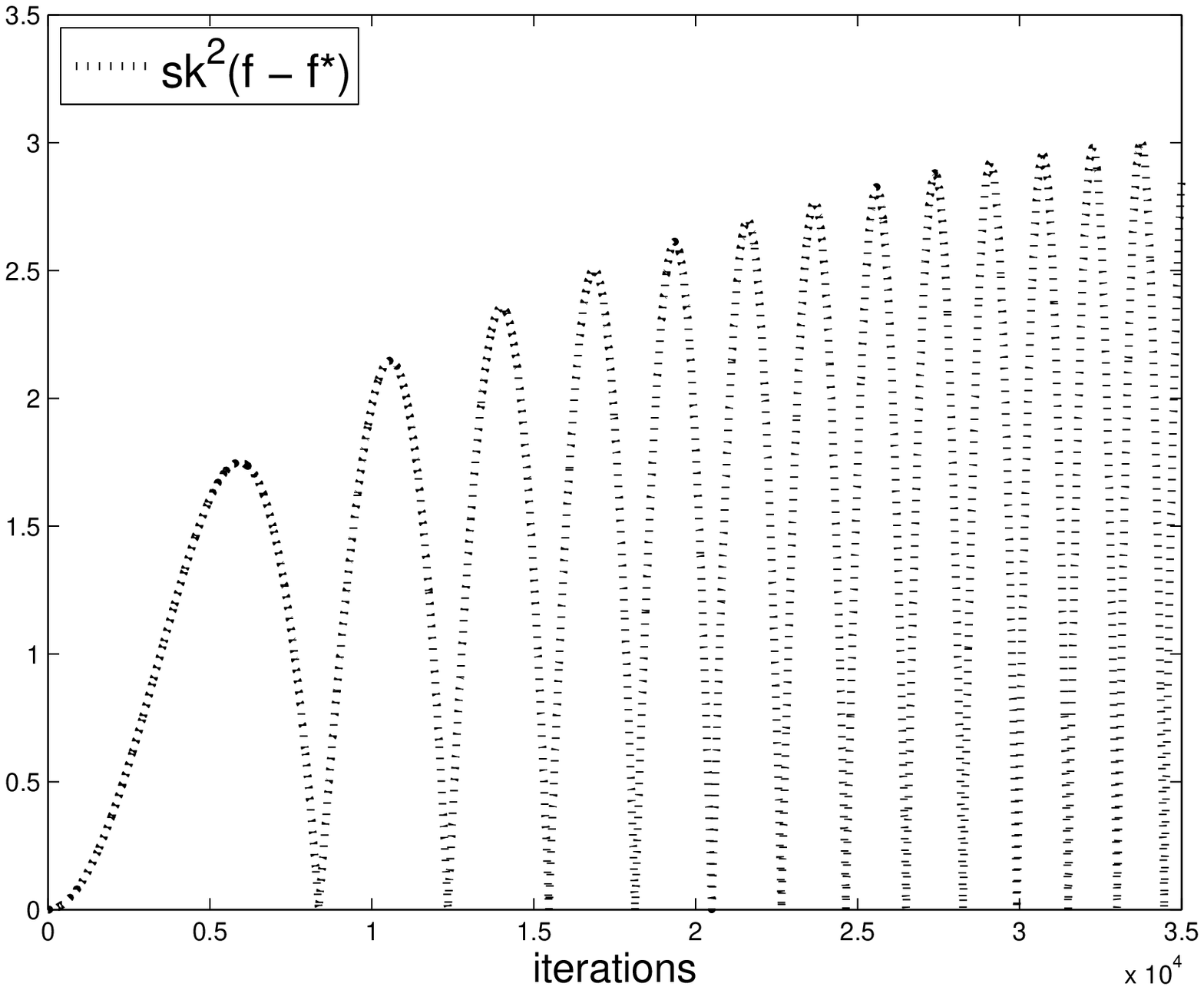}
\caption{Scheme \eqref{eq:nesterov_general} with $r = 2.5$.}
\label{fig:r_25b}
\end{subfigure}
\hfill
\begin{subfigure}[b]{0.32\textwidth}
\includegraphics[width = \textwidth, height=1.2in]{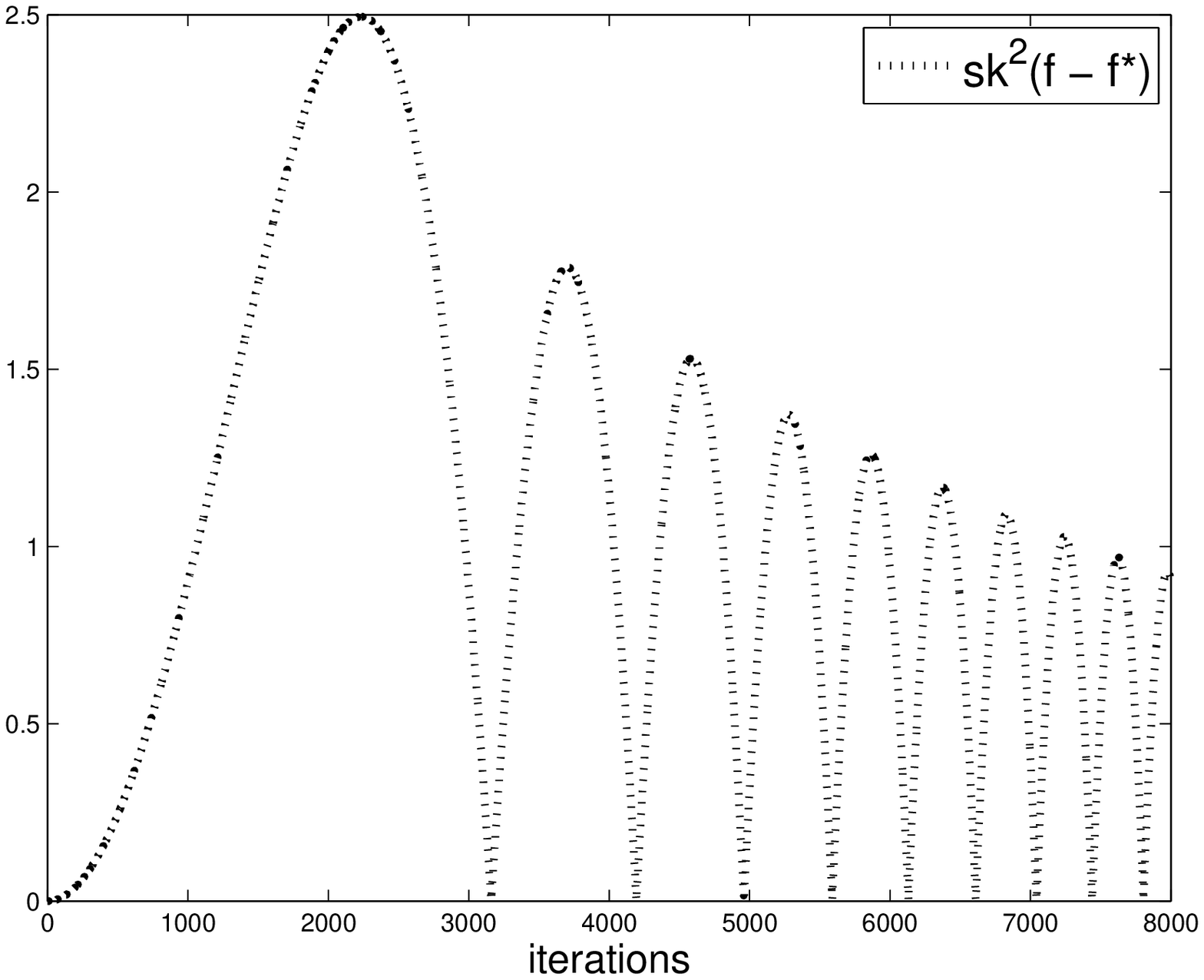}
\caption{Scheme \eqref{eq:nesterov_general} with $r = 4$.}
\label{fig:r_4b}
\end{subfigure}
\caption{Scaled errors $t^2(f(X(t))-f^\star)$ and $sk^2(f(x_k)-f^\star)$ of generalized ODEs and schemes for minimizing $f = |x|$. In (d), the step size $s = 10^{-6}$, in (e), $s = 10^{-7}$, and in (f), $s = 10^{-6}$.}
\label{fig:small_r_abs_x}
\end{figure}

However, if $f$ possesses some additional property, inverse quadratic convergence is still guaranteed, as stated below. In that theorem, $f$ is assumed to be a continuously differentiable convex function.

\begin{theorem}\label{thm:small_r}
Suppose $1 < r < 3$ and let $X$ be a solution to the ODE \eqref{eq:ode_r}. If $(f - f^\star)^{\frac{r-1}{2}}$ is also convex, then
\begin{equation}\nonumber
f(X(t)) - f^\star \le \frac{(r-1)^2\|x_0 - x^\star\|^2}{2t^2}.
\end{equation}
\end{theorem}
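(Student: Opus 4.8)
The plan is to re-use the energy-functional machinery from the proof of Theorem~\ref{thm:large_r}, keeping exactly the same functional
\[
\mathcal{E}(t) = \frac{2t^2}{r-1}\bigl(f(X(t)) - f^\star\bigr) + (r-1)\Bigl\|X(t) + \frac{t}{r-1}\dot{X}(t) - x^\star\Bigr\|^2 .
\]
The computation of $\dot{\mathcal{E}}$ is identical to the one carried out there, since it only uses the ODE identity $r\dot{X} + t\ddot{X} = -t\nabla f(X)$ and not the size of $r$; it gives
\[
\dot{\mathcal{E}}(t) = \frac{4t}{r-1}\bigl(f(X) - f^\star\bigr) - 2t\,\langle X - x^\star,\ \nabla f(X)\rangle .
\]
For $r < 3$ the plain convexity bound $\langle X - x^\star, \nabla f(X)\rangle \ge f(X) - f^\star$ no longer forces $\dot{\mathcal{E}}\le 0$, so the hypothesis that $(f-f^\star)^{(r-1)/2}$ is convex must be the ingredient that closes the gap.

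Accordingly, the key step I would carry out first is to translate that ``higher-order'' convexity into the sharpened first-order inequality
\[
\langle x - x^\star,\ \nabla f(x)\rangle \ \ge\ \frac{2}{r-1}\bigl(f(x) - f^\star\bigr)\qquad\text{for all } x .
\]
At a point $x$ with $f(x) > f^\star$, since $f\in C^1$ the chain rule gives $\nabla g(x) = \tfrac{r-1}{2}(f(x)-f^\star)^{(r-3)/2}\,\nabla f(x)$ for $g := (f-f^\star)^{(r-1)/2}$, and convexity of $g$ with $g(x^\star)=0$ yields $\langle \nabla g(x), x - x^\star\rangle \ge g(x) = (f(x)-f^\star)^{(r-1)/2}$; multiplying through by the positive factor $\tfrac{2}{r-1}(f(x)-f^\star)^{(3-r)/2}$ gives the displayed inequality. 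At a point with $f(x)=f^\star$ the point is a global minimizer, so $\nabla f(x)=0$ and both sides vanish. (For $r=3$ this is just ordinary convexity, matching Theorem~\ref{thm:ode_rate}.)

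Substituting this bound into the formula for $\dot{\mathcal{E}}$ gives $\dot{\mathcal{E}} \le \tfrac{4t}{r-1}(f-f^\star) - \tfrac{4t}{r-1}(f-f^\star) = 0$, so $\mathcal{E}$ is non-increasing on $(0,\infty)$. Since the solution of \eqref{eq:ode_r} lies in $C^1([0,\infty))$ with $\dot{X}(0)=0$ (the regularity argument behind Theorem~\ref{thm:regularity} applies verbatim to \eqref{eq:ode_r}), letting $t\to 0$ shows $\mathcal{E}(0) = (r-1)\|x_0 - x^\star\|^2$; dropping the nonnegative second term of $\mathcal{E}$ then gives $\tfrac{2t^2}{r-1}(f(X(t))-f^\star) \le \mathcal{E}(t) \le \mathcal{E}(0)$, which rearranges to the claimed bound. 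The only delicate point I anticipate is the differentiability bookkeeping for $g=(f-f^\star)^{(r-1)/2}$ on the minimizing set (where the exponent $(r-1)/2\in(0,1)$ could spoil smoothness), but this is dispatched by the trivial case split above; the substantive content of the proof is simply the passage from convexity of $(f-f^\star)^{(r-1)/2}$ to the sharpened gradient inequality with constant $2/(r-1)$.
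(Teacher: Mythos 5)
Your proposal is correct and follows essentially the same route as the paper: the same energy functional from Theorem~\ref{thm:large_r}, the same derivation of the sharpened inequality $\langle X - x^\star, \nabla f(X)\rangle \ge \tfrac{2}{r-1}(f(X) - f^\star)$ from convexity of $(f-f^\star)^{(r-1)/2}$, and the same monotonicity conclusion. Your extra care at points where $f(x)=f^\star$ is a minor tidying of a step the paper leaves implicit, not a different argument.
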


\begin{proof}
Since $(f - f^\star)^{\frac{r-1}{2}}$ is convex, we obtain
\[
\left( f(X(t)) - f^\star \right)^{\frac{r-1}{2}} \le \langle X - x^\star, \nabla(f(X) - f^\star)^{\frac{r-1}{2}} \rangle = \frac{r-1}{2}(f(X) - f^\star)^{\frac{r-3}{2}}\langle X - x^\star, \nabla f(X)\rangle,
\]
which can be simplified to $\frac{2}{r-1}(f(X) - f^\star) \le \langle X - x^\star, \nabla f(X)\rangle$. This inequality combined with \eqref{eq:r_e_der} leads to the monotonically decreasing of $\mathcal{E}(t)$ defined for Theorem \ref{thm:large_r}. This completes the proof by noting $f(X)-f^\star\le (r-1)\mathcal{E}(t)/(2t^2)\le (r-1)\mathcal{E}(0)/(2t^2) = (r-1)^2\|x_0 - x^\star\|^2/(2t^2)$.
\end{proof}


\subsection{Strongly Convex $f$}
\label{sec:bett-conv-rate}
Strong convexity is a desirable property for optimization. Making use of this property carefully suggests a generalized Nesterov's scheme that achieves optimal linear convergence \citep{nesterov-book}. In that case, even vanilla gradient descent has a linear convergence rate. Unfortunately, the example given in the previous
subsection simply rules out such possibility for \eqref{eq:nesterov-scheme} and
its generalizations \eqref{eq:nesterov_general}. However, from a
different perspective, this example suggests that $O(t^{-r})$
convergence rate can be expected for \eqref{eq:ode_r}. In the next
theorem, we prove a slightly weaker statement of this kind, that is, a
provable $O(t^{-\frac{2r}{3}})$ convergence rate is established for
strongly convex functions. Bridging this gap may require new tools and
more careful analysis.

Let $f\in\mathcal{S}_{\mu,L}(\R^n)$ and consider a new energy functional for $\alpha > 2$ defined as
\begin{equation}\nonumber
\mathcal{E}(t; \alpha) = t^{\alpha}(f(X(t)) - f^\star) + \frac{(2r-\alpha)^2t^{\alpha-2}}{8}\Big{\|}X(t) + \frac{2t}{2r-\alpha}\dot X - x^\star\Big{\|}^2.
\end{equation}
When clear from the context, $\mathcal{E}(t; \alpha)$ is simply denoted as $\mathcal{E}(t)$. For $r > 3$, taking $\alpha = 2r/3$ in the theorem stated below gives $f(X(t)) - f^\star \lesssim \|x_0 -x^\star\|^2/t^{\frac{2r}{3}}$. 

\begin{theorem}\label{thm:cube_rate}
For any $f\in\mathcal{S}_{\mu,L}(\R^n)$, if $2 \le \alpha \le 2r/3$ we get
\begin{equation}\nonumber
f(X(t)) - f^\star \le \frac{C\|x_0-x^\star\|^2}{\mu^{\frac{\alpha-2}{2}}t^{\alpha}}
\end{equation}
for any $t > 0$. Above, the constant $C$ only depends on $\alpha$ and $r$.
\end{theorem}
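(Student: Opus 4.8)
The plan is to push the Lyapunov argument of Theorems~\ref{thm:ode_rate} and~\ref{thm:large_r} one notch further, using the functional $\mathcal{E}(t;\alpha)$ together with the strong convexity of $f$. Since $\alpha = 2$ is exactly the first inequality of Theorem~\ref{thm:large_r} (take $C = (r-1)^2/2$), I may assume $2 < \alpha \le 2r/3$, which in particular forces $r > 3$. Write $\beta = 2r-\alpha > 0$ and $w(t) = \norm{X(t)-x^\star}^2$. First I would differentiate $\mathcal{E}(t;\alpha)$, substitute $r\dot X + t\ddot X = -t\nabla f(X)$ from \eqref{eq:ode_r}, and expand $\norm{X + \frac{2t}{\beta}\dot X - x^\star}^2$. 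A direct computation shows that the two $\langle\nabla f,\dot X\rangle$ contributions cancel, the two $\norm{\dot X}^2$ contributions cancel, and what survives is
\[
\dot{\mathcal{E}} = \alpha t^{\alpha-1}(f(X)-f^\star) - \frac{\beta t^{\alpha-1}}{2}\langle X-x^\star,\nabla f(X)\rangle + \frac{\beta^2(\alpha-2)t^{\alpha-3}}{8}\,w + \frac{\beta(\alpha-2)t^{\alpha-2}}{8}\,\dot w .
\]
By convexity $\langle X-x^\star,\nabla f(X)\rangle \ge f(X)-f^\star$, and the resulting coefficient $\alpha - \frac{\beta}{2} = \frac{3\alpha}{2}-r$ of $(f(X)-f^\star)$ is non-positive precisely because $\alpha \le 2r/3$; dropping that term and regrouping the last two terms gives
\[
\dot{\mathcal{E}} \le \frac{\beta(\alpha-2)}{8}\frac{\d}{\d t}\!\left(t^{\alpha-2}w\right) + c_1\, t^{\alpha-3}w ,
\]
where $c_1 = \frac{\beta(\alpha-2)(r-\alpha+1)}{4} > 0$ depends only on $\alpha,r$.

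Next I would absorb the exact-derivative term into a modified functional $\widetilde{\mathcal{E}}(t) = \mathcal{E}(t;\alpha) - \frac{\beta(\alpha-2)}{8}t^{\alpha-2}w(t) \le \mathcal{E}(t;\alpha)$, so that $\dot{\widetilde{\mathcal{E}}} \le c_1 t^{\alpha-3}w$. This is where strong convexity enters decisively: $f(X)-f^\star \ge \frac{\mu}{2}w$ gives $w \le \frac{2}{\mu t^\alpha}\mathcal{E}(t;\alpha)$, and a short quadratic-form check (completing the square in $\norm{\dot X}$ and $\norm{X-x^\star}$ inside $\mathcal{E}$) shows that once $t \ge T_0$ for a threshold $T_0 = c_2/\sqrt{\mu}$ with $c_2$ depending only on $\alpha,r$, one has both $\widetilde{\mathcal{E}}(t) \ge 0$ and $\mathcal{E}(t;\alpha) \le 2\widetilde{\mathcal{E}}(t)$. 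Substituting these yields, for $t \ge T_0$, a Gr\"onwall-type inequality $\dot{\widetilde{\mathcal{E}}}(t) \le \frac{c_3}{\mu t^3}\widetilde{\mathcal{E}}(t)$; integrating from $T_0$ and using $\int_{T_0}^{\infty} u^{-3}\,\d u = \frac{1}{2T_0^2} \asymp \mu$ gives $\widetilde{\mathcal{E}}(t) \le C'\widetilde{\mathcal{E}}(T_0) \le C'\mathcal{E}(T_0;\alpha)$ for all $t \ge T_0$, with $C'$ depending only on $\alpha,r$.

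It then remains to bound $\mathcal{E}(T_0;\alpha)$. Because $T_0 \asymp 1/\sqrt{\mu}$, the already-established $O(1/t^2)$ rate (Theorem~\ref{thm:ode_rate} for $r=3$, Theorem~\ref{thm:large_r} for $r>3$) gives $f(X(T_0))-f^\star \le C''\mu\norm{x_0-x^\star}^2$; strong convexity then gives $\norm{X(T_0)-x^\star} \le C''\norm{x_0-x^\star}$, while the non-increasing energy functional from those same theorems bounds $\norm{X(T_0) + \frac{T_0}{r-1}\dot X(T_0) - x^\star} \le \norm{x_0-x^\star}$, hence $\norm{\dot X(T_0)} \le C''\sqrt{\mu}\norm{x_0-x^\star}$ and $\frac{2T_0}{\beta}\norm{\dot X(T_0)} \le C''\norm{x_0-x^\star}$. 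Plugging these into the definition of $\mathcal{E}(T_0;\alpha)$ gives $\mathcal{E}(T_0;\alpha) \le C'''T_0^{\alpha-2}\norm{x_0-x^\star}^2 \asymp C'''\mu^{-\frac{\alpha-2}{2}}\norm{x_0-x^\star}^2$. Consequently, for $t \ge T_0$, $f(X(t))-f^\star \le \mathcal{E}(t;\alpha)/t^\alpha \le 2C'\mathcal{E}(T_0;\alpha)/t^\alpha \le C\norm{x_0-x^\star}^2/(\mu^{\frac{\alpha-2}{2}}t^\alpha)$; and for $0 < t < T_0$ the crude bound $f(X(t))-f^\star \le C''\norm{x_0-x^\star}^2/t^2 = C''\norm{x_0-x^\star}^2 t^{\alpha-2}/t^\alpha \le C''T_0^{\alpha-2}\norm{x_0-x^\star}^2/t^\alpha$ already has the required form. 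Combining the two ranges finishes the proof.

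The main obstacle is the stubbornly positive leftover term $c_1 t^{\alpha-3}w$ in $\dot{\widetilde{\mathcal{E}}}$. When $\alpha < 2r/3$ one has slack: splitting $\langle X-x^\star,\nabla f(X)\rangle \ge \gamma\mu w + (1-\gamma)(f(X)-f^\star)$ for a suitable $\gamma > 0$ retains a genuinely negative multiple of $\mu t^{\alpha-1}w$, enough to make $\widetilde{\mathcal{E}}$ outright decreasing for $t$ beyond $O(1/\sqrt{\mu})$. At the endpoint $\alpha = 2r/3$ that slack vanishes (no "spare friction"), which is why one is forced through the Gr\"onwall estimate and the time split at $T_0 \asymp 1/\sqrt{\mu}$ rather than a clean monotonicity statement — and why the exponent cannot be pushed past $2r/3$ by this method at all, since any larger $\alpha$ turns the coefficient $\frac{3\alpha}{2}-r$ of $(f(X)-f^\star)$ in $\dot{\mathcal{E}}$ positive and destroys the scheme. (The same consideration explains why merely convex $f$ is not enough: without strong convexity there is no way to dominate $w$ by $\mathcal{E}/t^\alpha$, matching the $f(x)=|x|$ counterexample of Section~\ref{sec:low-friction}.)
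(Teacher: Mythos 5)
Your proof is correct, and while it rests on the same pillars as the paper's --- the functional $\mathcal{E}(t;\alpha)$, the identical expression for $\dot{\mathcal{E}}$, a threshold $T_0=\sqrt{(\alpha-2)(2r-\alpha)/(2\mu)}$ that coincides with the paper's $t_\alpha$, the bound on $\mathcal{E}(T_0;\alpha)$ via the decreasing functional of Theorem \ref{thm:large_r}, and the same crude $O(1/t^2)$ treatment of $t<T_0$ --- it closes the argument for $t\ge T_0$ in a genuinely different way. The paper applies strong convexity \emph{inside} $\dot{\mathcal{E}}$ to generate a term $-\tfrac{\mu(2r-\alpha)t^{\alpha-1}}{4}\|X-x^\star\|^2$ that dominates the positive $t^{\alpha-3}\|X-x^\star\|^2$ term past $t_\alpha$, integrates the surviving $\tfrac{\d}{\d t}\|X-x^\star\|^2$ contribution, and is left with the self-referential estimate $\mathcal{E}(t)\le\mathcal{E}(t_\alpha)+\tfrac{(\alpha-2)(2r-\alpha)t^{\alpha-2}}{4\mu}(f(X(t))-f^\star)$, which it resolves by induction on $\alpha$ in unit steps (base case $2<\alpha\le 4$ via the $O(1/t^2)$ rate). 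You instead use only plain convexity at that stage, absorb the exact-derivative part into $\widetilde{\mathcal{E}}=\mathcal{E}-\tfrac{(2r-\alpha)(\alpha-2)}{8}t^{\alpha-2}\|X-x^\star\|^2$, and kill the leftover $c_1t^{\alpha-3}\|X-x^\star\|^2$ by Gr\"onwall, invoking strong convexity only to get $\|X-x^\star\|^2\le\tfrac{2}{\mu t^{\alpha}}\mathcal{E}\le\tfrac{4}{\mu t^{\alpha}}\widetilde{\mathcal{E}}$ for $t\ge T_0$ and observing that $\tfrac{4c_1}{\mu}\int_{T_0}^{\infty}u^{-3}\,\d u=r-\alpha+1$ depends only on $\alpha$ and $r$. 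The comparability checks you flag are indeed immediate: $t^{\alpha}(f-f^\star)\ge\tfrac{\mu t^{\alpha}}{2}\|X-x^\star\|^2\ge\tfrac{(2r-\alpha)(\alpha-2)}{4}t^{\alpha-2}\|X-x^\star\|^2$ precisely when $t\ge T_0$, giving $0\le\mathcal{E}\le 2\widetilde{\mathcal{E}}$ there. The payoff of your route is a uniform one-pass argument for every $\alpha\in(2,2r/3]$ with no bootstrapping on $\alpha$; the paper's route, by contrast, makes the role of strong convexity in damping the oscillatory term more transparent. Your constants are not tracked explicitly, but each visibly depends only on $\alpha$ and $r$, as the statement requires.
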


\begin{proof}
Note that $\dot{\mathcal{E}}(t; \alpha)$ equals
\begin{multline}\label{eq:alpha_energy}
\alpha t^{\alpha - 1}(f(X)-f^\star) - \frac{(2r-\alpha)t^{\alpha-1}}{2}\langle X- x^\star, \nabla f(X)\rangle + \frac{(\alpha-2)(2r-\alpha)^2t^{\alpha-3}}{8}\|X-x^\star\|^2 \\+ \frac{(\alpha-2)(2r-\alpha)t^{\alpha-2}}{4}\langle \dot X, X - x^\star\rangle.
\end{multline}
By the strong convexity of $f$, the second term of the right-hand side of \eqref{eq:alpha_energy} is bounded below as
\begin{equation}\nonumber
\begin{aligned}
\frac{(2r-\alpha)t^{\alpha-1}}{2}\langle X- x^\star, \nabla f(X)\rangle \geq \frac{(2r-\alpha)t^{\alpha-1}}{2}(f(X) -f^\star) + \frac{\mu(2r-\alpha)t^{\alpha-1}}{4}\|X-x^\star\|^2.
\end{aligned}
\end{equation}
Substituting the last display  into \eqref{eq:alpha_energy} with the awareness of $r \geq 3\alpha/2$ yields
\begin{multline}\nonumber
\dot{\mathcal{E}} \le -  \frac{(2\mu(2r-\alpha)t^2 - (\alpha-2)(2r-\alpha)^2)t^{\alpha-3}}{8}\|X-x^\star\|^2 + \frac{(\alpha-2)(2r-\alpha)t^{\alpha-2}}{8}\frac{\d\|X-x^\star\|^2}{\d t}.
\end{multline}
Hence, if $t \geq t_{\alpha} :=  \sqrt{(\alpha-2)(2r-\alpha)/(2\mu)}$, we obtain
\begin{equation}\nonumber
\dot{\mathcal{E}}(t) \le
\frac{(\alpha-2)(2r-\alpha)t^{\alpha-2}}{8}\frac{\d\|X-x^\star\|^2}{\d t}.
\end{equation}
Integrating the last inequality on the interval $(t_{\alpha}, t)$ gives
\begin{multline}\label{eq:bound_tilde_energy}
\mathcal{E}(t) \le \mathcal{E}(t_{\alpha}) + 
\frac{(\alpha-2)(2r-\alpha)t^{\alpha-2}}{8}\|X(t)-x^\star\|^2 - \frac{(\alpha-2)(2r-\alpha)t_{\alpha}^{\alpha-2}}{8}\|X(t_{\alpha})-x^\star\|^2 \\- \inverse{8}\int^t_{t_{\alpha}}(\alpha-2)^2(2r-\alpha)u^{\alpha-3}\|X(u)-x^\star\|^2\d u
\le \mathcal{E}(t_{\alpha}) + 
\frac{(\alpha-2)(2r-\alpha)t^{\alpha-2}}{8}\|X(t)-x^\star\|^2\\
\le \mathcal{E}(t_{\alpha}) + 
\frac{(\alpha-2)(2r-\alpha)t^{\alpha-2}}{4\mu}(f(X(t)) - f^\star).
\end{multline}

Making use of \eqref{eq:bound_tilde_energy}, we apply induction on
$\alpha$ to finish the proof. First, consider $2 < \alpha \le
4$. Applying Theorem \ref{thm:large_r}, from
\eqref{eq:bound_tilde_energy} we get that $\mathcal{E}(t)$ is upper
bounded by
\begin{equation}\label{eq:simple_bound_tilde_energy}
\mathcal{E}(t_{\alpha}) + 
\frac{(\alpha-2)(r-1)^2(2r-\alpha)\|x_0-x^\star\|^2}{8\mu t^{4-\alpha}} \le \mathcal{E}(t_{\alpha}) + 
\frac{(\alpha-2)(r-1)^2(2r-\alpha)\|x_0-x^\star\|^2}{8\mu t_{\alpha}^{4-\alpha}}.
\end{equation}
Then, we bound $\mathcal{E}(t_{\alpha})$ as follows.
\begin{multline}\label{eq:general_alpha_bound}
\mathcal{E}(t_{\alpha}) \le t_{\alpha}^{\alpha}(f(X(t_{\alpha})) - f^\star) + \frac{(2r-\alpha)^2t_{\alpha}^{\alpha-2}}{4}\Big{\|}\frac{2r-2}{2r-\alpha}X(t_{\alpha})+ \frac{2t_{\alpha}}{2r-\alpha}\dot X(t_{\alpha}) - \frac{2r-2}{2r-\alpha}x^\star\Big{\|}^2\\
 + \frac{(2r-\alpha)^2t_{\alpha}^{\alpha-2}}{4}\Big{\|}\frac{\alpha-2}{2r-\alpha}X(t_{\alpha})  - \frac{\alpha-2}{2r-\alpha}x^\star\Big{\|}^2 \\
\le (r-1)^2t_{\alpha}^{\alpha-2}\|x_0-x^\star\|^2 + \frac{(\alpha-2)^2(r-1)^2\|x_0-x^\star\|^2}{4\mu t_{\alpha}^{4-\alpha}},
\end{multline}
where in the second inequality we use the decreasing property of the energy functional defined for Theorem \ref{thm:large_r}. Combining \eqref{eq:simple_bound_tilde_energy} and \eqref{eq:general_alpha_bound}, we have
\begin{equation}\nonumber
\mathcal{E}(t) \le (r-1)^2t_{\alpha}^{\alpha-2}\|x_0-x^\star\|^2 + \frac{(\alpha-2)(r-1)^2(2r+\alpha-4)\|x_0-x^\star\|^2}{8\mu t_{\alpha}^{4-\alpha}} = O \Big{(} \frac{\|x_0-x^\star\|^2}{\mu^{\frac{\alpha-2}{2}}}  \Big{)}.
\end{equation}
For $t \ge t_{\alpha}$, it suffices to apply $f(X(t))-f^\star\le \mathcal{E}(t)/t^3$ to the last display. For $t < t_{\alpha}$, by Theorem \ref{thm:large_r}, $f(X(t)) - f^\star$ is upper bounded by
\begin{equation}\label{eq:alpha_general_alpha_small_t}
\begin{aligned}
\frac{(r-1)^2\|x_0 - x^\star\|^2}{2t^2} &\le \frac{(r-1)^2\mu^{\frac{\alpha-2}{2}}[(\alpha-2)(2r-\alpha)/(2\mu)]^{\frac{\alpha-2}{2}}}{2}\frac{\|x_0-x^\star\|^2}{\mu^{\frac{\alpha-2}{2}}t^{\alpha}} \\
&= O\Big{(}\frac{\|x_0-x^\star\|^2}{\mu^{\frac{\alpha-2}{2}}t^{\alpha}} \Big{)}.
\end{aligned}
\end{equation}

Next, suppose that the theorem is valid for some $\tilde{\alpha} > 2$. We show below that this theorem is still valid for $\alpha := \tilde{\alpha} + 1$ if still $r \geq 3\alpha/2$. By the assumption, \eqref{eq:bound_tilde_energy} further induces
\begin{equation}\nonumber
\mathcal{E}(t) \le \mathcal{E}(t_{\alpha}) +
\frac{(\alpha-2)(2r-\alpha)t^{\alpha-2}}{4\mu}\frac{\tilde{C}\|x_0 - x^\star\|^2}{\mu^{\frac{\tilde{\alpha}-2}{2}}t^{\tilde{\alpha}}} \le \mathcal{E}(t_{\alpha}) +
\frac{\tilde{C}(\alpha-2)(2r-\alpha)\|x_0 - x^\star\|^2}{4\mu^{\frac{\alpha-1}{2}}t_{\alpha}}
\end{equation}
for some constant $\tilde C$ only depending on $\tilde{\alpha}$ and $r$. This inequality with \eqref{eq:general_alpha_bound} implies
\begin{align*}\nonumber
\mathcal{E}(t)  &\le (r-1)^2t_{\alpha}^{\alpha-2}\|x_0-x^\star\|^2 + \frac{(\alpha-2)^2(r-1)^2\|x_0-x^\star\|^2}{4\mu t_{\alpha}^{4-\alpha}} + \frac{\tilde{C}(\alpha-2)(2r-\alpha)\|x_0 - x^\star\|^2}{4\mu^{\frac{\alpha-1}{2}}t_{\alpha}} \\
&= O\left( \|x_0-x^\star\|^2/\mu^{\frac{\alpha-2}{2}} \right),
\end{align*}
which verify the induction for $t \geq t_{\alpha}$. As for $t < t_{\alpha}$, the validity of the induction follows from Theorem \ref{thm:large_r}, similarly to \eqref{eq:alpha_general_alpha_small_t}. Thus, combining the base and induction steps, the proof is completed.
\end{proof}

It should be pointed out that the constant $C$ in the statement of Theorem \ref{thm:cube_rate} grows with the parameter $r$. Hence, simply increasing $r$ does not guarantee to give a better error bound. While it is desirable to expect a discrete analogy of Theorem \ref{thm:cube_rate}, i.e., $O(1/k^{\alpha})$ convergence rate for \eqref{eq:nesterov_general}, a complete proof can be notoriously complicated. That said, we mimic the proof of Theorem \ref{thm:cube_rate} for $\alpha = 3$ and succeed in obtaining a $O(1/k^3)$ convergence rate for the generalized Nesterov's schemes, as summarized in the theorem below.

\begin{theorem}\label{thm:dis_t3}
Suppose $f$ is written as $f = g + h$, where $g\in\mathcal{S}_{\mu,L}$ and $h$ is convex with possible extended value $\infty$. Then, the generalized Nesterov's scheme \eqref{eq:nesterov_general} with $r\geq 9/2$ and $s = 1/L$ satisfies
\begin{equation}\nonumber
f(x_k) - f^\star \le \frac{CL\|x_0-x^\star\|^2}{k^2}\frac{\sqrt{L/\mu}}{k},
\end{equation}
where $C$ only depends on $r$.
\end{theorem}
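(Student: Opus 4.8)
The plan is to transplant the proof of Theorem~\ref{thm:cube_rate} with $\alpha=3$ (for which the hypothesis $r\ge 3\alpha/2$ becomes exactly $r\ge 9/2$) into the discrete setting, so that the present theorem stands to Theorem~\ref{thm:cube_rate} as Theorem~\ref{thm:large_r_dis} stands to Theorem~\ref{thm:large_r}. I would introduce a discrete energy functional of the form
\[
\mathcal{E}(k) = a_k\,(f(x_k) - f^\star) + b_k\,\|z_k - x^\star\|^2,
\]
with $a_k$ a cubic polynomial in $k$ (so that, under the identification $t = k\sqrt{s}$ with $s = 1/L$, the term $a_k(f(x_k)-f^\star)$ models $t^3(f(X)-f^\star)$), $b_k$ linear in $k$, and $z_k$ a suitable affine combination of $y_k$ and $x_k$ modeled on $X + \tfrac{2t}{2r-3}\dot X$ (in the spirit of the $z_k = ((k+r-1)y_k-kx_k)/(r-1)$ used in Theorem~\ref{thm:large_r_dis}). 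The engine of the argument is the strongly convex refinement of the basic inequality~\eqref{eq:prox_ineq}: since $g\in\mathcal{S}_{\mu,L}$ and $s = 1/L$, one has
\[
f(y - sG_s(y)) \le f(x) + G_s(y)^T(y - x) - \tfrac{s}{2}\|G_s(y)\|^2 - \tfrac{\mu}{2}\|y - x\|^2
\]
for all $x,y$; the extra $-\tfrac{\mu}{2}\|y-x\|^2$ is the discrete counterpart of the strong-convexity bound $\langle X - x^\star, \nabla f(X)\rangle \ge (f(X)-f^\star) + \tfrac{\mu}{2}\|X-x^\star\|^2$ that powers~\eqref{eq:alpha_energy}.

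Instantiating this inequality at $(x,y) = (x_{k-1}, y_{k-1})$ and at $(x,y) = (x^\star, y_{k-1})$, taking the convex combination with weights $\tfrac{k-1}{k+r-2}$ and $\tfrac{r-1}{k+r-2}$ as in Theorem~\ref{thm:large_r_dis}, and multiplying by the appropriate power of $(k+r-2)$, I expect a one-step recursive \emph{inequality} whose sign structure mirrors that of $\dot{\mathcal{E}}(t;3)$: a non-positive $(f(x_k)-f^\star)$ contribution (here is where $r\ge 9/2$ is consumed, exactly as $\tfrac{3\alpha}{2}-r\le 0$ is used in~\eqref{eq:alpha_energy}), a negative ``gain'' of order $\mu s(k+r-2)^2\,\|y_{k-1}-x^\star\|^2$ from the $-\tfrac{\mu}{2}\|y_{k-1}-x^\star\|^2$ term, a positive ``bad'' term equal to a $k$-independent coefficient times $\|x_{k-1}-x^\star\|^2$ (the exponent here is $\alpha-3=0$, which is precisely what makes $\alpha=3$ tractable), and a term proportional to the increment $\|x_k-x^\star\|^2 - \|x_{k-1}-x^\star\|^2$ (the discrete image of $\tfrac{\d}{\d t}\|X-x^\star\|^2$). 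Once $k \ge k_0 := \Theta(\sqrt{L/\mu})$ — the discrete analogue of $t_{\alpha}=\sqrt{(\alpha-2)(2r-\alpha)/(2\mu)}$, at which $\mu s(k+r-2)^2\gtrsim 1$ — the gain dominates the bad term, so summation by parts from $k_0$ to $k$ gives $\mathcal{E}(k) \lesssim \mathcal{E}(k_0) + (k+r-2)\,\|x_k - x^\star\|^2$ up to controlled remainders.

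It remains to bound the right-hand side. By Theorem~\ref{thm:large_r_dis} together with the strong-convexity inequality $\tfrac{\mu}{2}\|x_k-x^\star\|^2 \le f(x_k)-f^\star \le \tfrac{(r-1)^2\|x_0-x^\star\|^2}{2s(k+r-2)^2}$, both $\mathcal{E}(k_0)$ and $(k+r-2)\|x_k-x^\star\|^2$ for $k\ge k_0$ are $O(\|x_0-x^\star\|^2/\sqrt{\mu})$, the factor $1/\sqrt{\mu}$ arising precisely from evaluating the $O(1/k^2)$ rate at $k=k_0\sim\sqrt{L/\mu}$. Hence $\mathcal{E}(k) = O(\|x_0-x^\star\|^2/\sqrt{\mu})$ for $k\ge k_0$, and dividing by $a_k \sim (k\sqrt{s})^3 = k^3/L^{3/2}$ yields $f(x_k)-f^\star = O\bigl(L^{3/2}\|x_0-x^\star\|^2/(\sqrt{\mu}\,k^3)\bigr)$, i.e., the asserted $\tfrac{CL\|x_0-x^\star\|^2}{k^2}\cdot\tfrac{\sqrt{L/\mu}}{k}$. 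For $k<k_0$ the claimed bound is immediate from Theorem~\ref{thm:large_r_dis}, since there $1/k^2 = O(\sqrt{L/\mu}/k^3)$.

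The main obstacle — flagged in the surrounding text as the reason a general discrete $O(1/k^\alpha)$ analogue is out of reach — is the bookkeeping in the recursion step: one must pin down $a_k$, $b_k$, $z_k$ and the multiplying power of $(k+r-2)$ so that the cross terms cancel (or, since unlike~\eqref{eq:recursive} one can only hope for an inequality, are dominated in the right direction), while the leftover from replacing discrete differences by derivatives forms a genuinely telescoping or summable series rather than merely a small one. In contrast to the continuous case, the $O(1/k)$ corrections produced by expanding $(k+r-2)^3$ and its relatives do not vanish; controlling them, confirming that the coefficient arithmetic really does force $r\ge 9/2$, and checking that the summation-by-parts remainder is summable is where the bulk of the effort lies.
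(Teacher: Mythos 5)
Your proposal follows essentially the same route as the paper's Appendix~D: the same strengthened proximal inequality with the $-\tfrac{\mu}{2}\|y-x\|^2$ term, the same cubic-times-gap plus linear-times-distance energy functional (the paper's is \eqref{eq:dis_t3_energy}), the same threshold $k^\star \asymp \sqrt{L/\mu}$ beyond which the strong-convexity gain absorbs the bad term, the same summation leaving $\mathcal{E}(k)\lesssim \mathcal{E}(k^\star) + B_k\|x_{k-1}-x^\star\|^2$, and the same use of Theorem~\ref{thm:large_r_dis} plus strong convexity to bound both remainders by $O(\|x_0-x^\star\|^2/\sqrt{s\mu})$ and to handle $k\le k^\star$. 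The coefficient bookkeeping you defer (the weights $4k-3$ and $4r-6$, the exact $z_k$, and the verification that positivity of the $(f(x_{k-1})-f^\star)$ coefficient forces $r\ge 9/2$) is precisely the content of the paper's $\Pi_1+\Pi_2$ computation, so the plan is sound and matches the published argument.
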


This theorem states that the discrete scheme \eqref{eq:nesterov_general} enjoys the error bound $O(1/k^3)$ without any knowledge of the condition number $L/\mu$. In particular, this bound is much better than that given in Theorem \ref{thm:large_r_dis} if $k \gg \sqrt{L/\mu}$. The strategy of the proof is fully inspired by that of Theorem \ref{thm:cube_rate}, though it is much more complicated and thus deferred to the Appendix. The relevant energy functional $\mathcal{E}(k)$ for this Theorem \ref{thm:dis_t3} is equal to
\begin{multline}\label{eq:dis_t3_energy}
\frac{s(2k+3r-5)(2k+2r-5)(4k+4r-9)}{16}(f(x_k)-f^\star)\\ + \frac{2k+3r-5}{16}\|2(k+r-1)y_k - (2k+1)x_k - (2r-3)x^\star\|^2.
\end{multline}

\subsection{Numerical Examples}
\label{sec:numerical-examples1}
We study six synthetic examples to compare \eqref{eq:nesterov_general} with the step sizes are fixed to be $1/L$, as illustrated in Figure \ref{fig:lasso_ps}. The error rates exhibits similar patterns for all $r$, namely, decreasing while suffering from local bumps. A smaller $r$ introduces less friction, thus allowing $x_k$ moves towards $x^\star$ faster in the beginning. However, when sufficiently close to $x^\star$, more friction is preferred in order to reduce overshoot. This point of view explains what we observe in these examples. That is, across these six examples, \eqref{eq:nesterov_general} with a smaller $r$ performs slightly better in the beginning, but a larger $r$ has advantage when $k$ is large. It is an interesting question how to choose a good $r$ for different problems in practice.

\begin{figure}[!h]
\centering
\begin{subfigure}[b]{0.48\textwidth}
\centering
\includegraphics[width=\textwidth,height=1.7in]{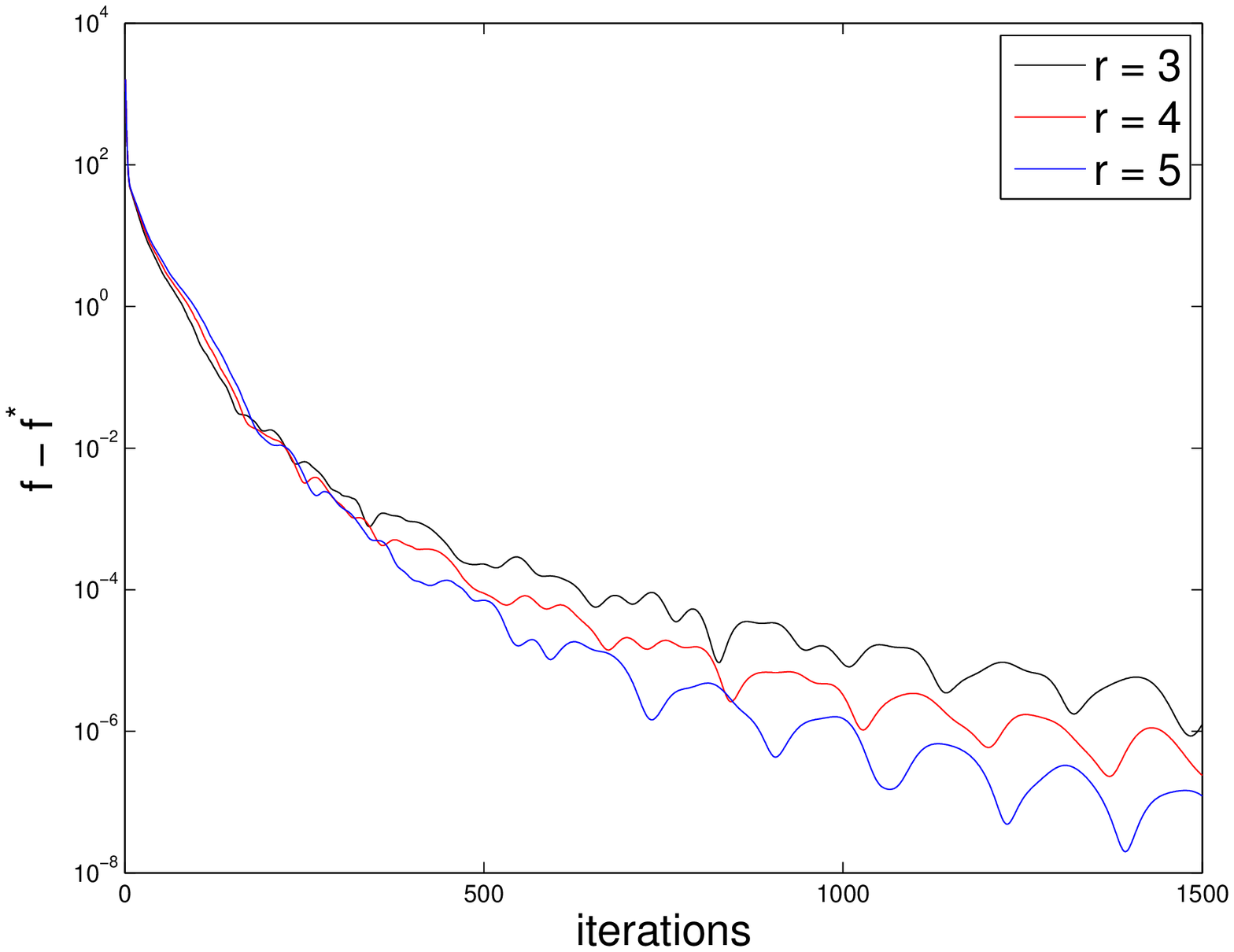}
\caption{Lasso with fat design.}
\label{fig:larger_r_lasso}
\end{subfigure}
\hfill
\begin{subfigure}[b]{0.48\textwidth}
\centering
\includegraphics[width=\textwidth,height=1.7in]{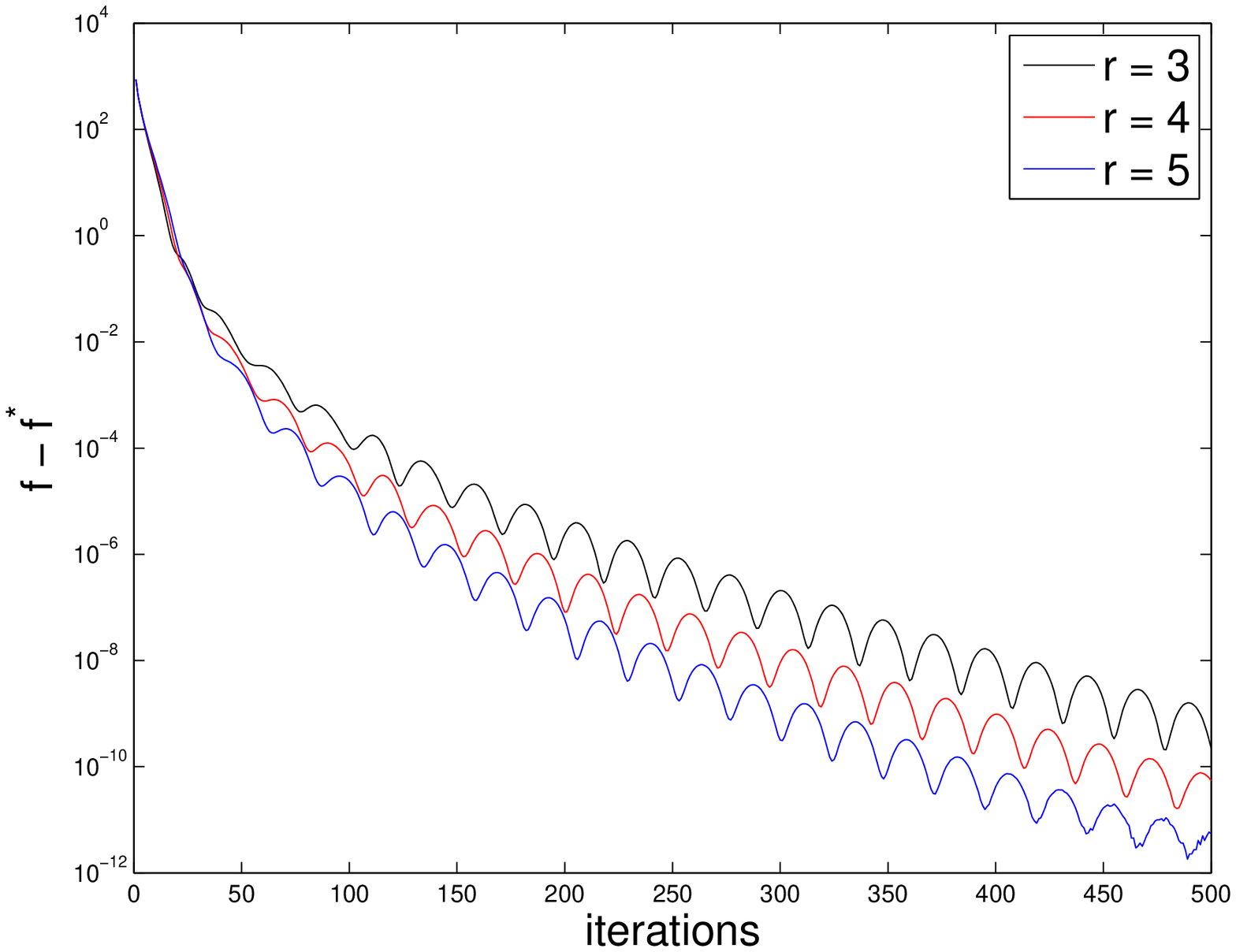}
\caption{Lasso with square design.}
\label{fig:larger_r_lasso2}
\end{subfigure}
\begin{subfigure}[b]{0.48\textwidth}
\centering
\includegraphics[width=\textwidth,height=1.7in]{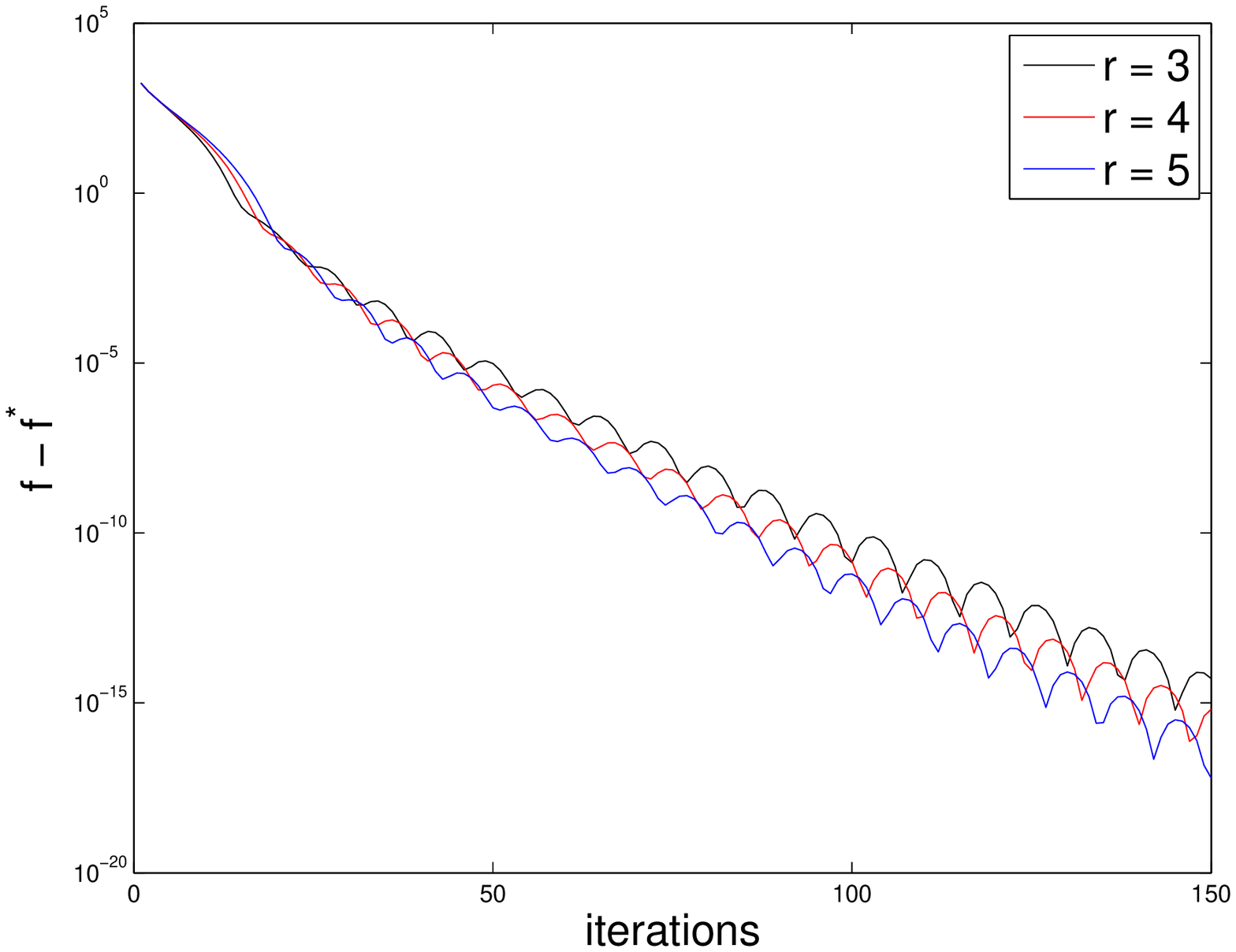}
\caption{NLS with fat design.}
\label{fig:larger_r_nonneg}
\end{subfigure}
\hfill
\begin{subfigure}[b]{0.48\textwidth}
\centering
\includegraphics[width=\textwidth,height=1.7in]{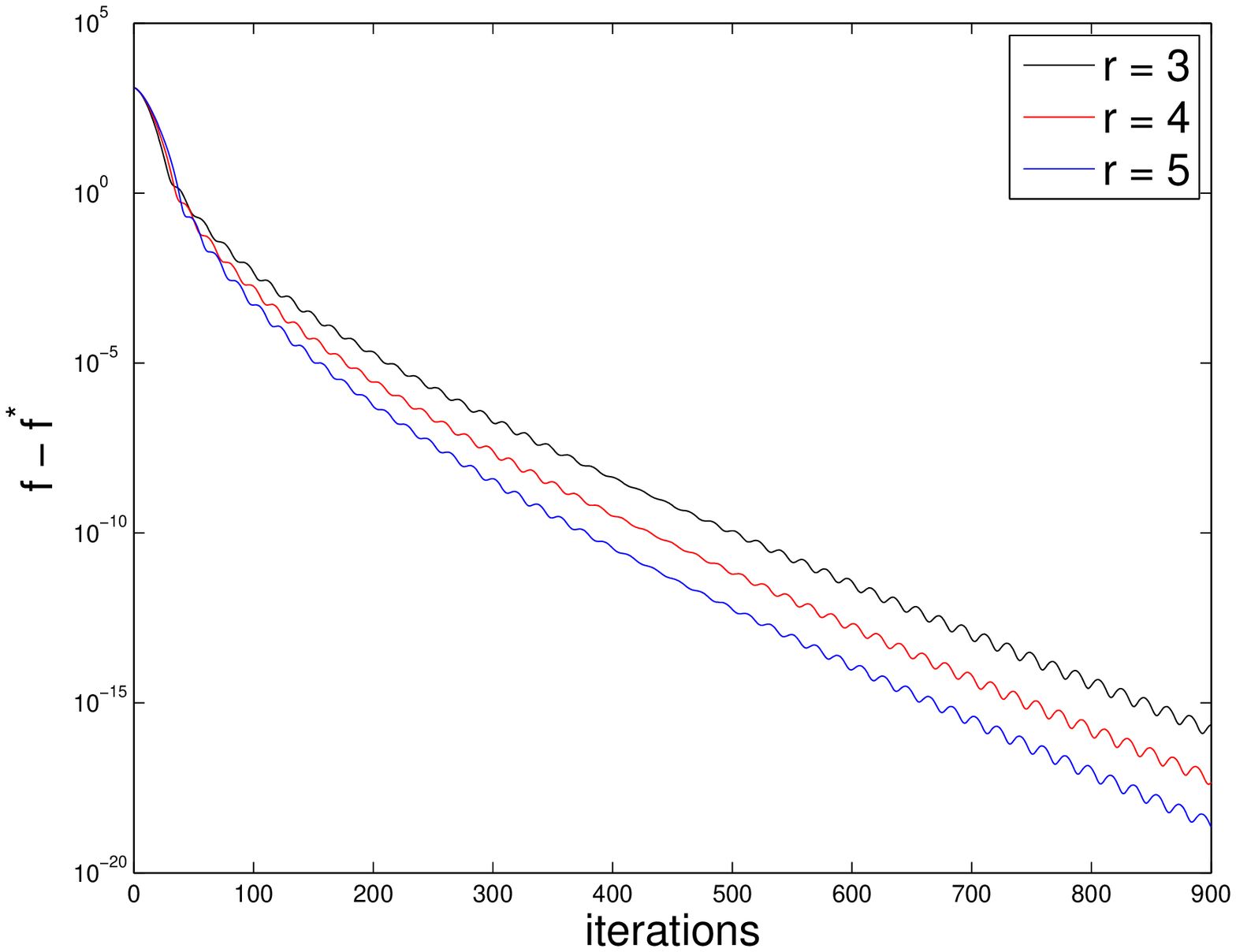}
\caption{NLS with square design.}
\label{fig:larger_r_nonneg2}
\end{subfigure}
\\\vspace{0.5em}
\begin{subfigure}[b]{0.48\textwidth}
\centering
\includegraphics[width=\textwidth,height=1.7in]{larger_r_logistic}
\caption{Logistic regression.}
\label{fig:larger_r_log}
\end{subfigure}
\hfill
\begin{subfigure}[b]{0.48\textwidth}
\centering
\includegraphics[width=\textwidth,height=1.7in]{larger_r_reglogistic}
\caption{$\ell_1$-regularized logistic regression.}
\label{fig:larger_r_reglog}
\end{subfigure}
\caption{Comparisons of generalized Nesterov's schemes with different $r$.}
\label{fig:lasso_ps}
\end{figure}

\noindent{\bf Lasso with fat design.}~ Minimize $f(x) = \half\|Ax-b\|^2 + \lambda\|x\|_1$, in which $A$ a $100\times 500$ random matrix with \iid standard Gaussian $\mathcal{N}(0, 1)$ entries, $b$ generated independently has \iid $\mathcal{N}(0, 25)$ entries, and the penalty $\lambda = 4$. The plot is Figure \ref{fig:larger_r_lasso}.

\vspace{0.4em}
\noindent{\bf Lasso with square design.}~ Minimize $f(x) = \half\|Ax-b\|^2 + \lambda\|x\|_1$, where $A$ a $500\times 500$ random matrix with \iid standard Gaussian entries, $b$ generated independently has \iid $\mathcal{N}(0, 9)$ entries, and the penalty $\lambda = 4$. The plot is Figure \ref{fig:larger_r_lasso2}.

\vspace{0.4em}
\noindent{\bf Nonnegative least squares (NLS) with fat design.}~ Minimize $f(x) = \|Ax-b\|^2$ subject to $x\succeq 0$, with the same design $A$ and $b$ as in Figure \ref{fig:larger_r_lasso}. The plot is Figure \ref{fig:larger_r_nonneg}.

\vspace{0.4em}
\noindent{\bf Nonnegative least squares with sparse design.}~ Minimize $f(x) = \|Ax-b\|^2$ subject to $x\succeq 0$, in which $A$ is a $1000\times 10000$ sparse matrix with nonzero probability $10\%$ for each entry and $b$ is given as $b = Ax^0 + \mathcal{N}(0, I_{1000})$. The nonzero entries of $A$ are independently Gaussian distributed before column normalization, and $x^0$ has 100 nonzero entries that are all equal to 4. The plot is Figure \ref{fig:larger_r_nonneg2}.

\vspace{0.4em}
\noindent{\bf Logistic regression.}~ Minimize $\sum_{i=1}^n - y_i a_i^T x + \log(1 + \mathrm{e}^{a_i^T x})$, in which $A = (a_1, \ldots, a_n)^T$ is a $500 \times 100$ matrix with \iid $\mathcal{N}(0,1)$ entries. The labels $y_i \in \{0, 1\}$ are generated by the logistic model: $\mathbb P(Y_i = 1) = 1/(1 + \mathrm{e}^{-a_i^T x^0})$, where $x^0$ is a realization of \iid $\mathcal{N}(0, 1/100)$. The plot is Figure \ref{fig:larger_r_log}.

\vspace{0.4em}
\noindent{\bf $\ell_1$-regularized logistic regression.}~ Minimize $\sum_{i=1}^n - y_i a_i^T x + \log(1 + \mathrm{e}^{a_i^T x}) + \lambda \|x\|_1$, in which $A = (a_1, \ldots, a_n)^T$ is a $200 \times 1000$ matrix with \iid $\mathcal{N}(0,1)$ entries and $\lambda = 5$. The labels $y_i$ are generated similarly as in the previous example, except for the ground truth $x^0$ here having 10 nonzero components given as \iid $\mathcal{N}(0, 225)$. The plot is Figure \ref{fig:larger_r_reglog}.


\section{Restarting}
\label{sec:accelerate}
The example discussed in Section \ref{sec:low-friction} demonstrates that Nesterov's scheme and its generalizations \eqref{eq:nesterov_general} are not capable of fully exploiting strong convexity. That is, this example suggests evidence that $O(1/\mathtt{poly}(k))$ is the best rate achievable under strong convexity. In contrast, the vanilla gradient method achieves linear convergence $O((1-\mu/L)^k)$. This drawback results from too much momentum introduced when the objective function is strongly convex. The derivative of a strongly convex function is generally more reliable than that of non-strongly convex functions. In the language of ODEs, at later stage a too small $3/t$ in \eqref{key} leads to a lack of friction, resulting in unnecessary overshoot along the trajectory. Incorporating the optimal momentum coefficient $\frac{\sqrt{L} - \sqrt{\mu}}{\sqrt{L} + \sqrt{\mu}}$ (This is less than $(k-1)/(k+2)$ when $k$ is large), Nesterov's scheme has convergence rate of $O((1-\sqrt{\mu/L})^k)$ \citep{nesterov-book}, which, however, requires knowledge of the condition number $\mu/L$. While it is relatively easy to bound the Lipschitz constant $L$ by the use of backtracking, estimating the strong convexity parameter $\mu$, if not impossible, is very challenging. 

Among many approaches to gain acceleration via adaptively estimating $\mu/L$ \citep[see][]{nesterov_compo}, \citet{restart} proposes a procedure termed as gradient restarting for Nesterov's scheme in which \eqref{eq:nesterov-scheme} is restarted with $x_0 = y_0 :=x_k$ whenever $f(x_{k+1}) > f(x_k)$.  In the language of ODEs, this restarting essentially keeps $\langle \nabla f,\dot{X} \rangle$ negative, and resets $3/t$ each time to prevent this coefficient from steadily decreasing along the trajectory. Although it has been empirically observed that this method significantly boosts convergence, there is no general theory characterizing the convergence rate.

In this section, we propose a new restarting scheme we call the speed restarting scheme. The underlying motivation is to maintain a
relatively high velocity $\dot X$ along the trajectory, similar in spirit to the gradient restarting. Specifically, our main result, Theorem \ref{exp}, ensures linear convergence of the continuous version of the speed restarting. More generally, our contribution here is merely to provide a framework for analyzing restarting schemes rather than competing with other schemes; it is beyond the scope of this paper to get optimal constants in these results. Throughout this section, we assume $f\in \mathcal{S}_{\mu,L}$ for some $0 < \mu \le L$. Recall that
function $f\in \mathcal{S}_{\mu,L}$ if $f\in \mathcal{F}_{L}$ and
$f(x) - \mu \norm{x}^2/2$ is convex.

\subsection{A New Restarting Scheme}
\label{sec:speed-restarting}
We first define the speed restarting time. For the ODE \eqref{key}, we call
\begin{equation}\nonumber
T =  T(x_0; f) = \sup \left\{t > 0: \forall u\in (0, t), ~ \frac{\d\norm{\dot X(u)}^2}{\d u} > 0 \right\}
\end{equation}
the speed restarting time. In words, $T$ is the first time the velocity $\|\dot X\|$ decreases. Back to the discrete scheme, it is the first time when we observe $\|x_{k+1} - x_k\| < \|x_k - x_{k-1}\|$. This definition itself does not directly imply that $0 < T < \infty$, which is proven later in Lemmas \ref{lm:small} and \ref{lm:stopping}. Indeed, $f(X(t))$ is a decreasing function before time $T$; for $t\le T$,
\begin{equation}\nonumber
\frac{\d f(X(t))}{\d t} =\langle \nabla f(X),\dot X\rangle = -\frac{3}{t}\norm{\dot X}^2 - \frac{1}{2}\frac{\d\norm{\dot X}^2}{\mathrm{d} t} \le 0.
\end{equation}
The speed restarted ODE is thus 
\begin{equation}\label{eq:ode_restart}
\ddot X(t) + \frac{3}{\tsr}\dot X(t) + \nabla f(X(t)) = 0,
\end{equation}
where $\tsr$ is set to zero whenever $\langle \dot X, \ddot X\rangle = 0$ and between two consecutive restarts, $\tsr$ grows just as $t$. That is, $\tsr = t - \tau$, where $\tau$ is the latest restart time. In particular, $\tsr = 0$ at $t = 0$. Letting $\xsr$ be the solution to \eqref{eq:ode_restart}, we have the following observations.
\begin{itemize}
\item 
$\xsr(t)$ is continuous for $t \geq 0$, with $\xsr(0) = x_0$;
\item
$\xsr(t)$ satisfies \eqref{key} for $0 < t < T_1 := T(x_0; f)$.
\item
Recursively define $T_{i+1} = T\left( \xsr\left(\sum_{j=1}^i T_j\right); f\right)$ for $i \ge 1$, and $\widetilde X(t) := \xsr\left(\sum_{j=1}^i T_j + t \right)$ satisfies  the ODE \eqref{key}, with $\widetilde X(0) = \xsr \left( \sum_{j=1}^i T_j \right)$, for $0 < t < T_{i+1}$.
\end{itemize}

The theorem below guarantees linear convergence of $\xsr$. This is a new result in the literature \citep{restart,restarting_gatech}. The proof of Theorem \ref{exp} is based on Lemmas \ref{lm:decayfast} and \ref{lm:small}, where the first guarantees the rate $f(\xsr) - f^\star$ decays by a constant factor for each restarting, and the second confirms that restartings are adequate. In these lemmas we all make a convention that the uninteresting case $x_0 = x^\star$ is excluded.

\begin{theorem}\label{exp}
There exist positive constants $c_1$ and $c_2$, which only depend on the condition number $L/\mu$, such that for any $f\in\mathcal{S}_{\mu, L}$, we have
\begin{equation}\nonumber
f(\xsr(t)) - f^\star  \le \frac{c_1L\norm{x_0 - x^\star}^2}{2}\e{-c_2t\sqrt{L}}.
\end{equation}
\end{theorem}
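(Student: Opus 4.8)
The plan is to establish the theorem as a consequence of two lemmas (following the outline the authors describe): a \emph{geometric-decay lemma} showing that between consecutive speed restarts the objective gap $f(\xsr) - f^\star$ shrinks by a fixed multiplicative constant $\rho = \rho(L/\mu) < 1$, and a \emph{bounded-restart-time lemma} showing that each restart interval $T_i$ is bounded above by a constant $M = M(L/\mu)/\sqrt{L}$ (and bounded away from $0$, so there are no Zeno-type issues). Granting these, after the $m$-th restart the elapsed time is $\sum_{j=1}^m T_j \le mM/\sqrt{L}$, so the number of completed restarts by time $t$ is at least $m(t) \ge t\sqrt{L}/M - 1$; combining with the decay lemma gives $f(\xsr(t)) - f^\star \le \rho^{m(t)}(f(x_0) - f^\star) \le \rho^{-1}(f(x_0)-f^\star)\,\e{(t\sqrt{L}/M)\log\rho}$. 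Finally bound $f(x_0) - f^\star \le \tfrac{L}{2}\|x_0 - x^\star\|^2$ by $L$-smoothness, and read off $c_1 = \rho^{-1}$, $c_2 = -\log\rho / M > 0$. This is exactly the stated form.

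For the decay lemma (the authors' Lemma~\ref{lm:decayfast}), the idea is to work on a single restart interval, which after time-shifting is governed by the plain ODE \eqref{key} with fresh initial data $X(0) = x_0^{(i)}$, $\dot X(0) = 0$. Over this interval $f(X(t))$ is nonincreasing (the displayed computation $\tfrac{\d}{\d t}f(X) = -\tfrac{3}{t}\|\dot X\|^2 - \tfrac12\tfrac{\d}{\d t}\|\dot X\|^2 \le 0$ holds precisely up to the speed restarting time $T$), and one wants a uniform lower bound, in terms of the condition number, on how much $f$ must drop by time $T$. The mechanism: strong convexity forces $\|\dot X\|$ to build up appreciable kinetic energy before it can turn over, so $T$ cannot be too small, while over $[0,T]$ the cumulative decrease $f(x_0^{(i)}) - f(X(T)) = \int_0^T \big(\tfrac3t\|\dot X\|^2 + \tfrac12\tfrac{\d}{\d t}\|\dot X\|^2\big)\d t$ is controlled below by an expression proportional to $f(x_0^{(i)}) - f^\star$; comparison of $f(x_0^{(i)}) - f^\star$ with $\tfrac{\mu}{2}\|x_0^{(i)} - x^\star\|^2$ and $\tfrac{L}{2}\|x_0^{(i)} - x^\star\|^2$ converts everything into a ratio depending only on $L/\mu$. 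The Taylor/asymptotic expansion $X(t) = x_0^{(i)} - \tfrac{t^2}{8}\nabla f(x_0^{(i)}) + o(t^2)$ derived in the ``Initial Asymptotic'' paragraph is the natural starting point for the small-$t$ estimates here.

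For the bounded-restart-time lemma (Lemma~\ref{lm:stopping}/\ref{lm:small}), one argues that $\|\dot X\|$ cannot keep increasing forever: since $\ddot X = -\tfrac3t\dot X - \nabla f(X)$ and $f \in \mathcal S_{\mu,L}$, once $t$ is of order $1/\sqrt{\mu}$ the restoring force $\nabla f(X)$ (which is bounded below in terms of $\mu$ and the distance to $x^\star$, itself still comparable to its initial value for such $t$) dominates and $\tfrac{\d}{\d t}\|\dot X\|^2 = 2\langle \dot X, \ddot X\rangle$ must go negative; a Gronwall or energy argument on the interval makes this quantitative with the bound $T \le M/\sqrt{L}$. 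I expect the decay lemma to be the main obstacle: getting a \emph{uniform} (condition-number-only) constant factor, rather than one that degrades as the iterate approaches $x^\star$, requires carefully exploiting the scale-invariance of \eqref{key} under $t \mapsto ct$, $f \mapsto f/c^2$ (the ``Time Invariance'' property) together with rotational invariance to reduce to a normalized configuration, and then controlling the integral $\int_0^T \tfrac3t\|\dot X\|^2\,\d t$ from below without an explicit solution — quadratic test functions and the Bessel-type comparison from Section~\ref{sec:bessel} are the tools I would reach for, but making the bookkeeping clean is the delicate part.
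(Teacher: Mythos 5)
Your top-level decomposition is exactly the paper's: a per-restart decay lemma (Lemma \ref{lm:decayfast}, factor $1-C\mu/L$) plus an upper bound on each restart interval (Lemma \ref{lm:small}, $T\le \tfrac{4}{5\sqrt L}\e{\tilde C L/\mu}$), combined by counting the restarts completed by time $t$ and finishing with $f(x_0)-f^\star\le \tfrac L2\|x_0-x^\star\|^2$; the skeleton and the final bookkeeping are correct and identical to the paper's. Two remarks on how the sub-lemmas are actually carried out. First, the decay lemma needs none of the scale-invariance reduction or Bessel comparison machinery you reach for at the end: the paper makes the initial asymptotic $\dot X(t)\approx -\tfrac t4\nabla f(x_0)$ rigorous via the bound on $M(t)$ in Lemma \ref{lm:m_upper} (this uses only $L$-smoothness, not strong convexity, and is also what gives the lower bound $T\ge 4/(5\sqrt L)$ of Lemma \ref{lm:stopping}), integrates $\tfrac{\d}{\d t}f(X)\le -\tfrac3t\|\dot X\|^2\le -C_1 t\|\nabla f(x_0)\|^2$ over $[0,4/(5\sqrt L)]\subseteq[0,T]$, and the condition-number-only constant then drops out automatically from $\|\nabla f(x_0)\|^2\ge 2\mu(f(x_0)-f^\star)$ --- no normalization argument is needed. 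Second, for the upper bound on $T$ your proposed mechanism (once $t\sim 1/\sqrt\mu$ the restoring force makes $\tfrac{\d}{\d t}\|\dot X\|^2$ negative) is not what the paper does and would be delicate: before $T$ one has $\langle\dot X,\nabla f(X)\rangle\le 0$, so you would have to show this inner product is not \emph{too} negative, which is not immediate from $\nabla f$ being large. The paper instead argues by contradiction with the boundedness of $f$: on $[4/(5\sqrt L),T]$ the speed is at least its value at $4/(5\sqrt L)$ (by definition of $T$), so $f$ decreases at rate at least $\tfrac3t\|\dot X(4/(5\sqrt L))\|^2$, and integrating yields a $\log T$ term that must be bounded by $(f(x_0)-f^\star)/(3\|\dot X(4/(5\sqrt L))\|^2)=O(L/\mu)$; note the resulting bound on $T$ is exponential in $L/\mu$, not $O(1/\sqrt\mu)$ (the latter is only conjectured at the end of Section \ref{sec:accelerate}), though this still yields constants depending only on the condition number, as required.
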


Before turning to the proof, we make a remark that this linear convergence of $\xsr$ remains to hold for the generalized ODE \eqref{eq:ode_r} with $r > 3$. Only minor modifications in the proof below are needed, such as replacing $u^3$ by $u^r$ in the definition of $I(t)$ in Lemma \ref{lm:stopping}.

\subsection{Proof of Linear Convergence}
\label{sec:proof-line-conv}
First, we collect some useful estimates. Denote by $M(t)$ the supremum of $\norm{\dot X(u)}/{u}$ over $u\in (0, t]$ and let
\[ I(t) := \int^t_0 u^3(\nabla f(X(u)) - \nabla f(x_0)) \d u. \]
It is guaranteed that $M$ defined above is finite, for example, see the proof of Lemma \ref{lm:arzela}. The definition of $M$ gives a bound on the gradient of $f$,
\begin{equation}\nonumber
\norm{\nabla f(X(t)) - \nabla f(x_0)} \le  L\Big{\|}\int_{0}^t \dot X(u) \d u \Big{\|}  \le L\int_{0}^t u\frac{\norm{\dot X(u)}}{u} \d u \le \frac{L M(t)t^2}{2}.
\end{equation}
Hence, it is easy to see that $I$ can also be bounded via $M$,
\begin{equation}\nonumber
\|I(t)\| \le \int_0^t u^3\|\nabla f(X(u))-\nabla f(x_0)\|\d u\le \int_0^t \frac{L M(u)u^5}{2}\d u\le \frac{LM(t)t^6}{12}.
\end{equation}
To fully facilitate these estimates, we need the following lemma that gives an upper bound of $M$, whose proof is deferred to the appendix.

\begin{lemma}\label{lm:m_upper}
For $t < \sqrt{12/L}$, we have
\begin{equation}\nonumber
M(t) \le \frac{\norm{\nabla f(x_0)}}{4(1 - Lt^2/12)}.
\end{equation}
\end{lemma}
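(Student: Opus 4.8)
The plan is to set up a self-consistent (Grönwall-type) inequality for $M(t)$ directly from the ODE \eqref{key} and the estimates already collected just before the lemma. Recall $M(t) = \sup_{u \in (0,t]} \|\dot X(u)\|/u$. First I would rewrite the ODE in the form $(t^3 \dot X)' = t^3 \ddot X + 3 t^2 \dot X = -t^3 \nabla f(X)$, so that integrating from $0$ to $t$ and using $\dot X(0) = 0$ gives $t^3 \dot X(t) = -\int_0^t u^3 \nabla f(X(u))\, \d u$. Splitting $\nabla f(X(u)) = \nabla f(x_0) + (\nabla f(X(u)) - \nabla f(x_0))$ and integrating the constant term exactly, this becomes
\[
t^3 \dot X(t) = -\frac{t^4}{4}\nabla f(x_0) - I(t),
\]
with $I(t)$ exactly the quantity defined in the excerpt. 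Dividing by $t^4$ yields $\dot X(t)/t = -\tfrac14 \nabla f(x_0) - I(t)/t^4$, hence $\|\dot X(t)\|/t \le \tfrac14 \|\nabla f(x_0)\| + \|I(t)\|/t^4$.

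Next I would feed in the bound $\|I(t)\| \le L M(t) t^6 / 12$ already derived in the text, giving $\|\dot X(t)\|/t \le \tfrac14\|\nabla f(x_0)\| + L M(t) t^2/12$. The key point is that the right-hand side, with $M(t)$ replaced by $M(s)$ for the appropriate $s$, is monotone in the upper limit: for any $u \le t$ we have $\|I(u)\| \le LM(u)u^6/12 \le LM(t)u^6/12$ since $M$ is nondecreasing, so $\|\dot X(u)\|/u \le \tfrac14\|\nabla f(x_0)\| + LM(t)u^2/12 \le \tfrac14\|\nabla f(x_0)\| + LM(t)t^2/12$. Taking the supremum over $u \in (0,t]$ on the left gives
\[
M(t) \le \frac{\|\nabla f(x_0)\|}{4} + \frac{L t^2}{12} M(t).
\]
For $t < \sqrt{12/L}$ the coefficient $Lt^2/12 < 1$, so we may solve for $M(t)$ and obtain $M(t) \le \dfrac{\|\nabla f(x_0)\|}{4(1 - Lt^2/12)}$, which is exactly the claimed bound.

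The only genuine subtlety — and the step I would be most careful about — is justifying that $M(t)$ is finite to begin with (so that rearranging the inequality is legitimate) and that $M$ is nondecreasing and the relevant integrals make sense near $t = 0$; the excerpt flags that finiteness of $M$ follows from the construction in the proof of Lemma \ref{lm:arzela}, and near $t=0$ the initial asymptotic $X(t) = x_0 - \nabla f(x_0)t^2/8 + o(t^2)$ shows $\|\dot X(u)\|/u \to \|\nabla f(x_0)\|/4$, so the supremum is controlled there. Monotonicity of $M$ in $t$ is immediate from its definition as a supremum over an expanding interval. With those points in hand, the argument is just the short chain of inequalities above, and there is no real obstacle beyond bookkeeping.
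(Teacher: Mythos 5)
Your proposal is correct and follows essentially the same route as the paper's proof: integrate $(t^3\dot X)' = -t^3\nabla f(X)$ to get $t^3\dot X(t) = -\tfrac{t^4}{4}\nabla f(x_0) - I(t)$, divide by $t^4$, insert the bound $\|I(t)\|\le LM(t)t^6/12$, use monotonicity of the right-hand side to pass to the supremum, and rearrange for $t<\sqrt{12/L}$. Your extra care about the finiteness of $M$ is the same point the paper defers to the construction in Lemma \ref{lm:arzela}, so there is nothing to add.
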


Next we give a lemma which claims that the objective function decays by a constant through each speed restarting.
\begin{lemma}\label{lm:decayfast}
There is a universal constant $C > 0$ such that
\begin{equation}\nonumber
f(X(T)) - f^\star \le \left( 1 - \frac{C\mu}{L} \right) (f(x_0) - f^\star).
\end{equation}

\end{lemma}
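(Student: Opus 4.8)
The plan is to show that the one-step decrease $f(x_0) - f(X(T))$ is bounded below by a constant times $\|\nabla f(x_0)\|^2/L$, and then invoke the strong-convexity estimate $f(x_0) - f^\star \le \|\nabla f(x_0)\|^2/(2\mu)$ to convert this into a multiplicative drop by a factor $1 - C\mu/L$. The starting point is the identity already recorded above, $\frac{\d}{\d t} f(X(t)) = -\frac{3}{t}\|\dot X\|^2 - \frac12 \frac{\d}{\d t}\|\dot X\|^2$. On $(0,T)$ both terms on the right are nonnegative — the first trivially, the second by the very definition of the speed restarting time $T$ — so integrating from $0$ to $T$ and using $\dot X(0) = 0$ gives $f(x_0) - f(X(T)) \ge \frac12 \|\dot X(T)\|^2$. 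It thus suffices to bound $\|\dot X(T)\|$ below by a constant times $\|\nabla f(x_0)\|/\sqrt{L}$.

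For that I would use the integral representation coming from the ODE via the integrating factor $t^3$: since $\frac{\d}{\d t}(t^3\dot X) = -t^3\nabla f(X)$, one gets $t^3\dot X(t) = -\frac{t^4}{4}\nabla f(x_0) - I(t)$, i.e. $\dot X(t) = -\frac{t}{4}\nabla f(x_0) - t^{-3}I(t)$, and likewise $\ddot X(t) = -\frac14\nabla f(x_0) - (\nabla f(X(t)) - \nabla f(x_0)) + 3t^{-4}I(t)$. Restrict to $t \le c_0/\sqrt{L}$ for a small universal constant $c_0$ to be fixed; then $1 - Lt^2/12 \ge \frac12$, so Lemma~\ref{lm:m_upper} gives $M(t) \le \|\nabla f(x_0)\|/2$, and the a priori bounds $\|\nabla f(X(t)) - \nabla f(x_0)\| \le LM(t)t^2/2$ and $\|I(t)\| \le LM(t)t^6/12$ show that the correction terms $t^{-3}I(t)$, $t^{-4}I(t)$ and $\nabla f(X(t)) - \nabla f(x_0)$ are each $O(c_0^2 \|\nabla f(x_0)\|)$ times the appropriate power of $t$. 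Two consequences follow. First, $\|\dot X(t)\| \ge \frac{t}{8}\|\nabla f(x_0)\|$ on this interval once $c_0$ is small. Second, expanding $\frac12\frac{\d}{\d t}\|\dot X\|^2 = \langle \dot X, \ddot X\rangle$ with the two displays above, the leading term is $\frac{t}{16}\|\nabla f(x_0)\|^2 > 0$ while every remaining term carries an extra factor of order $c_0^2$; choosing $c_0$ small enough that these sum to less than $\frac{t}{16}\|\nabla f(x_0)\|^2$ forces $\frac{\d}{\d t}\|\dot X(t)\|^2 > 0$ on all of $(0, c_0/\sqrt{L}]$, hence $T \ge c_0/\sqrt{L}$ by the definition of $T$. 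Since $\|\dot X\|$ is then nondecreasing on $(0,T)$, we conclude $\|\dot X(T)\| \ge \|\dot X(c_0/\sqrt{L})\| \ge \frac{c_0}{8\sqrt{L}}\|\nabla f(x_0)\|$.

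Putting the pieces together, $f(x_0) - f(X(T)) \ge \frac12\|\dot X(T)\|^2 \ge \frac{c_0^2}{128\,L}\|\nabla f(x_0)\|^2$; dividing by $f(x_0) - f^\star \le \frac{1}{2\mu}\|\nabla f(x_0)\|^2$ (the case $\nabla f(x_0) = 0$ being the excluded trivial one) yields $f(X(T)) - f^\star \le (1 - \frac{c_0^2\mu}{64\,L})(f(x_0) - f^\star)$, which is the claim with $C = c_0^2/64$. I expect the only genuine work to lie in the middle paragraph: the perturbative bookkeeping that controls all cross terms in $\langle \dot X, \ddot X\rangle$ simultaneously and pins down an explicit admissible $c_0$ — and in particular the lower bound $T \ge c_0/\sqrt{L}$, without which the argument has nothing to integrate against. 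Everything else is a one-line integration or a textbook strong-convexity inequality, and no attempt is made to optimize $C$.
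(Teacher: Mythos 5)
Your proof is correct and follows essentially the same route as the paper: both arguments rest on the lower bound $\|\dot X(t)\| \ge \tfrac{t}{8}\|\nabla f(x_0)\|$ for $t$ of order $1/\sqrt{L}$ (via Lemma~\ref{lm:m_upper} and the integral representation of $\dot X$), on the bound $T \ge c_0/\sqrt{L}$ (which is exactly the content of Lemma~\ref{lm:stopping}), and on the strong-convexity inequality $\|\nabla f(x_0)\|^2 \ge 2\mu(f(x_0)-f^\star)$ to convert the additive decrease into a multiplicative one. The only cosmetic difference is which term of the identity $\d f(X)/\d t = -\tfrac{3}{t}\|\dot X\|^2 - \tfrac12\,\d\|\dot X\|^2/\d t$ carries the estimate: the paper integrates the friction term $\tfrac{3}{t}\|\dot X\|^2$ over $(0,4/(5\sqrt{L}))$, whereas you integrate the kinetic term exactly and then bound $\|\dot X(T)\|$ from below using the monotonicity of the speed on $(0,T)$; both yield a decrease of order $\|\nabla f(x_0)\|^2/L$.
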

\begin{proof}
By Lemma \ref{lm:m_upper}, for $t < \sqrt{12/L}$ we have
\begin{equation}\nonumber
\left\| \dot X(t) + \frac{t}{4}\nabla f(x_0) \right\| = \frac{1}{t^3}\|I(t)\|\le \frac{LM(t)t^3}{12} \le \frac{L\norm{\nabla f(x_0)}t^3}{48(1 - Lt^2/12)},
\end{equation}
which yields
\begin{equation}\label{eq:close}
0 \le \frac{t}{4}\norm{\nabla f(x_0)} -  \frac{L\norm{\nabla f(x_0)}t^3}{48(1 - Lt^2/12)} \le \|\dot X(t)\| \le \frac{t}{4}\norm{\nabla f(x_0)} +  \frac{L\norm{\nabla f(x_0)}t^3}{48(1 - Lt^2/12)}.
\end{equation}
Hence, for $0 < t < 4/(5\sqrt{L})$ we get
\begin{multline}\nonumber
\frac{\d f(X)}{\d t} = -\frac{3}{t}\norm{\dot X}^2 - \frac{1}{2}\frac{\d}{\d t}\norm{\dot X}^2 \le -\frac{3}{t}\norm{\dot X}^2   \\
\le -\frac{3}{t} \left(\frac{t}{4}\norm{\nabla f(x_0)} -  \frac{L\norm{\nabla f(x_0)}t^3}{48(1 - Lt^2/12)}\right)^2 \le - C_1t\norm{\nabla f(x_0)}^2,
\end{multline}
where $C_1 > 0$ is an absolute constant and the second inequality follows from Lemma \ref{lm:stopping} in the appendix. Consequently,
\[
f\left(X(4/(5\sqrt{L}))\right) - f(x_0) \le \int_{0}^{\frac{4}{5\sqrt{L}}}-C_1u\norm{\nabla f(x_0)}^2\d u \le -\frac{C\mu}{L}(f(x_0)-f^\star),
\]
where $C = 16C_1/25$ and in the last inequality we use the $\mu$-strong convexity of $f$. Thus we have
\[
f\left( X \left(\frac{4}{5\sqrt{L}}\right)\right) - f^\star \le \left( 1 -\frac{C\mu}{L}\right)(f(x_0)-f^\star).
\]
To complete the proof, note that $f(X(T)) \le f(X(4/(5\sqrt{L})))$ by Lemma \ref{lm:stopping}.

\end{proof}

With each restarting reducing the error $f - f^\star$ by a constant a factor, we still need the following lemma to ensure sufficiently many restartings.
\begin{lemma}\label{lm:small}
There is a universal constant $\tilde C$ such that
\[
T \le \frac{4\exp\left(\tilde C L/\mu\right)}{5\sqrt{L}}.
\]
\end{lemma}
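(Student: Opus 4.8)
The plan is to exploit the tension between two monotonicity facts that hold on $(0,T)$: the speed $\norm{\dot X}$ is increasing (by the very definition of $T$), while $f(X(t))$ is decreasing. If $T$ were large, an ever-increasing speed would force $f(X(t))$ to drop below $f^\star$, which is impossible. To make this quantitative I will use the identity recorded just before \eqref{eq:ode_restart},
\[
\frac{\d f(X(t))}{\d t} = -\frac{3}{t}\norm{\dot X(t)}^2 - \frac{1}{2}\frac{\d \norm{\dot X(t)}^2}{\d t},
\]
which is valid for every $t > 0$. Fix $t_0 = 4/(5\sqrt L)$. If $T \le t_0$ there is nothing to prove, so assume $t_0 < T$. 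For any $t$ with $t_0 < t < T$, integrating the displayed identity over $[t_0,t]$ and discarding the nonnegative boundary term $\tfrac12(\norm{\dot X(t)}^2 - \norm{\dot X(t_0)}^2)$ (nonnegative since $\norm{\dot X}^2$ is increasing on $(0,T)$) gives
\[
f(x_0) - f^\star \ge f(X(t_0)) - f(X(t)) \ge \int_{t_0}^{t}\frac{3}{u}\norm{\dot X(u)}^2\,\d u \ge 3\norm{\dot X(t_0)}^2\,\ln\frac{t}{t_0},
\]
where the first inequality uses $f(X(t_0)) \le f(x_0)$ (monotonicity of $f(X(\cdot))$ on $[0,T)$) together with $f(X(t)) \ge f^\star$, and the last uses $\norm{\dot X(u)} \ge \norm{\dot X(t_0)}$ for $u \ge t_0$. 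Letting $t \uparrow T$ yields $\ln(T/t_0) \le (f(x_0)-f^\star)/(3\norm{\dot X(t_0)}^2)$; this incidentally also forces $T < \infty$.

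It then remains to bound $\norm{\dot X(t_0)}$ from below and $f(x_0) - f^\star$ from above. The lower bound is exactly inequality \eqref{eq:close} from the proof of Lemma \ref{lm:decayfast}: since $t_0 < \sqrt{12/L}$ and $Lt_0^2/12$ is a small absolute constant, \eqref{eq:close} gives $\norm{\dot X(t_0)} \ge c_0\,t_0\,\norm{\nabla f(x_0)}$ for a universal $c_0 > 0$ (one may take $c_0 = 67/284$), which is strictly positive because the standing convention excludes $x_0 = x^\star$, so $\nabla f(x_0) \ne 0$. For the upper bound, $\mu$-strong convexity yields the standard bound $\norm{\nabla f(x_0)}^2 \ge 2\mu(f(x_0) - f^\star)$. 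Substituting both,
\[
\ln\frac{T}{t_0} \le \frac{f(x_0)-f^\star}{3c_0^2 t_0^2\norm{\nabla f(x_0)}^2} \le \frac{1}{6c_0^2\mu\, t_0^2} = \frac{25L}{96\,c_0^2\,\mu},
\]
using $t_0^2 = 16/(25L)$. Exponentiating and setting $\tilde C = 25/(96 c_0^2)$ gives $T \le t_0\exp(\tilde C L/\mu) = \tfrac{4}{5\sqrt L}\exp(\tilde C L/\mu)$ with $\tilde C$ a universal constant, as claimed.

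The analytic content here is light — everything rests on the energy identity above plus the already-established estimate \eqref{eq:close} and strong convexity — so I do not anticipate a genuine obstacle. The only points that need care are bookkeeping ones: confirming that $f(X(\cdot))$ is nonincreasing on the whole half-open interval $[0,T)$ (not merely on open subintervals), that the boundary term $\tfrac12(\norm{\dot X(t)}^2-\norm{\dot X(t_0)}^2)$ genuinely has the right sign for all $t \in (t_0,T)$ because $T$ is by definition the first instant at which $\norm{\dot X}^2$ stops increasing, and that passing to the limit $t \uparrow T$ rather than evaluating at $t = T$ cleanly sidesteps any a priori worry about whether $T$ is finite.
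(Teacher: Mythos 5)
Your proof is correct and follows essentially the same route as the paper's: both integrate the identity $\d f(X)/\d t = -\tfrac{3}{t}\|\dot X\|^2 - \tfrac12 \d\|\dot X\|^2/\d t$ over $[4/(5\sqrt L),\,T]$, use the monotonicity of $\|\dot X\|$ on $(0,T)$ to extract the logarithmic factor, and then combine the lower bound \eqref{eq:close} on $\|\dot X(4/(5\sqrt L))\|$ with the strong-convexity bound $\|\nabla f(x_0)\|^2 \ge 2\mu(f(x_0)-f^\star)$. The only cosmetic difference is that you discard the nonpositive $-\tfrac12\d\|\dot X\|^2/\d t$ contribution after integrating rather than pointwise.
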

\begin{proof}
For $4/(5\sqrt{L}) \le t \le T$, we have $\frac{\d f(X)}{\d t} \le -\frac{3}{t}\norm{\dot X(t)}^2 \le -\frac{3}{t}\norm{\dot X(4/(5\sqrt{L}))}^2$, which implies
\[
f(X(T)) - f(x_0) \le -\int_{\frac{4}{5\sqrt{L}}}^T\frac{3}{t}\norm{\dot X(4/(5\sqrt{L}))}^2 \d t
= -3\norm{\dot X(4/(5\sqrt{L}))}^2\log\frac{5T\sqrt{L}}{4}.
\]
Hence, we get an upper bound for $T$,
\begin{equation}\nonumber
T \le \frac{4}{5\sqrt{L}}\exp\Big{(}\frac{f(x_0) - f(X(T))}{3\norm{\dot X(4/(5\sqrt{L}))}^2} \Big{)}\le \frac{4}{5\sqrt{L}}\exp\Big{(}\frac{f(x_0) - f^\star}{3\norm{\dot X(4/(5\sqrt{L}))}^2} \Big{)}.\\
\end{equation}
Plugging $t = 4/(5\sqrt{L})$ into \eqref{eq:close} gives $\norm{\dot X(4/(5\sqrt{L}))} \geq \frac{C_1}{\sqrt{L}}\norm{\nabla f(x_0)}$ for some universal constant $C_1 > 0$. Hence, from the last display we get
\begin{equation*}
T \le\frac{4}{5\sqrt{L}}\exp\left( \frac{L(f(x_0)-f^\star)}{3C_1^2\norm{\nabla f(x_0)}^2} \right) \le \frac{4}{5\sqrt{L}}\exp\frac{L}{6C_1^2\mu}.
\end{equation*}
\end{proof}


Now, we are ready to prove Theorem \ref{exp} by applying Lemmas \ref{lm:decayfast} and \ref{lm:small}.
\begin{proof}
Note that Lemma \ref{lm:small} asserts, by time $t$ at least $m := \lfloor 5t\sqrt{L}\e{-\tilde C L/\mu}/4\rfloor$
restartings have occurred for $\xsr$. Hence, recursively applying Lemma \ref{lm:decayfast}, we have
\begin{align*}
f(\xsr(t)) - f^\star &\le f \left( \xsr(T_1 + \cdots + T_m) \right) - f^\star \\
&\le (1 - C\mu/L) \left(f \left( \xsr(T_1 + \cdots + T_{m-1}) \right) - f^\star \right)\\
& \le \cdots \le \cdots \\
&\le (1 - C\mu/L)^m (f(x_0) - f^\star)  \le \e{- C\mu m/L} (f(x_0) - f^\star) \\ 
&\le c_1\e{-c_2t\sqrt{L}}(f(x_0) - f^\star) \le \frac{c_1L\|x_0 - x^\star\|^2}{2}\e{-c_2t\sqrt{L}},
\end{align*}
where $c_1 = \exp(C\mu/L)$ and $c_2 = 5C\mu e^{-\tilde C \mu/L}/(4L)$.
\end{proof}

In closing, we remark that we believe that estimate in Lemma \ref{lm:decayfast} is tight, while not for Lemma \ref{lm:small}. Thus we conjecture that for a large class of $f\in\mathcal{S}_{\mu,L}$, if not all, $T = O(\sqrt{L}/\mu)$. If this is true, the exponent constant $c_2$ in Theorem \ref{exp} can be significantly improved.

\subsection{Numerical Examples}\label{sec:numerical-examples}
Below we present a discrete analog to the restarted scheme. There,
$k_{\min}$ is introduced to avoid having consecutive restarts that are
too close. To compare the performance of the restarted scheme with the
original \eqref{eq:nesterov-scheme}, we conduct four simulation
studies, including both smooth and non-smooth objective
functions. Note that the computational costs of the restarted and
non-restarted schemes are the same.

\begin{algorithm}[H]
\caption{Speed Restarting Nesterov's Scheme}
\label{alg:restart}
\begin{algorithmic}
\STATE \textbf{input:} $x_0\in\R^n, y_0 = x_0, x_{-1} = x_0, 0 < s \le 1/L, k_{\max}\in\mathbb{N}^+$ and $k_{\min}\in\mathbb{N}^+$
\STATE $j \gets 1$
\FOR{$k = 1$ to $k_{\max}$}
\STATE $x_k \gets \mbox{argmin}_x(\frac{1}{2s}\|x - y_{k-1} + s\nabla g(y_{k-1})\|^2 + h(x))$
\STATE $y_k \gets x_k + \frac{j-1}{j+2}(x_k - x_{k-1})$
\IF{ $\|x_k - x_{k-1}\| < \|x_{k-1} - x_{k-2}\|$ {\bf{and}} $j \ge k_{\min}$}
\STATE $j \gets 1$
\ELSE
\STATE $j \gets j + 1$
\ENDIF
\ENDFOR
\end{algorithmic}
\end{algorithm}

\noindent{\bf Quadratic.}~ $f(x) = \half x^TAx + b^Tx$ is a strongly
convex function, in which $A$ is a $500\times 500$ random positive
definite matrix and $b$ a random vector. The eigenvalues of $A$ are
between $0.001$ and $1$. The vector $b$ is generated as \iid Gaussian
random variables with mean 0 and variance 25.

\vspace{0.4em}
\noindent{\bf Log-sum-exp.}~
\begin{equation}\nonumber
f(x) = \rho\log\Big{[}\sum_{i=1}^m\exp((a_i^Tx - b_i)/\rho)\Big{]},
\end{equation}
where $n = 50, m = 200, \rho = 20$. The matrix $A = (a_{ij})$ is a
random matrix with \iid standard Gaussian entries, and $b = (b_i)$ has \iid Gaussian entries with mean $0$ and variance $2$. This function
is not strongly convex.

\vspace{0.4em}
\noindent{\bf Matrix completion.}~ $f(X) = \half \|X_{\mathrm{obs}} -
M_{\mathrm{obs}}\|_F^2 + \lambda\|X\|_*$, in which the ground truth
$M$ is a rank-5 random matrix of size $300\times 300$. The
regularization parameter is set to $\lambda = 0.05$. The 5 singular
values of $M$ are $1, \ldots, 5$. The observed set is independently
sampled among the $300\times 300$ entries so that 10\% of the entries
are actually observed.

\begin{figure}[!htp]
\centering
\begin{subfigure}[b]{0.48\textwidth}
\includegraphics[width=\textwidth, height=1.7in]{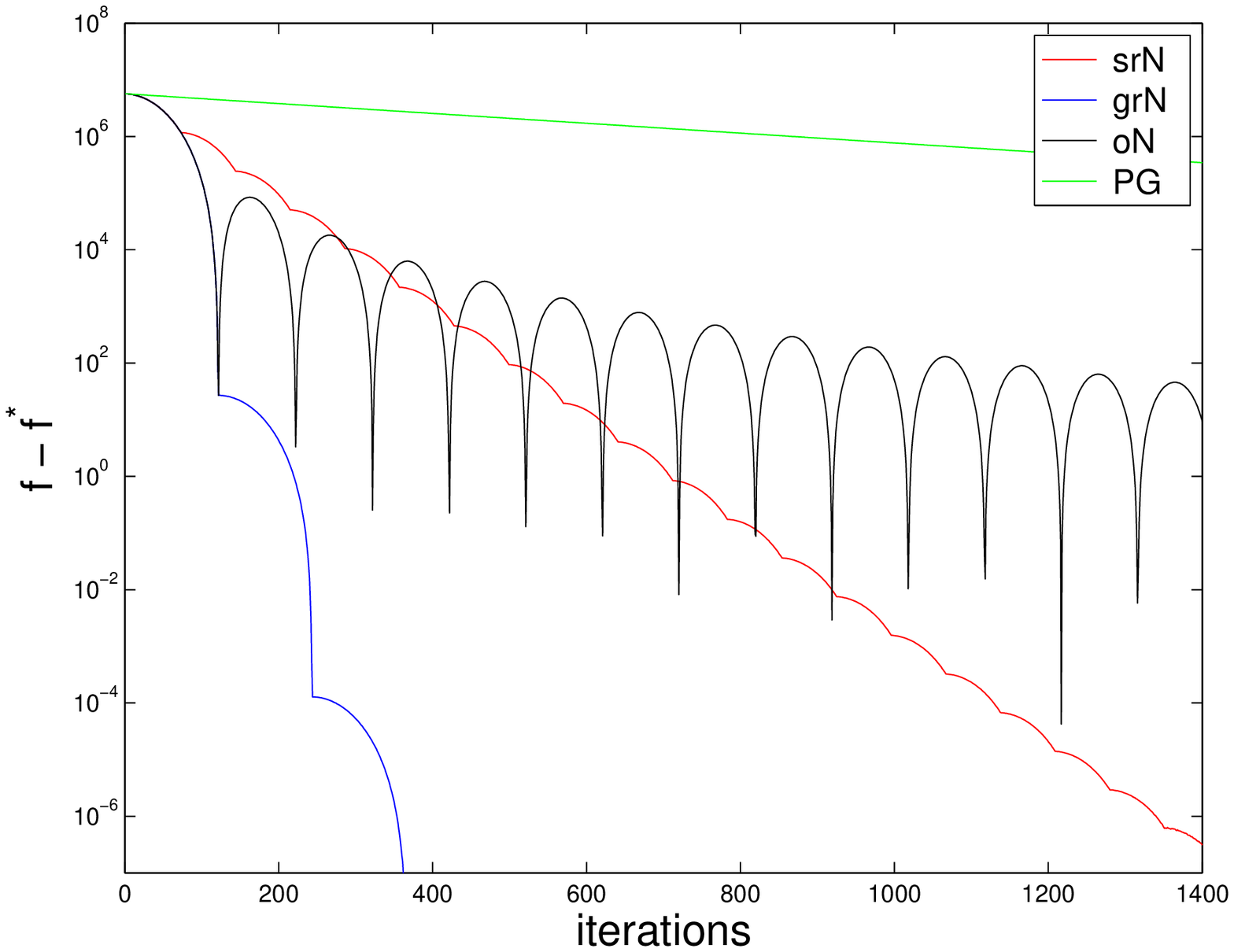}
\caption{$\mbox{min}~\half x^TAx + bx$.}
\label{fig:compare_restarting_quad}
\end{subfigure}
\hfill
\begin{subfigure}[b]{0.48\textwidth}
\includegraphics[width=\textwidth, height=1.7in]{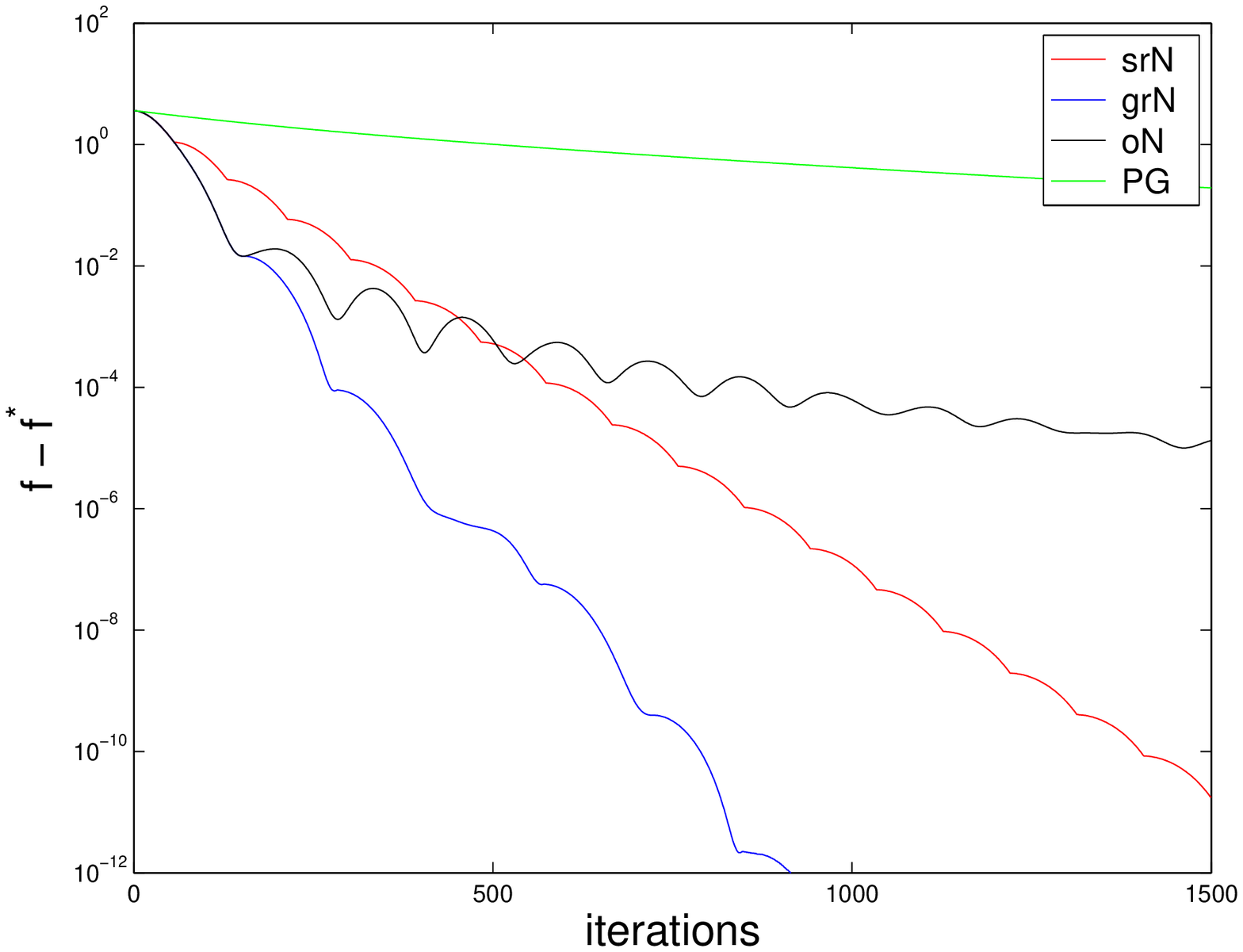}
\caption{$\mbox{min}~\rho\log(\sum_{i=1}^m\exp((a_i^Tx - b_i)/\rho))$.}
\label{fig:compare_restarting_expo}
\end{subfigure}
\begin{subfigure}[b]{0.48\textwidth}
\includegraphics[width=\textwidth, height=1.7in]{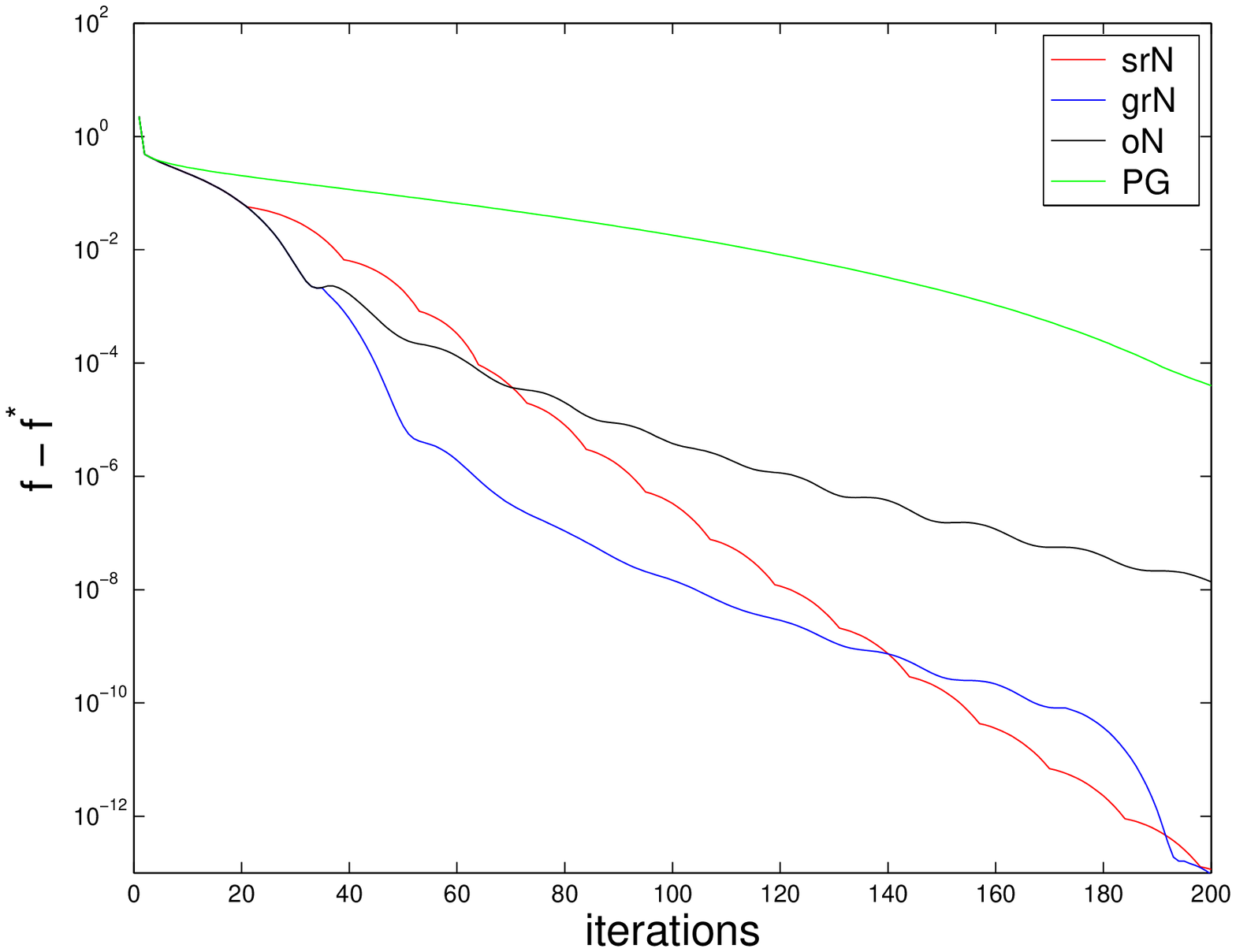}
\caption{$\mbox{min}~\half \|X_{\mathrm{obs}} - M_{\mathrm{obs}}\|_{\mathrm{F}}^2 + \lambda\|X\|_*$.}
\label{fig:compare_restarting_matrix}
\end{subfigure}
\hfill
\begin{subfigure}[b]{0.48\textwidth}
\includegraphics[width=\textwidth, height=1.7in]{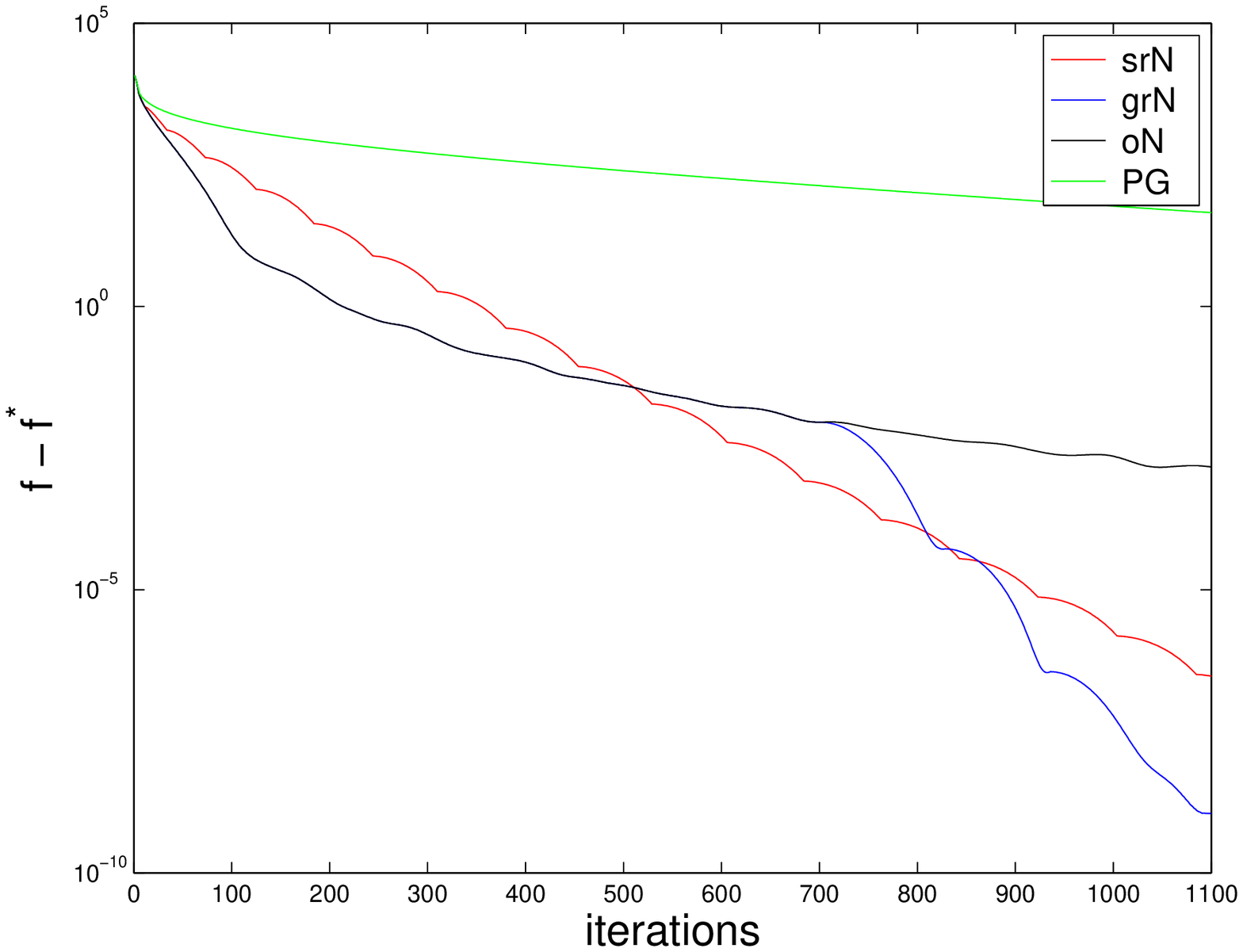}
\caption{$\mbox{min}~\half\|Ax-b\|^2\quad \mbox{s.t.}~ \|x\|_1\le C$.}
\label{fig:compare_restarting_lasso}
\end{subfigure}
\\\vspace{0.6em}
\begin{subfigure}[b]{0.48\textwidth}
\includegraphics[width=\textwidth, height=1.7in]{original_speed_slope.eps}
\caption{$\mbox{min}~\half \|Ax - b\|^2 + \sum_{i=1}^p \lambda_i |x|_{(i)}$.}
\label{fig:compare_restarting_slope}
\end{subfigure}
\hfill
\begin{subfigure}[b]{0.48\textwidth}
\includegraphics[width=\textwidth, height=1.7in]{original_speed_duallasso.eps}
\caption{$\mbox{min}~\half \|Ax - b\|^2 + \lambda \|x\|_1$.}
\label{fig:compare_restarting_duallasso}
\end{subfigure}
\\\vspace{0.6em}
\begin{subfigure}[b]{0.48\textwidth}
\includegraphics[width=\textwidth, height=1.7in]{original_speed_reglogistic.eps}
\caption{$\mbox{min}~ \sum_{i=1}^n - y_i a_i^T x + \log(1 + \mathrm{e}^{a_i^T x}) + \lambda \|x\|_1$.}
\label{fig:compare_restarting_reglogistic}
\end{subfigure}
\hfill
\begin{subfigure}[b]{0.48\textwidth}
\includegraphics[width=\textwidth, height=1.7in]{original_speed_logistic.eps}
\caption{$\mbox{min}~ \sum_{i=1}^n - y_i a_i^T x + \log(1 + \mathrm{e}^{a_i^T x})$.}
\label{fig:compare_restarting_logistic}
\end{subfigure}
\caption{Numerical performance of speed restarting (srN), gradient restarting (grN), the original Nesterov's scheme (oN) and the proximal gradient (PG).}
\label{fig:compare_restarting}
\end{figure}

\noindent{\bf Lasso in $\ell_1$--constrained form with large sparse
  design.}~ $f(x) = \half\|Ax-b\|^2\quad \mbox{s.t.}~ \|x\|_1\le \delta$,
where $A$ is a $5000\times 50000$ random sparse matrix with nonzero
probability $0.5\%$ for each entry and $b$ is generated as $b = Ax^0 +
z$. The nonzero entries of $A$ independently follow the Gaussian
distribution with mean 0 and variance $0.04$. The signal $x^0$ is a
vector with 250 nonzeros and $z$ is \iid standard Gaussian noise. The
parameter $\delta$ is set to $\|x^0\|_1$.

\vspace{0.4em}
\noindent{\bf Sorted $\ell_1$ penalized estimation.}~ $f(x) = \half \|Ax - b\|^2 + \sum_{i=1}^p \lambda_i |x|_{(i)}$, where $|x|_{(1)} \ge \cdots \ge |x|_{(p)}$ are the order statistics of $|x|$. This is a recently introduced testing and estimation procedure \citep{slope}. The design $A$ is a $1000 \times 10000$ Gaussian random matrix, and $b$ is generated as $b = Ax^0 +
z$ for 20-sparse $x^0$ and Gaussian noise $z$. The penalty sequence is set to $\lambda_i = 1.1\Phi^{-1}(1-0.05i/(2p))$.

\vspace{0.4em}
\noindent{\bf Lasso.}~ $f(x) = \half\|Ax-b\|^2 + \lambda \|x\|_1$, where $A$ is a $1000\times 500$ random matrix and $b$ is given as $b = Ax^0 +
z$ for 20-sparse $x^0$ and Gaussian noise $z$. We set $\lambda = 1.5\sqrt{2\log p}$.

\vspace{0.4em}
\noindent{\bf $\ell_1$-regularized logistic regression.}~ $f(x) = \sum_{i=1}^n - y_i a_i^T x + \log(1 + \mathrm{e}^{a_i^T x}) + \lambda \|x\|_1$, where the setting is the same as in Figure \ref{fig:larger_r_reglog}. The results are presented in Figure \ref{fig:compare_restarting_reglogistic}.

\vspace{0.4em}
\noindent{\bf Logistic regression with large sparse design.}~ $f(x) = \sum_{i=1}^n - y_i a_i^T x + \log(1 + \mathrm{e}^{a_i^T x})$, in which $A = (a_1, \ldots, a_n)^T$ is a $10^7 \times 20000$ sparse random matrix with nonzero probability $0.1\%$ for each entry, so there are roughly $2 \times 10^8$ nonzero entries in total. To generate the labels $y$, we set $x^0$ to be \iid $\mathcal{N}(0, 1/4)$. The plot is Figure \ref{fig:compare_restarting_logistic}.

In these examples, $k_{\min}$ is set to be 10 and the step sizes are fixed to be $1/L$. If the objective is in composite form,  the Lipschitz bound applies to the smooth part. Figure \ref{fig:compare_restarting} presents the performance of the speed restarting scheme, the gradient restarting scheme, the original Nesterov's scheme and the proximal gradient method. The objective functions include strongly convex, non-strongly convex and non-smooth functions, violating the assumptions in Theorem \ref{exp}. Among all the examples, it is interesting to note that both speed restarting scheme empirically exhibit linear convergence by significantly reducing bumps in the objective values. This leaves us an open problem of whether there exists provable linear convergence rate for the gradient restarting scheme as in Theorem \ref{exp}. It is also worth pointing out that compared with gradient restarting, the speed restarting scheme empirically exhibits more stable linear convergence rate.


\section{Discussion}
This paper introduces a second-order ODE and accompanying tools for characterizing Nesterov's accelerated gradient method. This ODE is applied to study variants of Nesterov's scheme and is capable of interpreting some empirically observed phenomena, such as oscillations along the trajectories. Our approach suggests (1) a large family of generalized Nesterov's schemes that are all guaranteed to converge at the rate $O(1/k^2)$, and (2) a restarting scheme provably achieving a linear convergence rate whenever $f$ is strongly convex.

In this paper, we often utilize ideas from continuous-time ODEs, and then apply these ideas to discrete schemes. The translation, however, involves parameter tuning and tedious calculations. This is the reason why a general theory mapping properties of ODEs into corresponding properties for discrete updates would be a welcome advance. Indeed, this would allow researchers to only study the simpler and more user-friendly ODEs. 

As evidenced by many examples, the viewpoint of regarding the ODE as a surrogate for Nesterov's scheme would allow a new perspective for studying accelerated methods in optimization. The discrete scheme and the ODE are closely connected by the exact mapping between the coefficients of momentum (e.g.~$(k-1)/(k+2)$) and velocity (e.g.~$3/t$). The derivations of generalized Nesterov's schemes and the speed restarting scheme are both motivated by trying a different velocity coefficient, in which the surprising phase transition at 3 is observed. Clearly, such alternatives are endless, and we expect this will lead to findings of many discrete accelerated schemes. In a different direction, a better understanding of the trajectory of the ODEs, such as curvature, has the potential to be helpful in deriving appropriate stopping criteria for termination, and choosing step size by backtracking.


\acks{W.~S.~was partially supported by a General Wang Yaowu Stanford Graduate Fellowship. S.~B.~was partially supported by DARPA XDATA. E.~C.~was partially supported by AFOSR under grant FA9550-09-1-0643, by NSF under grant CCF-0963835, and by the Math + X Award from the Simons Foundation. We would like to thank Carlos Sing-Long, Zhou Fan, and Xi Chen for helpful discussions about parts of this paper. We would also like to thank the associate editor and two reviewers for many constructive comments that improved the presentation of the paper.
}



\appendix
\section*{Appendix A. Proof of Theorem \ref{thm:regularity}}
\label{app:a}
The proof is divided into two parts, namely, existence and uniqueness.
\begin{lemma}\label{lm:exist}
For any $f\in\mathcal{F}_{\infty}$ and any $x_0\in\R^n$, the ODE \eqref{key} has at least one solution $X$ in $C^{2}(0,\infty)\cap C^{1}[0,\infty)$. 
\end{lemma}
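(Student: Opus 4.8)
The plan is to establish existence by an approximation-and-compactness argument, since the coefficient $3/t$ blows up at $t=0$ and classical Picard--Lindel\"of does not apply at the initial time. First I would regularize the ODE: for each small $\delta > 0$, consider the initial value problem $\ddot X_\delta + \frac{3}{\max(t,\delta)}\dot X_\delta + \nabla f(X_\delta) = 0$ with $X_\delta(0) = x_0$, $\dot X_\delta(0) = 0$. On $[0,\delta]$ the coefficient is the constant $3/\delta$, so the equation is a genuine (autonomous-in-coefficient) second-order ODE with Lipschitz right-hand side (since $\nabla f$ is Lipschitz, as $f \in \mathcal{F}_\infty$ means $f\in\mathcal{F}_L$ for some $L$), hence has a unique local $C^2$ solution; for $t > \delta$ the coefficient $3/t$ is smooth and bounded, so the solution extends. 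A global \emph{a priori} bound — obtained below — rules out finite-time blow-up, so each $X_\delta$ is defined on all of $[0,\infty)$.

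Next I would derive uniform bounds on the family $\{X_\delta\}$ on any compact interval $[0,T]$, independent of $\delta$. The natural tool is an energy estimate: using $\mathcal{E}_\delta(t) = \tfrac12\|\dot X_\delta\|^2 + f(X_\delta)$, one computes $\dot{\mathcal{E}}_\delta = -\frac{3}{\max(t,\delta)}\|\dot X_\delta\|^2 \le 0$, so $\mathcal{E}_\delta(t) \le \mathcal{E}_\delta(0) = f(x_0)$. Since $f$ is bounded below (it is convex with a minimizer, or at worst one works with $f - f^\star$), this bounds $\|\dot X_\delta(t)\|$ uniformly, which in turn bounds $\|X_\delta(t) - x_0\| \le \int_0^t \|\dot X_\delta\| \le Ct$ on $[0,T]$, and then bounds $\|\nabla f(X_\delta(t))\|$ uniformly (Lipschitz gradient on a bounded set). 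Consequently $\{X_\delta\}$ is uniformly bounded and uniformly Lipschitz (equicontinuous) on $[0,T]$, and likewise $\{\dot X_\delta\}$ is uniformly bounded; moreover $\{\dot X_\delta\}$ is equicontinuous on $[\eta, T]$ for any $\eta > 0$ because $\ddot X_\delta = -\frac{3}{\max(t,\delta)}\dot X_\delta - \nabla f(X_\delta)$ is uniformly bounded there.

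Then I would apply the Arzel\`a--Ascoli theorem: extract a subsequence $\delta_j \to 0$ such that $X_{\delta_j} \to X$ uniformly on $[0,T]$ and (by a diagonal argument over $\eta \to 0$ and $T \to \infty$) $\dot X_{\delta_j} \to \dot X$ uniformly on compact subsets of $(0,\infty)$. Passing to the limit in the integral form of the equation, $\dot X_{\delta_j}(t) = -\int_{t_0}^t \big(\frac{3}{\max(u,\delta_j)}\dot X_{\delta_j}(u) + \nabla f(X_{\delta_j}(u))\big)\d u + \dot X_{\delta_j}(t_0)$ for $t_0 > 0$, shows the limit $X$ satisfies $\ddot X + \frac3t \dot X + \nabla f(X) = 0$ on $(0,\infty)$, so $X \in C^2((0,\infty))$. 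Finally, continuity of $X$ at $0$ with $X(0) = x_0$ follows from uniform convergence, and $\dot X(0) = 0$ follows from the asymptotic estimate: from the bound $\|\dot X_\delta(t)\| \le Ct$ near $0$ (which comes from $\dot X_\delta(t) = -\int_0^t \big(\frac{3}{\max(u,\delta)}\dot X_\delta + \nabla f(X_\delta)\big)\d u$ together with a Gr\"onwall-type argument showing $\|\dot X_\delta(t)\| \lesssim \|\nabla f(x_0)\| t$), we get $X \in C^1([0,\infty))$ with vanishing initial velocity.

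The main obstacle is the behavior near $t = 0$: one must show the $\delta$-uniform control of $\dot X_\delta$ does not degenerate as $\delta \to 0$, and in particular that $\dot X_\delta(t)/t$ stays bounded near the origin (this is exactly the quantity $M(t)$ appearing later in Lemma \ref{lm:m_upper}). Handling this requires a careful fixed-point or Gr\"onwall estimate on a small interval $[0, t^\star]$ with $t^\star$ depending only on $L$, treating $\frac{3}{\max(t,\delta)}\dot X_\delta$ as the dominant linear term and $\nabla f(X_\delta) \approx \nabla f(x_0)$ as a bounded forcing term; everything else (the compactness extraction and the limit passage) is routine.
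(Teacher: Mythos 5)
Your outline is essentially the paper's proof: regularize the damping coefficient to $3/\max(t,\delta)$, obtain $\delta$-uniform bounds, extract a limit by Arzel\`a--Ascoli, pass to the limit in the equation away from $t=0$, and recover $\dot X(0)=0$ from a bound of the form $\|\dot X_\delta(t)\|\lesssim t$. The one step I would push back on is your uniform a priori bound. You derive it from the energy $\mathcal{E}_\delta=\tfrac12\|\dot X_\delta\|^2+f(X_\delta)$ and the assertion that $f$ is bounded below; but $f\in\mathcal{F}_\infty$ only means $f$ is convex with Lipschitz gradient, and such $f$ need not be bounded below nor have a minimizer (e.g.\ $f(x)=\langle a,x\rangle$), so $\mathcal{E}_\delta(t)\le f(x_0)$ does not by itself bound $\|\dot X_\delta\|$. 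The paper avoids this entirely: it bounds $M_\delta(t)=\sup_{u\le t}\|\dot X_\delta(u)\|/u$ by a bootstrap that uses only $\|\nabla f(X_\delta(u))-\nabla f(x_0)\|\le \tfrac12 LM_\delta(u)u^2$ together with the explicit integral representations $\dot X_\delta(t)\e{3t/\delta}=-\int_0^t\nabla f(X_\delta)\e{3u/\delta}\,\d u$ on $[0,\delta]$ and $\d(t^3\dot X_\delta)/\d t=-t^3\nabla f(X_\delta)$ beyond $\delta$, yielding $M_\delta(t)\le C\|\nabla f(x_0)\|$ on an interval $[0,\sqrt{6/L}]$ whose length depends only on $L$; this single estimate delivers equicontinuity, uniform boundedness, and the vanishing initial velocity all at once. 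You do in fact sketch exactly this Gr\"onwall/bootstrap on $\|\dot X_\delta(t)\|/t$ in your closing paragraph, so the fix is to promote that estimate from a remark to the actual mechanism and drop the energy argument (or restrict it to the case where $f^\star$ is attained). Your limit passage via the integral form of the equation from a base point $t_0>0$ is a legitimate alternative to the paper's identification of the limit with the local solution via well-posedness away from the origin.
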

Below, some preparatory lemmas are given before turning to the proof of this lemma. To begin with, for any $\delta > 0$ consider the smoothed ODE
\begin{equation}\label{eq:approx}
\dder{X} + \frac{3}{\max(\delta, t)}\der{X} + \nabla f(X) = 0
\end{equation}
with $X(0) = x_0, \der{X(0)} = 0$. Denoting by $Z = \der{X}$, then \eqref{eq:approx} is equivalent to
\begin{equation}\nonumber
\frac{\d}{\d t}
\begin{pmatrix}
X\\
Z
\end{pmatrix}
=
\begin{pmatrix}
Z\\
-\frac{3}{\max(\delta, t)}Z - \nabla f(X) 
\end{pmatrix}
\end{equation}
with $X(0) = x_0, Z(0) = 0$. As functions of $(X, Z)$, both $Z$ and $-3Z/\max(\delta, t) - \nabla f(X))$ are $\max(1, L) + 3/\delta$-Lipschitz continuous. Hence by standard ODE theory, \eqref{eq:approx} has a unique global solution in $C^{2}[0,\infty)$, denoted by $X_{\delta}$. Note that $\dder{X_{\delta}}$ is also well defined at $t=0$. Next, introduce $M_{\delta}(t)$ to be the supremum of $\norm{\der{X}_{\delta}(u)}/u$ over $u\in (0, t]$. It is easy to see that $M_{\delta}(t)$ is finite because $\norm{\der{X}_{\delta}(u)}/u = (\norm{\der{X}_{\delta}(u) - \der{X}_{\delta}(0)})/u = \norm{\dder{X_{\delta}}(0)} + o(1)$ for small $u$. We give an upper bound for $M_{\delta}(t)$ in the following lemma.
\begin{lemma}\label{lm:m-estimate}
For $\delta < \sqrt{6/L}$, we have
\begin{equation}\nonumber
M_{\delta}(\delta) \le \frac{\norm{\nabla f(x_0)}}{1 - L\delta^2/6}.
\end{equation}
\end{lemma}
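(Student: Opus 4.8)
The plan is to exploit that on the interval $[0,\delta]$ the coefficient $3/\max(\delta,t)$ is the \emph{constant} $3/\delta$, so that on this interval \eqref{eq:approx} reads $\ddot{X}_\delta + \frac{3}{\delta}\dot{X}_\delta = -\nabla f(X_\delta)$ with $X_\delta(0) = x_0$ and $\dot{X}_\delta(0) = 0$. Multiplying by the integrating factor $\mathrm{e}^{3t/\delta}$ collapses the left-hand side to $\frac{\d}{\d t}\big(\mathrm{e}^{3t/\delta}\dot{X}_\delta(t)\big)$; integrating from $0$ to $t$ and using the vanishing initial velocity gives $\mathrm{e}^{3t/\delta}\dot{X}_\delta(t) = -\int_0^t \mathrm{e}^{3u/\delta}\nabla f(X_\delta(u))\,\d u$. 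Since $\mathrm{e}^{3(u-t)/\delta}\le 1$ whenever $0\le u\le t$, taking norms yields the clean estimate $\norm{\dot{X}_\delta(t)} \le \int_0^t \norm{\nabla f(X_\delta(u))}\,\d u$ for all $t\in(0,\delta]$.

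Next I would feed the definition of $M_\delta$ into this inequality. For $u\in(0,\delta]$ one has $\norm{X_\delta(u) - x_0} = \norm{\int_0^u \dot{X}_\delta(v)\,\d v} \le \int_0^u v\cdot\frac{\norm{\dot{X}_\delta(v)}}{v}\,\d v \le \frac{M_\delta(\delta)}{2}\,u^2$, and hence, by the $L$-Lipschitz continuity of $\nabla f$, $\norm{\nabla f(X_\delta(u))} \le \norm{\nabla f(x_0)} + \frac{L\,M_\delta(\delta)}{2}\,u^2$. Substituting this and integrating gives
\[
\norm{\dot{X}_\delta(t)} \le \norm{\nabla f(x_0)}\,t + \frac{L\,M_\delta(\delta)}{6}\,t^3, \qquad 0 < t \le \delta .
\]
Dividing by $t$ and taking the supremum over $t\in(0,\delta]$ then produces the self-referential bound $M_\delta(\delta) \le \norm{\nabla f(x_0)} + \frac{L\delta^2}{6}\,M_\delta(\delta)$. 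Since $M_\delta(\delta)$ is finite (as already observed, $\norm{\dot{X}_\delta(u)}/u = \norm{\ddot{X}_\delta(0)} + o(1)$ as $u\to 0$ and $\dot{X}_\delta$ is continuous on $[0,\delta]$) and $\delta < \sqrt{6/L}$ forces $L\delta^2/6 < 1$, we may move the last term to the left and divide by the positive quantity $1 - L\delta^2/6$, obtaining $M_\delta(\delta) \le \norm{\nabla f(x_0)}/(1 - L\delta^2/6)$, as claimed.

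The step I expect to matter most is the choice of integrating factor. Bounding the damping term head-on, via the triangle inequality in $\dot{X}_\delta(t) = -\frac{3}{\delta}\big(X_\delta(t)-x_0\big) - \int_0^t\nabla f(X_\delta(u))\,\d u$, would introduce a term of order $M_\delta(\delta)$ with coefficient $\frac{3}{2}$ after division by $t$, which is too large to absorb into the left-hand side; routing the damping through $\mathrm{e}^{3t/\delta}$ instead turns it into a contraction. Everything after that first estimate is a routine bootstrap, legitimate precisely because $M_\delta(\delta)$ is known in advance to be finite.
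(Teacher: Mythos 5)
Your proof is correct and follows essentially the same route as the paper's: the same integrating factor $\e{3t/\delta}$ on $[0,\delta]$, the same Lipschitz bound $\norm{\nabla f(X_\delta(u))-\nabla f(x_0)}\le \tfrac12 LM_\delta(u)u^2$ (the paper's Lemma~\ref{lm:lipsch-gra}), and the same self-referential inequality $M_\delta(\delta)\le \norm{\nabla f(x_0)}+\tfrac{L\delta^2}{6}M_\delta(\delta)$ rearranged using the finiteness of $M_\delta$. The only cosmetic difference is that you discard the exponential weight via $\e{3(u-t)/\delta}\le 1$ before splitting off $\nabla f(x_0)$, whereas the paper splits first and keeps the weight inside the integrals; both yield identical bounds.
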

The proof of Lemma \ref{lm:m-estimate} relies on a simple lemma.
\begin{lemma}\label{lm:lipsch-gra}
For any $u > 0$, the following inequality holds
\begin{equation}\nonumber
\norm{\nabla f(X_{\delta}(u)) - \nabla f(x_0)} \le \half LM_{\delta}(u)u^2.
\end{equation}
\end{lemma}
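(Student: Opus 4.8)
The statement is an immediate consequence of the $L$-Lipschitz continuity of $\nabla f$ together with the fundamental theorem of calculus and the very definition of $M_\delta$. The plan is to first reduce the claimed gradient bound to a bound on the displacement $\|X_\delta(u) - x_0\|$, and then control that displacement by integrating the velocity, using that $\|\dot X_\delta(v)\|/v$ is uniformly bounded by $M_\delta(u)$ on $(0,u]$.

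\textbf{Step 1 (reduce to displacement).} Since $f\in\mathcal{F}_L$, the gradient $\nabla f$ is $L$-Lipschitz, so
\[
\norm{\nabla f(X_\delta(u)) - \nabla f(x_0)} \le L\norm{X_\delta(u) - x_0}.
\]
It therefore suffices to show $\norm{X_\delta(u) - x_0}\le \tfrac12 M_\delta(u)u^2$.

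\textbf{Step 2 (integrate the velocity).} Because $X_\delta\in C^1[0,\infty)$ with $X_\delta(0)=x_0$, we may write $X_\delta(u) - x_0 = \int_0^u \dot X_\delta(v)\,\d v$, so that
\[
\norm{X_\delta(u) - x_0}\le \int_0^u \norm{\dot X_\delta(v)}\,\d v = \int_0^u v\,\frac{\norm{\dot X_\delta(v)}}{v}\,\d v \le M_\delta(u)\int_0^u v\,\d v = \frac{M_\delta(u)u^2}{2},
\]
where the second inequality uses the definition of $M_\delta(u)$ as the supremum of $\norm{\dot X_\delta(v)}/v$ over $v\in(0,u]$. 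Combining with Step 1 gives the claimed inequality.

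\textbf{Main obstacle.} There is essentially no difficulty here; the only point that needs a word is the finiteness (and hence legitimacy) of the bound $\norm{\dot X_\delta(v)}/v\le M_\delta(u)$ near $v=0$, which is already justified in the discussion preceding the lemma via $\norm{\dot X_\delta(v)}/v = \norm{\ddot X_\delta(0)} + o(1)$ as $v\to 0$. The integrand in Step 2 is thus bounded on $(0,u]$ and the computation $\int_0^u v\,\d v = u^2/2$ finishes the proof.
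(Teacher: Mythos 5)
Your proof is correct and follows essentially the same route as the paper: Lipschitz continuity of $\nabla f$ reduces the claim to a bound on $\norm{X_\delta(u)-x_0}$, which is then controlled by writing the displacement as $\int_0^u \dot X_\delta(v)\,\d v$ and bounding $\norm{\dot X_\delta(v)} = v\cdot\norm{\dot X_\delta(v)}/v \le v\,M_\delta(u)$. Your added remark on the finiteness of $M_\delta$ near $v=0$ matches the discussion preceding the lemma in the paper.
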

\begin{proof}
By Lipschitz continuity,
\begin{equation*}
\norm{\nabla f(X_{\delta}(u)) - \nabla f(x_0)} \le L\norm{X_{\delta}(u) - x_0} = \Big{\|}\int_0^u\der{X_{\delta}}(v)\d v\Big{\|}\le \int_0^uv\frac{\norm{\der{X_{\delta}}(v)}}{v}\d v \le \half LM_{\delta}(u)u^2.
\end{equation*}
\end{proof}

Next, we prove Lemma \ref{lm:m-estimate}.
\begin{proof}
For $0 < t \le \delta$, the smoothed ODE takes the form
$$\dder{X_{\delta}} + \frac{3}{\delta}\der{X_{\delta}} + \nabla f(X_{\delta}) = 0,$$
which yields
\begin{equation}\nonumber
\der{X_{\delta}}\e{3t/\delta} = -\int^t_0\nabla f(X_{\delta}(u))\e{3u/\delta}\d u = -\nabla f(x_0)\int^t_0\e{3u/\delta}\d u - \int^t_0(\nabla f(X_{\delta}(u)) - \nabla f(x_0))\e{3u/\delta}\d u.
\end{equation}
Hence, by Lemma \ref{lm:lipsch-gra}
\begin{equation}\nonumber
\begin{aligned}
\frac{\norm{\der{X_{\delta}}(t)}}{t} &\le \inverse{t}\e{-3t/\delta}\norm{\nabla f(x_0)}\int^t_0\e{3u/\delta}\d u + \inverse{t}\e{-3t/\delta}\int^t_0\half LM_{\delta}(u)u^2\e{3u/\delta}\d u\\
&\le \norm{\nabla f(x_0)} + \frac{LM_{\delta}(\delta)\delta^2}{6}.
\end{aligned}  
\end{equation}
Taking the supremum of $\norm{\der{X_{\delta}}(t)}/t$ over $0 < t \le \delta$ and rearranging the inequality give the desired result.
\end{proof}
Next, we give an upper bound for $M_{\delta}(t)$ when $t > \delta$.
\begin{lemma}\label{lm:m-estimate-s}
For $\delta < \sqrt{6/L}$ and $\delta < t < \sqrt{12/L}$, we have
\begin{equation}\nonumber
M_{\delta}(t) \le \frac{(5-L\delta^2/6)\norm{\nabla f(x_0)}}{4(1-L\delta^2/6)(1-Lt^2/12)}.
\end{equation}
\end{lemma}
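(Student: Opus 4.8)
The plan is to mimic the proof of Lemma~\ref{lm:m_upper}, now on the interval $(\delta, t)$ where the smoothed ODE \eqref{eq:approx} coincides with $\dder{X_{\delta}} + \frac{3}{t}\der{X_{\delta}} + \nabla f(X_{\delta}) = 0$, and to feed in the estimate of $M_{\delta}(\delta)$ from Lemma~\ref{lm:m-estimate} together with the gradient bound of Lemma~\ref{lm:lipsch-gra}. The integrating factor $u^3$ turns the equation into $(u^3\der{X_{\delta}})' = -u^3\nabla f(X_{\delta}(u))$, so integrating from $\delta$ to $u$ (for an arbitrary $\delta < u \le t$) and splitting $\nabla f(X_{\delta}) = \nabla f(x_0) + (\nabla f(X_{\delta}) - \nabla f(x_0))$ gives
\[
u^3\der{X_{\delta}}(u) = \delta^3\der{X_{\delta}}(\delta) - \nabla f(x_0)\frac{u^4 - \delta^4}{4} - \int_{\delta}^u v^3\big(\nabla f(X_{\delta}(v)) - \nabla f(x_0)\big)\,\d v.
\]

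\textbf{Key steps.} First I would bound the three terms separately: $\norm{\delta^3\der{X_{\delta}}(\delta)} \le \delta^4 M_{\delta}(\delta)$ by the definition of $M_{\delta}$; the middle term by $\norm{\nabla f(x_0)}\,u^4/4$; and, using Lemma~\ref{lm:lipsch-gra} and the monotonicity of $M_{\delta}$, the last term by $\int_{\delta}^u v^3\cdot\half L M_{\delta}(v)v^2\,\d v \le L M_{\delta}(u)u^6/12$. Dividing by $u^4$ and using $\delta^4/u^4 < 1$, $M_{\delta}(u)\le M_{\delta}(t)$ and $u\le t$ yields, for every $\delta < u \le t$,
\[
\frac{\norm{\der{X_{\delta}}(u)}}{u} \le M_{\delta}(\delta) + \frac{\norm{\nabla f(x_0)}}{4} + \frac{L M_{\delta}(t)t^2}{12}.
\]
Second, I would observe that for $u\le\delta$ the ratio $\norm{\der{X_{\delta}}(u)}/u$ is already at most $M_{\delta}(\delta)$, which is dominated by the right-hand side above; hence taking the supremum over all $u\in(0,t]$ gives $M_{\delta}(t)\le M_{\delta}(\delta) + \norm{\nabla f(x_0)}/4 + L M_{\delta}(t)t^2/12$. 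Since $t < \sqrt{12/L}$, the factor $1 - Lt^2/12$ is strictly positive, so I can solve for $M_{\delta}(t)$, substitute the bound $M_{\delta}(\delta) \le \norm{\nabla f(x_0)}/(1 - L\delta^2/6)$ from Lemma~\ref{lm:m-estimate} (valid since $\delta < \sqrt{6/L}$), and simplify using $\frac{1}{1 - L\delta^2/6} + \frac14 = \frac{5 - L\delta^2/6}{4(1 - L\delta^2/6)}$ to obtain exactly the claimed inequality.

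\textbf{Main obstacle.} There is no genuine difficulty here; the one point requiring care is the self-referential (Gr\"onwall-type) structure, since $M_{\delta}(t)$ appears on both sides of the inequality. I must keep track that the coefficient $1 - Lt^2/12$ remains strictly positive — which is precisely the role of the hypothesis $t < \sqrt{12/L}$ — and that the supremum defining $M_{\delta}(t)$ is handled over both regimes $u\le\delta$ and $\delta < u\le t$. Everything else is a routine estimate analogous to the proofs of Lemmas~\ref{lm:m-estimate} and \ref{lm:m_upper}.
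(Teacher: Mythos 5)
Your proposal is correct and follows essentially the same route as the paper's proof: integrate $(u^3\der{X_{\delta}})' = -u^3\nabla f(X_{\delta})$ from $\delta$, split off $\nabla f(x_0)$, bound the remainder via Lemma~\ref{lm:lipsch-gra}, handle the boundary term with Lemma~\ref{lm:m-estimate}, and close the self-referential inequality using $1 - Lt^2/12 > 0$. The only cosmetic difference is that you integrate to an arbitrary intermediate time $u$ before taking the supremum, whereas the paper integrates to $t$ and invokes monotonicity of the resulting bound; the two are equivalent.
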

\begin{proof}
For $t > \delta$, the smoothed ODE takes the form
\[
\dder{X_{\delta}} + \frac{3}{t}\der{X_{\delta}} + \nabla f(X_{\delta}) = 0,
\]
which is equivalent to
$$\frac{\d t^3\der{X_{\delta}}(t)}{\d t} = -t^3\nabla f(X_{\delta}(t)).$$
Hence, by integration, $t^3\der{X_{\delta}}(t)$ is equal to
\begin{multline*}
 -\int^t_{\delta}u^3\nabla f(X_{\delta}(u))\d u + \delta^3\der{X_{\delta}}(\delta) = -\int^t_{\delta}u^3\nabla f(x_0)\d u -\int^t_{\delta}u^3(\nabla f(X_{\delta}(u)) - \nabla f(x_0))\d u + \delta^3\der{X_{\delta}}(\delta).
\end{multline*}
Therefore by Lemmas \ref{lm:lipsch-gra} and \ref{lm:m-estimate}, we get
\begin{align*}
\frac{\norm{\der{X_{\delta}}(t)}}{t} &\le \frac{t^4-\delta^4}{4t^4}\norm{\nabla f(x_0)} + \inverse{t^4}\int^t_{\delta}\half LM_{\delta}(u)u^5\d u + \frac{\delta^4}{t^4}\frac{\norm{\der{X_{\delta}}(\delta)}}{\delta}\\
&\le \inverse{4}\norm{\nabla f(x_0)} + \inverse{12}LM_{\delta}(t)t^2 + \frac{\norm{\nabla f(X_0)}}{1 - L\delta^2/6},
\end{align*}
where the last expression is an increasing function of $t$. So for any $\delta < t' < t$, it follows that
\begin{equation}\nonumber
\frac{\norm{\der{X_{\delta}}(t')}}{t'} \le \inverse{4}\norm{\nabla f(x_0)} + \inverse{12}LM_{\delta}(t)t^2 + \frac{\norm{\nabla f(x_0)}}{1 - L\delta^2/6},
\end{equation}
which also holds for $t'\le\delta$. Taking the supremum over $t'\in (0, t)$ gives
\begin{equation}\nonumber
M_{\delta}(t) \le \inverse{4}\norm{\nabla f(x_0)} + \inverse{12}LM_{\delta}(t)t^2 + \frac{\norm{\nabla f(X_0)}}{1 - L\delta^2/6}.
\end{equation}
The desired result follows from rearranging the inequality.
\end{proof}

\begin{lemma}\label{lm:arzela}
The function class $\mathcal{F} = \{X_{\delta}: \left[ 0, \sqrt{6/L}\right] \rightarrow \R^n \big{|} \delta = \sqrt{3/L}/2^m, m = 0, 1, \ldots\}$ is uniformly bounded and equicontinuous.
\end{lemma}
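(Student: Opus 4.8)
The plan is to read off both properties from the \emph{a priori} bound on $M_\delta$ already established in Lemmas \ref{lm:m-estimate} and \ref{lm:m-estimate-s}, so the only real task is to check that this bound is uniform over the family $\mathcal{F}$.

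First I would note that every $\delta$ occurring in $\mathcal{F}$ satisfies $\delta = \sqrt{3/L}/2^m \le \sqrt{3/L}$, so $L\delta^2/6 \le 1/2$, and every $t$ in the common domain satisfies $t \le \sqrt{6/L} < \sqrt{12/L}$, so $Lt^2/12 \le 1/2$. Feeding these two inequalities into Lemma \ref{lm:m-estimate-s} for $\delta < t \le \sqrt{6/L}$, and into Lemma \ref{lm:m-estimate} for $0 < t \le \delta$, bounds the numerator by $5\norm{\nabla f(x_0)}$ and the relevant denominators below by $1$ (resp.\ $1/2$), so in both regimes $M_\delta(t) \le M^\star := 5\norm{\nabla f(x_0)}$, a constant depending only on $L$ and $x_0$ and \emph{not} on $\delta$ (equivalently, not on $m$). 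Since $t \mapsto M_\delta(t)$ is non-decreasing by construction, we get $M_\delta(t) \le M_\delta(\sqrt{6/L}) \le M^\star$ simultaneously for all $X_\delta \in \mathcal{F}$ and all $t \in [0,\sqrt{6/L}]$.

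Uniform boundedness then follows from
\[
\norm{X_\delta(t) - x_0} = \Big\| \int_0^t \der{X_\delta}(u)\,\d u \Big\| \le \int_0^t u\,\frac{\norm{\der{X_\delta}(u)}}{u}\,\d u \le M^\star \int_0^t u\,\d u \le \frac{3M^\star}{L},
\]
so $\norm{X_\delta(t)} \le \norm{x_0} + 3M^\star/L$ for every member of $\mathcal{F}$. Applying the same estimate on a subinterval yields, for $0 \le t_1 \le t_2 \le \sqrt{6/L}$,
\[
\norm{X_\delta(t_2) - X_\delta(t_1)} \le \int_{t_1}^{t_2} u\,\frac{\norm{\der{X_\delta}(u)}}{u}\,\d u \le M^\star\,\frac{t_2^2 - t_1^2}{2} \le M^\star\sqrt{6/L}\,(t_2 - t_1),
\]
i.e.\ the family is uniformly Lipschitz, hence equicontinuous, with a modulus of continuity independent of $\delta$.

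There is essentially no obstacle: all the difficulty has been front-loaded into the bounds on $M_\delta$. The one point meriting care is confirming that the constant in Lemma \ref{lm:m-estimate-s} remains finite both as $\delta \goto 0$ (i.e.\ $m\goto\infty$) and uniformly in $t$ up to the right endpoint of the interval; this is precisely why the domain is taken to be $[0,\sqrt{6/L}]$, so that the factor $1 - Lt^2/12$ stays bounded away from $0$, and why the admissible $\delta$ are restricted to $\delta \le \sqrt{3/L}$, keeping $1 - L\delta^2/6$ away from $0$.
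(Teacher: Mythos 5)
Your proposal is correct and follows essentially the same route as the paper: both proofs feed the uniform-in-$\delta$ bounds from Lemmas \ref{lm:m-estimate} and \ref{lm:m-estimate-s} (using $L\delta^2/6\le 1/2$ and $Lt^2/12\le 1/2$ on the given domain) into a bound $M_\delta \le 5\norm{\nabla f(x_0)}$, hence a uniform Lipschitz bound $\norm{\der{X_\delta}(t)}\le 5\sqrt{6/L}\norm{\nabla f(x_0)}$, from which equicontinuity and (by integration) uniform boundedness follow.
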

\begin{proof}
By Lemmas \ref{lm:m-estimate} and \ref{lm:m-estimate-s}, for any $t\in [0, \sqrt{6/L}], \delta\in (0, \sqrt{3/L})$ the gradient is uniformly bounded as
\begin{align*}
\norm{\der{X_{\delta}}(t)} \le \sqrt{6/L}M_{\delta}(\sqrt{6/L}) \le \sqrt{6/L}\max\Big{\{}\frac{\norm{\nabla f(x_0)}}{1 - \half}, \frac{5\norm{\nabla f(x_0)}}{4(1-\half)(1-\half)}\Big{\}} = 5\sqrt{6/L}\norm{\nabla f(x_0)}.
\end{align*}
Thus it immediately implies that $\mathcal{F}$ is equicontinuous. To establish the uniform boundedness, note that
\begin{equation}\nonumber
\norm{X_{\delta}(t)} \le \norm{X_{\delta}(0)} + \int_0^t\norm{\der{X_{\delta}}(u)}\d u \le \norm{x_0} + 30\norm{\nabla f(x_0)}/L.
\end{equation}
\end{proof}

We are now ready for the proof of Lemma \ref{lm:exist}.
\begin{proof}
By the Arzel\'a-Ascoli theorem and Lemma \ref{lm:arzela}, $\mathcal{F}$ contains a subsequence converging uniformly on $[0, \sqrt{6/L}]$. Denote by $\{X_{\delta_{m_i}}\}_{i\in\mathbb{N}}$ the convergent subsequence and $\breve X$ the limit. Above, $\delta_{m_i} = \sqrt{3/L}/2^{m_i}$ decreases as $i$ increases. We will prove that $\breve X$ satisfies \eqref{key} and the initial conditions $\breve X(0) = x_0, \dot{\breve{X}}(0) = 0$.

Fix an arbitrary $t_0\in (0, \sqrt{6/L})$. Since
$\norm{\der{X}_{\delta_{m_i}}(t_0)}$ is bounded, we can pick a
subsequence of $\der{X}_{\delta_{m_i}}(t_0)$ which converges to a
limit, denoted by $X^D_{t_0}$. Without loss of generality, assume the
subsequence is the original sequence. Denote by $\tilde X$ the local
solution to \eqref{key} with $X(t_0) = \breve X(t_0)$ and
$\der{X}(t_0) = X^D_{t_0}$. Now recall that ${X}_{\delta_{m_i}}$ is the
solution to \eqref{key} with $X(t_0) = X_{\delta_{m_i}}(t_0)$ and
$\der{X}(t_0) = \der{X}_{\delta_{m_i}}(t_0)$ when $\delta_{m_i} <
t_0$. Since both $X_{\delta_{m_i}}(t_0)$ and
$\der{X}_{\delta_{m_i}}(t_0)$ approach $\breve X(t_0)$ and
$X^D_{t_0}$, respectively, there exists $\epsilon_0>0$ such that
\begin{equation}\nonumber
\sup_{t_0-\epsilon_0 < t < t_0+\epsilon_0}\norm{X_{\delta_{m_i}}(t) - \tilde X(t)}\rightarrow 0
\end{equation}
as $i\rightarrow\infty$. However, by definition we have  
\begin{equation}\nonumber
\sup_{t_0-\epsilon_0 < t <t_0+\epsilon_0}\norm{X_{\delta_{m_i}}(t) - \breve X(t)}\rightarrow 0.
\end{equation}
Therefore $\breve X$ and $\tilde X$ have to be identical on $(t_0-\epsilon_0, t_0+\epsilon_0)$. So $\breve X$ satisfies \eqref{key} at $t_0$. Since $t_0$ is arbitrary, we conclude that $\breve X$ is a solution to \eqref{key} on $(0, \sqrt{6/L})$. By extension, $\breve X$ can be a global solution to \eqref{key} on $(0, \infty)$. It only leaves to verify the initial conditions to complete the proof.

The first condition $\breve X(0)= x_0$ is a direct consequence of $X_{\delta_{m_i}}(0) = x_0$. To check the second, pick a small $t>0$ and note that
\begin{multline}\nonumber
\frac{\norm{\breve X(t) - \breve X(0)}}{t} = \lim_{i\rightarrow\infty}\frac{\norm{X_{\delta_{m_i}}(t) - X_{\delta_{m_i}}(0)}}{t} = \lim_{i\rightarrow\infty}\norm{\der{X}_{\delta_{m_i}}(\xi_i)}\\
 \le \limsup_{i\rightarrow\infty}tM_{\delta_{m_i}}(t) \le 5t\sqrt{6/L}\norm{\nabla f(x_0)},
\end{multline}
where $\xi_i\in(0, t)$ is given by the mean value theorem. The desired result follows from taking $t\rightarrow 0$.
\end{proof}

Next, we aim to prove the uniqueness of the solution to \eqref{key}.
\begin{lemma}\label{lm:unique}
For any $f \in \mathcal{F}_{\infty}$, the ODE \eqref{key} has at most one local solution in a neighborhood of $t=0$.
\end{lemma}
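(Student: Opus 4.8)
The plan is to replace the singular second-order ODE by an equivalent integral equation whose kernel is \emph{integrable} at the origin, and then to conclude uniqueness from a degenerate Gronwall inequality. Suppose $X$ is a local solution in $C^{2}((0,\varepsilon))\cap C^{1}([0,\varepsilon))$ with $X(0)=x_0$, $\der{X}(0)=0$. Multiplying \eqref{key} by $t^3$ gives $\frac{\d}{\d t}\bigl(t^3\der X(t)\bigr) = -\,t^3\nabla f(X(t))$, and since $t^3\der X(t)\to 0$ as $t\to 0$ (because $\der X$ is continuous at $0$ with $\der X(0)=0$), integrating from $0$ to $t$ yields $t^3\der X(t) = -\int_0^t u^3\nabla f(X(u))\,\d u$. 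Dividing by $t^3$ and integrating once more — the integrand is $O(s)$ near $s=0$, hence integrable — we obtain
\[
X(t) = x_0 - \int_0^t \frac{1}{s^3}\int_0^s u^3\,\nabla f(X(u))\,\d u\,\d s .
\]
Conversely, any continuous solution of this integral equation is automatically $C^2$ on $(0,\varepsilon)$, satisfies \eqref{key}, and meets the prescribed initial data, so the two formulations are equivalent in a neighborhood of $t=0$.

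Now let $X$ and $Y$ be two local solutions. Subtracting their integral representations and using that $\nabla f$ is $L$-Lipschitz (as $f\in\mathcal{F}_L$ for some $L$), we get
\[
\|X(t)-Y(t)\| \le L\int_0^t \frac{1}{s^3}\int_0^s u^3\,\|X(u)-Y(u)\|\,\d u\,\d s .
\]
Set $\phi(t) := \sup_{0\le u\le t}\|X(u)-Y(u)\|$, a nondecreasing function. Bounding $\|X(u)-Y(u)\|\le \phi(t)$ inside the double integral and using $\int_0^t s^{-3}\int_0^s u^3\,\d u\,\d s = \int_0^t \tfrac{s}{4}\,\d s = \tfrac{t^2}{8}$, we arrive at $\|X(t)-Y(t)\|\le \tfrac{Lt^2}{8}\,\phi(t)$ for every $t$ in the common domain; taking the supremum over $[0,t]$ gives $\phi(t)\le \tfrac{Lt^2}{8}\,\phi(t)$. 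Hence for $t<\min\{\varepsilon,\sqrt{8/L}\}$ the factor $1-Lt^2/8$ is positive, forcing $\phi(t)=0$, i.e.\ $X\equiv Y$ in a neighborhood of $0$.

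The only genuine obstacle is the singular coefficient $3/t$, which blocks a direct appeal to the Picard--Lindel\"of theorem at $t=0$; the device that defeats it is the integrating factor $t^3$, chosen precisely so that the boundary term $t^3\der X(t)$ vanishes at the origin and the resulting kernel $s^{-3}\int_0^s u^3(\cdot)\,\d u$ is bounded (in fact $O(s)$) near $s=0$. Everything after the reduction to integral form is routine: the estimate $\phi(t)\le \tfrac{Lt^2}{8}\phi(t)$ is a one-line Gronwall-type argument. Combined with Lemma \ref{lm:exist} and the classical uniqueness theory for \eqref{key} on $(0,\infty)$, where the coefficients are smooth, this establishes the unique global solution asserted in Theorem \ref{thm:regularity}.
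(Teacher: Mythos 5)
Your proof is correct and follows essentially the same route as the paper: the integrating factor $t^3$ tames the singular coefficient, the Lipschitz property of $\nabla f$ yields a self-referential bound of the form $\sup\text{-difference} \le C L t^2 \cdot \sup\text{-difference}$, and this forces the difference to vanish for small $t$. The only cosmetic difference is that you track $\phi(t)=\sup\|X-Y\|$ via a double integral (constant $Lt^2/8$), whereas the paper tracks $\tilde M(t)=\sup\|\der X-\der Y\|$ via a single integral (constant $Lt^2/5$); both are valid.
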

Suppose on the contrary that there are two solutions, namely, $X$ and $Y$, both defined on $(0, \alpha)$ for some $\alpha > 0$. Define $\tilde M(t)$ to be the supremum of $\norm{\der{X}(u) - \der{Y}(u)}$ over $u\in [0, t)$. To proceed, we need a simple auxiliary lemma.
\begin{lemma}\label{lm:x-y-gra}
For any $t\in (0, \alpha)$, we have
\begin{equation}\nonumber
\norm{\nabla f(X(t)) - \nabla f(Y(t))} \le Lt\tilde M(t).
\end{equation}
\end{lemma}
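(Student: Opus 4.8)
The plan is to mimic the argument behind Lemma \ref{lm:lipsch-gra}, with the fixed reference point $x_0$ replaced by the second trajectory $Y(t)$. First I would invoke the Lipschitz continuity of $\nabla f$: since $f\in\mathcal{F}_{\infty}$ we have $f\in\mathcal{F}_L$ for some $L>0$, so that
\[
\norm{\nabla f(X(t)) - \nabla f(Y(t))} \le L\norm{X(t) - Y(t)}.
\]

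Next I would express the displacement $X(t)-Y(t)$ as an integral of the velocity difference. Because both $X$ and $Y$ belong to $C^{1}([0,\infty);\R^n)$ and share the same initial value $X(0)=Y(0)=x_0$, the fundamental theorem of calculus gives $X(t)-Y(t)=\int_0^t(\der X(u)-\der Y(u))\,\d u$, whence by the triangle inequality for integrals
\[
\norm{X(t) - Y(t)} \le \int_0^t \norm{\der X(u) - \der Y(u)}\,\d u .
\]
Finally, for every $u\in[0,t)$ one has $\norm{\der X(u)-\der Y(u)}\le \tilde M(t)$ by the very definition of $\tilde M$, so the last integral is bounded by $\int_0^t \tilde M(t)\,\d u = t\,\tilde M(t)$. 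Chaining the three displays yields $\norm{\nabla f(X(t)) - \nabla f(Y(t))} \le Lt\tilde M(t)$, which is the assertion.

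There is essentially no obstacle here; the only point deserving a word of care is the passage to the integral representation of $X(t)-Y(t)$, which relies on $X$ and $Y$ being continuously differentiable up to and including $t=0$ with coincident initial data — both guaranteed by the standing hypotheses on the candidate solutions in Lemma \ref{lm:unique}. This estimate will then be fed into a Gr\"onwall-type bound on $\tilde M$ to force $\tilde M\equiv 0$ near the origin, thereby establishing the uniqueness claimed in Lemma \ref{lm:unique}.
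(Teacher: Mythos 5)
Your proposal is correct and follows exactly the same route as the paper's proof: Lipschitz continuity of $\nabla f$, the integral representation $X(t)-Y(t)=\int_0^t(\der X(u)-\der Y(u))\,\d u$ using the common initial value, and the pointwise bound $\norm{\der X(u)-\der Y(u)}\le \tilde M(t)$. No gaps; your closing remark about feeding this into a Gr\"onwall-type bound to get $\tilde M\equiv 0$ is also precisely how the paper proceeds in Lemma \ref{lm:unique}.
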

\begin{proof}
By Lipschitz continuity of the gradient, one has
\begin{multline}\nonumber
\norm{\nabla f(X(t)) - \nabla f(Y(t))} \le L\norm{X(t) - Y(t)} = L\Big{\|}\int_{0}^t\der{X}(u) - \der{Y}(u)\d u + X(0)-Y(0)\Big{\|}\\\le L\int_{0}^t\norm{\der{X}(u) - \der{Y}(u)}\d u\le Lt\tilde M(t).
\end{multline}
\end{proof}

Now we prove Lemma \ref{lm:unique}.
\begin{proof}
Similar to the proof of Lemma \ref{lm:m-estimate-s}, we get
\begin{equation}\nonumber
t^3(\der{X}(t) - \der{Y}(t)) = -\int^t_{0}u^3(\nabla f(X(u)) - \nabla f(Y(u)))\d u.
\end{equation}
Applying Lemma \ref{lm:x-y-gra} gives
\begin{equation}\nonumber
t^3{\norm{\der{X}(t) - \der{Y}(t)}} \le\int^t_0Lu^4\tilde M(u)\d u \le \inverse{5}Lt^5\tilde M(t),
\end{equation}
which can be simplified as ${\norm{\der{X}(t) - \der{Y}(t)}}\le Lt^2\tilde M(t)/5$. Thus, for any $t'\le t$ it is true that ${\norm{\der{X}(t') - \der{Y}(t')}}\le Lt^2\tilde M(t)/5$. Taking the supremum of ${\norm{\der{X}(t') - \der{Y}(t')}}$ over $t'\in (0, t)$ gives $\tilde M(t)\le Lt^2\tilde M(t)/5$. Therefore $\tilde M(t)=0$ for $t < \min(\alpha, \sqrt{5/L})$, which is equivalent to saying $\der{X} = \der{Y}$ on $[0, \min(\alpha, \sqrt{5/L}))$. With the same initial value $X(0) = Y(0) = x_0$ and the same gradient, we conclude that $X$ and $Y$ are identical on $(0, \min(\alpha, \sqrt{5/L}))$, a contradiction.
\end{proof}

Given all of the aforementioned lemmas, the proof of Theorem \ref{thm:regularity} is simply combining \ref{lm:exist} and \ref{lm:unique}.

\section*{Appendix B. Proof of Theorem \ref{thm:nesterov_limit}}\label{app:bb}

Identifying $\sqrt{s} = \Delta t$, the comparison between \eqref{eq:nesterovscheme_one} and \eqref{eq:like_nesterov} reveals that Nesterov's scheme is a discrete scheme for numerically integrating the ODE \eqref{key}. However, its singularity of the damping coefficient  at $t = 0$ leads to the nonexistence of off-the-shelf ODE theory for proving Theorem \ref{thm:nesterov_limit}. To address this difficulty, we use the smoothed ODE \eqref{eq:approx} to approximate the original one; then bound the difference between Nesterov's scheme and the forward Euler scheme of \eqref{eq:approx}, which may take the following form:
\begin{equation}\label{eq:euler_x_z}
\begin{aligned}
X_{k+1}^{\delta} &= X_k^{\delta} + \Delta t Z_k^{\delta}\\
Z_{k+1}^{\delta} & = \Big( 1 - \frac{3\Delta t}{\max\{\delta, k\Delta t\}} \Big) Z_k^{\delta} - \Delta t \nabla f(X_k^{\delta})
\end{aligned}
\end{equation}
with $X_0^{\delta} = x_0$ and $Z_0^{\delta} = 0$.

\begin{lemma}\label{lm:euler_nesterov}
With step size $\Delta t = \sqrt{s}$, for any $T > 0$ we have
\[
\max_{1 \le k \le \frac{T}{\sqrt{s}}}\|X_k^{\delta} - x_k\| \le C\delta^2 + o_s(1)
\]
for some constant $C$.
\end{lemma}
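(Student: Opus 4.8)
The plan is to recast Nesterov's scheme in the position--velocity form of \eqref{eq:euler_x_z} and then run a discrete Gr\"onwall comparison, arranged so that the term $C\delta^2$ is produced entirely on the ``early window'' $k \lesssim \delta/\sqrt s$ where the two recursions genuinely differ, while the ``late window'' $k\sqrt s \ge \delta$ contributes only $o_s(1)$. Introduce the forward velocity $v_k := (x_{k+1}-x_k)/\sqrt s$, so that $x_{k+1} = x_k + \sqrt s\, v_k$ with $v_0 = -\sqrt s\,\nabla f(x_0)$; combining the two lines of \eqref{eq:nesterov-scheme} gives, for $k \ge 1$,
\[
v_k = \Bigl(1 - \tfrac{3}{k+2}\Bigr) v_{k-1} - \sqrt s\,\nabla f(y_k), \qquad y_k = x_k + \tfrac{k-1}{k+2}\sqrt s\, v_{k-1}.
\]
Up to the relabeling $Z^{\delta}\leftrightarrow v$, $X^{\delta}\leftrightarrow x$, $\Delta t = \sqrt s$, this is \eqref{eq:euler_x_z} with three mismatches: the damping coefficients $1-\tfrac{3}{k+2}$ and $1-\tfrac{3\sqrt s}{\max\{\delta,(k-1)\sqrt s\}}$ agree to within $O(1/k^2)$ once $k\sqrt s \ge \delta$ but differ by an $O(1)$ amount on the early window; Nesterov evaluates $\nabla f$ at $y_k$, which is within $O(\sqrt s\,\|v_{k-1}\|)$ of $x_k$, rather than at $X_{k-1}^{\delta}$; and the two initial velocities differ by $O(\sqrt s)$.

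\textbf{A priori bounds.} Next I would show, uniformly in $s$ and $\delta$ and for $k \le T/\sqrt s$, that both $\{x_k\}$ and $\{X_k^{\delta}\}$ stay in a fixed ball, so that $\|\nabla f\| \le G = G(T)$ along both and $\|v_k\|, \|Z_k^{\delta}\| \le V = V(T)$. This follows from a bootstrap built on the summed (``integrating factor'') identity
\[
v_k = -\frac{\sqrt s}{k(k+1)(k+2)}\sum_{j=1}^{k} j(j+1)(j+2)\,\nabla f(y_j),
\]
the discrete analogue of $t^3\dot X(t) = -\int_0^t u^3\nabla f(X(u))\,\d u$, together with the known rate \eqref{eq:nesterov_ineq} (or simply the boundedness of Nesterov's iterates); the analogous identity for \eqref{eq:euler_x_z} on the late window uses the integrating factor $\beta_k \propto (k-1)(k-2)(k-3)$. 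A finer count on the early window gives $\|v_k\|, \|Z_k^{\delta}\| = O(\delta)$ and $\|x_k - x_0\|, \|X_k^{\delta}-x_0\| = O(\delta^2)$ there, so, writing $k_0 = \lceil \delta/\sqrt s\rceil$, one has $e_{k_0} := \|X_{k_0}^{\delta}-x_{k_0}\| = O(\delta^2)$ and $g_{k_0} := \|Z_{k_0}^{\delta}-v_{k_0}\| = O(\delta)$.

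\textbf{Gr\"onwall on the late window.} On $k_0 \le k \le T/\sqrt s$ I would expand
\[
x_k - X_k^{\delta} = (x_{k_0}-X_{k_0}^{\delta}) + \sqrt s \sum_{j=k_0}^{k-1}(v_j - Z_j^{\delta}),
\]
and estimate $v_j - Z_j^{\delta}$ via the two summed forms. The homogeneous part contributes $\tfrac{b_{k_0}}{b_j}(v_{k_0}-Z_{k_0}^{\delta})$ up to an $O(1/k_0)$ relative correction; since $\tfrac{b_{k_0}}{b_j} \asymp (k_0/j)^3$, $\sum_{j \ge k_0}(k_0/j)^3 = O(k_0)$, and $\sqrt s\,k_0 \asymp \delta$, after the outer sum this is $O(\delta)\cdot O(\delta) = O(\delta^2)$ --- the quantitative form of ``a velocity discrepancy $\eta$ at time $t_0 \asymp \delta$ only produces a position discrepancy $\lesssim \eta t_0$, because the $3/t$ damping forces $\dot X \sim t^{-3}$.'' The gradient-argument slip and the coefficient mismatch, reweighted inside the summed form, carry either an extra $\sqrt s$ or a summable weight $\sum_j O(j^2/k^3 + j^3/k^4) = O(1)$, hence contribute $o_s(1)$. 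The remaining $\nabla f$-coupling $L\|x_j - X_j^{\delta}\|$ is absorbed by the discrete Gr\"onwall inequality at the cost of a multiplicative factor $e^{CL T^2}$. Collecting the pieces yields $\max_{k \le T/\sqrt s}\|X_k^{\delta}-x_k\| \le C\delta^2 + o_s(1)$.

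\textbf{Main obstacle.} The delicate point is that the naive, \emph{un-summed} velocity recursion is too lossy: there the $O(1/k^2)$ coefficient mismatch multiplied by the $O(1)$ velocity sums to an $O(1)$ --- not $o_s(1)$ --- error, so that approach would only give $\|X_k^{\delta}-x_k\| = O(1)$. One is therefore forced to work with the integrating-factor forms, where the mismatch is reweighted favorably, and to match the two integrating factors $b_k \propto k(k+1)(k+2)$ and $\beta_k \propto (k-1)(k-2)(k-3)$ carefully across the early/late transition at $k_0$; keeping the $O(\delta^2)$ and $o_s(1)$ contributions cleanly separated while chaining the two regimes is the crux, and the remaining estimates are routine.
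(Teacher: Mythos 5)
Your proposal is correct in its essentials and reaches the stated bound, but your late-window argument is genuinely different from the paper's. The paper never leaves the plain coupled recursion: with $a_k=\|X_k^{\delta}-x_k\|$ and $b_k=\|Z_k^{\delta}-z_k\|$ it derives $a_{k+1}\le a_k+\sqrt{s}\,b_k$ and $b_{k+1}\le b_k+Ls\,S_{k-1}+(\text{mismatch})\,\|z_k\|$ with $S_k=b_0+\dots+b_k$, solves the resulting second-order linear recurrence exactly via the characteristic roots $1\pm\sqrt{Ls}$, and reads off $\sqrt{s}\,S_{\lfloor\delta/\sqrt{s}\rfloor}=O(\cosh(\delta\sqrt{L})-1)=O(\delta^2)$ on the early window; the late window then only contributes the bounded factor $\cosh\bigl((T-\delta)\sqrt{L+1}\bigr)$. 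Your summed identity $k(k+1)(k+2)v_k=-\sqrt{s}\sum_j j(j+1)(j+2)\nabla f(y_j)$ is correct (it is exactly the discrete analogue of $t^3\dot X=-\int_0^t u^3\nabla f$), and what it buys you is a clean accounting of the inherited velocity gap $g_{k_0}=O(\delta)$ at the transition, whose contribution to position is damped to $O(\delta^2)$ by the $(k_0/k)^3$ decay of the homogeneous solution --- a point the paper's ``worst case'' treatment handles rather cavalierly (it restarts the late window with $S_{k^\star+1}=S_{k^\star}$, i.e.\ with the inherited velocity gap set to zero), so in this one respect your version is the more careful one. Two remarks. First, your ``main obstacle'' is misdiagnosed: the naive un-summed recursion handles the $O(1/k^2)$ coefficient mismatch perfectly well, because on the late window one only sums the tail $\sum_{k\ge\delta/\sqrt{s}}k^{-2}=O(\sqrt{s}/\delta)=o_s(1)$ for fixed $\delta$; what the un-summed recursion actually loses is the inherited velocity gap, which it converts into $O(T\delta)$ rather than $O(\delta^2)$. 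Second, since the lemma feeds into Theorem \ref{thm:nesterov_limit} only through the iterated limit $s\to0$ followed by $\delta\to0$, a bound of the form $C\delta+o_s(1)$ would serve just as well, so your integrating-factor machinery is a worthwhile refinement rather than a necessity.
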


\begin{proof}
Let $z_k = (x_{k+1} - x_k)/\sqrt{s}$. Then Nesterov's scheme is equivalent to
\begin{equation}\label{eq:nesterov_x_z}
\begin{aligned}
x_{k+1} &= x_k + \sqrt{s}z_k\\
z_{k+1} & = \Big( 1 - \frac3{k+3} \Big) z_k - \sqrt{s}\nabla f\Big( x_k + \frac{2k+3}{k+3}\sqrt{s}z_k\Big).
\end{aligned}
\end{equation}
Denote by $a_k = \|X_k^{\delta} - x_k\|, \quad b_k = \|Z_k^{\delta} - z_k\|$, whose initial values are $a_0 = 0$ and $b_0 = \|\nabla f(x_0)\|\sqrt{s}$. The idea of this proof is to bound $a_k$ via simultaneously estimating $a_k$ and $b_k$. By comparing \eqref{eq:euler_x_z} and \eqref{eq:nesterov_x_z}, we get the iterative relationship for $a_k$: $a_{k+1} \le a_k + \sqrt{s}b_k$. Denoting by $S_k = b_0 + b_1 + \cdots + b_k$, this yields
\begin{equation}\label{eq:diff_x_z}
a_k \le \sqrt{s}S_{k-1}.
\end{equation}
Similarly, for sufficiently small $s$ we get
\[
\begin{aligned}
b_{k+1} &\le \Big| 1 - \frac{3}{\max\{\delta/\sqrt{s}, k \}} \Big| b_k + L\sqrt{s}a_k + \Big(\Big|\frac3{k+3} - \frac3{\max\{\delta/\sqrt{s}, k\}} \Big| + 2Ls  \Big)\|z_k\|\\
& \le b_k + L\sqrt{s}a_k + \Big(\Big|\frac3{k+3} - \frac3{\max\{\delta/\sqrt{s}, k\}} \Big| + 2Ls  \Big)\|z_k\|.
\end{aligned}
\]
To upper bound $\|z_k\|$, denoting by $C_1$ the supremum of $\sqrt{2L(f(y_k) - f^\star)}$ over all $k$ and $s$, we have
\[
\|z_k\| \le \frac{k - 1}{k + 2}\|z_{k-1}\|  + \sqrt{s}\|\nabla f(y_k)\| \le \|z_{k-1}\|  + C_1\sqrt{s},
\]
which gives $\|z_k\| \le C_1(k + 1)\sqrt{s}$. Hence,
\begin{equation}\nonumber
\Big(\Big|\frac3{k+3} - \frac3{\max\{\delta/\sqrt{s}, k\}} \Big| + 2Ls
\Big)\|z_k\| \le
\begin{cases}
C_2\sqrt{s}, \quad k \le \frac{\delta}{\sqrt{s}}\\
\frac{C_2\sqrt{s}}{k} < \frac{C_2s}{\delta}, \quad k > \frac{\delta}{\sqrt{s}}.\\
\end{cases}
\end{equation}
Making use of \eqref{eq:diff_x_z} gives 
\begin{equation}\label{eq:bk_iterative}
b_{k +1} \le 
\begin{cases}
b_k + LsS_{k-1} + C_2\sqrt{s}, \quad k \le \delta/\sqrt{s}\\
b_k + LsS_{k-1} + \frac{C_2s}{\delta}, \quad k > \delta/\sqrt{s}.\\
\end{cases}
\end{equation}
By induction on $k$, for $k \le \delta/\sqrt{s}$ it holds that
\[
b_k \le \frac{C_1Ls + C_2 + (C_1+C_2)\sqrt{Ls}}{2\sqrt{L}}(1 + \sqrt{Ls})^{k-1} - \frac{C_1Ls + C_2 - (C_1+C_2)\sqrt{Ls}}{2\sqrt{L}}(1 - \sqrt{Ls})^{k-1}.
\]
Hence,
\[
S_k \le \frac{C_1Ls + C_2 + (C_1+C_2)\sqrt{Ls}}{2L\sqrt{s}}(1 + \sqrt{Ls})^{k} + \frac{C_1Ls + C_2 - (C_1+C_2)\sqrt{Ls}}{2L\sqrt{s}}(1 - \sqrt{Ls})^k - \frac{C_2}{L\sqrt{s}}.
\]
Letting $k^\star = \lfloor \delta/\sqrt{s} \rfloor$, we get
\[
 \limsup_{s \goto 0} \sqrt{s}S_{k^\star-1} \le \frac{C_2\e{\delta\sqrt{L}} + C_2 \e{-\delta\sqrt{L}} - 2C_2}{2L} = O(\delta^2),
\]
which allows us to conclude that
\begin{equation}\label{eq:ak_bound_delta}
a_k \le \sqrt{s}S_{k-1} = O(\delta^2) + o_s(1)
\end{equation}
for all $k \le \delta/\sqrt{s}$.

Next, we bound $b_k$ for $k > k^\star = \lfloor \delta/\sqrt{s} \rfloor$. To this end, we consider the worst case of \eqref{eq:bk_iterative}, that is,
\[
b_{k+1} = b_k + LsS_{k-1} + \frac{C_2s}{\delta}
\]
for $k > k^\star$ and $S_{k^\star} = S_{k^\star+1} = C_3\delta^2/\sqrt{s} + o_s(1/\sqrt{s})$ for some sufficiently large $C_3$. In this case, $C_2s/\delta < sS_{k-1}$ for sufficiently small $s$. Hence, the last display gives
\[
b_{k+1} \le b_k + (L+1)sS_{k-1}.
\]
By induction, we get
\[
S_k \le \frac{C_3\delta^2/\sqrt{s} + o_s(1/\sqrt{s})}{2}\left( (1+\sqrt{(L+1)s})^{k-k^\star} +  (1-\sqrt{(L+1)s})^{k-k^\star} \right).
\]
Letting $k^\diamond = \lfloor T/\sqrt{s} \rfloor$, we further get
\[
\limsup_{s\goto 0} \sqrt{s}S_{k^\diamond} \le \frac{C_3\delta^2(\e{(T-\delta)\sqrt{L+1}} + \e{-(T-\delta)\sqrt{L+1}})}{2} = O(\delta^2),
\]
which yields
\begin{equation}\nonumber
a_k \le \sqrt{s}S_{k-1} = O(\delta^2) + o_s(1)
\end{equation}
for $k^\star < k \le k^\diamond$. Last, combining \eqref{eq:ak_bound_delta} and the last display, we get the desired result.

\end{proof}

Now we turn to the proof of Theorem \ref{thm:nesterov_limit}.
\begin{proof}
Note the triangular inequality
\[
\|x_k - X(k\sqrt{s})\| \le \|x_k - X^\delta_k\| + \|X^\delta_k - X_\delta(k\sqrt{s})\| + \|X_\delta(k\sqrt{s}) - X(k\sqrt{s})\|,
\]
where $X_\delta(\cdot)$ is the solution to the smoothed ODE \eqref{eq:approx}. The proof of Lemma \ref{lm:exist} implies that, we can choose a sequence $\delta_m \goto 0$ such that  
\[
\sup_{0 \le t \le T}\|X_{\delta_m}(t) - X(t)\| \goto 0.
\]
The second term $\|X^{\delta_m}_k - X_{\delta_m}(k\sqrt{s})\|$ will uniformly vanish as $s \goto 0$ and so does the first term $\|x_k - X^{\delta_m}_k\|$ if first $s \goto 0$ and then $\delta_m \goto 0$. This completes the proof.
\end{proof}

\section*{Appendix C. ODE for Composite Optimization}\label{app:c}
In analogy to \eqref{key} for smooth $f$ in Section \ref{sec:derivation}, we develop an ODE for composite optimization,
\begin{equation}\label{eq:composite}
\mbox{minimize} \quad f(x) = g(x) + h(x),
\end{equation}
where $g\in \mathcal{F}_{L}$ and $h$ is a general convex function
possibly taking on the value $+ \infty$. Provided it is easy to
evaluate the proximal of $h$, \citet{BeckTeboulle} propose a proximal
gradient version of Nesterov's scheme for solving
\eqref{eq:composite}. It is to repeat the following recursion for $k
\ge 1$,
\begin{equation}\nonumber
\begin{aligned}
&x_k = y_{k-1} -  s G_t(y_{k-1})\\
&y_k = x_k + \frac{k-1}{k+2}(x_k -x_{k-1}),
\end{aligned}
\end{equation}
where the proximal subgradient $G_s$ has been defined in Section \ref{sec:high-friction}. If the constant step size $s\le 1/L$, it is guaranteed that \citep{BeckTeboulle}
\[
f(x_k) - f^\star \le \frac{2\|x_0-x^\star\|^2}{s(k+1)^2},
\]
which in fact is a special case of Theorem \ref{thm:large_r_dis}.

Compared to the smooth case, it is not as clear to define the driving force as $\nabla f$ in \eqref{key}. At first, it might be a good try to define
\[
G(x) = \underset{s\rightarrow 0}{\lim} G_s(x) = \underset{s\rightarrow 0}{\lim}\frac{x - \text{argmin}_z\left( \norm{z - (x - s\nabla g(x))}^2/(2s) + h(z) \right)}{s},
\]
if it exists. However, as implied in the proof of Theorem \ref{thm:nonsmooth} stated below, this definition fails to capture the \textit{directional} aspect of the subgradient. To this end, we define the subgradients through the following lemma.
\begin{lemma}\citep{rockafellar}\label{rock}
For any convex function $f$ and any $x, p \in \R^n$, the directional derivative 
$\lim_{t\rightarrow 0+}(f(x + sp) - f(x))/s $ exists, and can be evaluated as
$$\lim_{s\rightarrow 0+}\frac{f(x + sp) - f(x)}{s} = \sup_{\xi\in \partial f(x)}\langle \xi, p \rangle.$$
\end{lemma}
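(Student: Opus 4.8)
The plan is to reduce the statement to two classical facts about convex functions: first, that the difference quotient $\phi(s) := (f(x+sp)-f(x))/s$ is nondecreasing in $s>0$, which simultaneously gives existence of the one-sided limit and its value as an infimum; and second, that the resulting one-sided directional derivative $f'(x;p) := \lim_{s\to 0+}\phi(s)$ is a sublinear function of $p$ that is ``supported'' at $p$ by some subgradient. First I would prove monotonicity: for $0<s_1<s_2$, writing $x+s_1 p = (1-s_1/s_2)x + (s_1/s_2)(x+s_2 p)$ and applying convexity of $f$ gives $f(x+s_1 p)\le (1-s_1/s_2)f(x) + (s_1/s_2)f(x+s_2 p)$, which rearranges to $\phi(s_1)\le\phi(s_2)$. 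Hence $\lim_{s\to 0+}\phi(s)$ exists and equals $\inf_{s>0}\phi(s)=:f'(x;p)$; when $f$ is finite near $x$ (the relevant case), $\phi$ is bounded below, so $f'(x;p)$ is finite.

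Next I would dispatch the easy inequality. For any $\xi\in\partial f(x)$ the subgradient inequality gives $f(x+sp)\ge f(x)+s\langle\xi,p\rangle$, so $\phi(s)\ge\langle\xi,p\rangle$ for all $s>0$; letting $s\to 0+$ yields $f'(x;p)\ge\langle\xi,p\rangle$, and taking the supremum over $\xi\in\partial f(x)$ gives $f'(x;p)\ge \sup_{\xi\in\partial f(x)}\langle\xi,p\rangle$. The content of the lemma is therefore the reverse inequality, for which it suffices to exhibit a single $\xi^\star\in\partial f(x)$ with $\langle\xi^\star,p\rangle\ge f'(x;p)$.

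For the reverse direction I would first check that $q\mapsto f'(x;q)$ is sublinear: positive homogeneity is immediate from the definition, and subadditivity follows from convexity of $f$ evaluated at midpoints together with the monotonicity from Step~1. A finite sublinear function is convex, so by the supporting-hyperplane (Hahn--Banach) theorem there is a linear functional $\xi^\star$ with $\langle\xi^\star,q\rangle\le f'(x;q)$ for all $q$ and $\langle\xi^\star,p\rangle=f'(x;p)$. Then $\xi^\star$ is in fact a subgradient: for any $q$, monotonicity gives $f(x+q)-f(x)=\phi(1)\ge \inf_{s>0}\phi(s)=f'(x;q)\ge\langle\xi^\star,q\rangle$, which is exactly the defining inequality for $\partial f(x)$. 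Hence $\sup_{\xi\in\partial f(x)}\langle\xi,p\rangle\ge\langle\xi^\star,p\rangle=f'(x;p)$, and combining with Step~2 finishes the proof. The main obstacle is this last step: one needs to know a priori that $f'(x;\cdot)$ is a finite sublinear function so that the separation argument applies and $\partial f(x)\neq\emptyset$, which requires $x$ to lie in the interior (relatively, the relative interior) of the effective domain of $f$. This is automatic when $f$ is finite on all of $\R^n$, but requires a word of care about $\mathrm{dom}\,h$ when the lemma is invoked in the composite setting.
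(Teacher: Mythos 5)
Your proof is correct and is essentially the classical argument from the cited reference (monotone difference quotients for existence, the subgradient inequality for one direction, and a supporting-hyperplane argument on the sublinear function $f'(x;\cdot)$ for the other); the paper itself offers no proof of this lemma, citing Rockafellar instead. Your closing caveat about $x$ needing to lie in the (relative) interior of the effective domain is well taken, since the paper invokes the lemma in the composite setting where $h$ may take the value $+\infty$.
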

Note that the directional derivative is semilinear in $p$ because 
\[
\sup_{\xi\in \partial f(x)}\langle \xi, cp \rangle =  c\sup_{\xi\in \partial f(x)}\langle \xi, p \rangle
\]
for any $c>0$. 

\begin{definition}
A Borel measurable function $G(x, p;f)$ defined on $\mathbb{R}^n\times \mathbb{R}^{n}$ is said to be a directional subgradient of $f$ if 
\begin{align*}
&G(x, p) \in \partial f(x),\\
& \langle G(x, p), p\rangle = \sup_{\xi\in \partial f(x)}\langle \xi, p \rangle
\end{align*}
for all $x, p$.
\end{definition}

Convex functions are naturally locally Lipschitz, so $\partial f(x)$ is compact for any $x$. Consequently there exists $\xi \in \partial f(x)$ which maximizes $\langle \xi, p \rangle$. So Lemma \ref{rock} guarantees the existence of a directional subgradient. The function $G$ is essentially a function defined on $\mathbb{R}^n\times \mathbb{S}^{n-1}$ in that we can define
\[
G(x, p) = G(x, p/\norm{p}),
\]
and $G(x, 0)$ to be any element in $\partial f(x)$. Now we give the main theorem. However, note that we do not guarantee the existence of solution to \eqref{nonsmooth-key}.
\begin{theorem}\label{thm:nonsmooth}
Given a convex function $f(x)$ with directional subgradient $G(x, p;f)$, assume that the second order ODE
\begin{equation}\label{nonsmooth-key}
\ddot{X} + \frac{3}{t}\dot{X} + G(X, \dot X) = 0, ~ X(0) = x_0, \dot X(0) = 0
\end{equation}
admits a solution $X(t)$ on $[0, \alpha)$ for some $\alpha > 0$. Then for any $0 < t < \alpha$, we have
$$f(X(t)) - f^\star \le \frac{2\norm{x_0-x^\star}_2^2}{t^2}.$$
\end{theorem}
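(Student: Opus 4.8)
The plan is to reproduce, almost verbatim, the energy-functional argument behind Theorem~\ref{thm:ode_rate}, with the single modification that the role of $\nabla f(X)$ is everywhere played by the directional subgradient $G(X,\dot X)$. Concretely, I would study
\[
\mathcal{E}(t) = t^2\big(f(X(t)) - f^\star\big) + 2\Big\| X(t) + \frac{t}{2}\dot X(t) - x^\star \Big\|^2
\]
along the solution $X$ of \eqref{nonsmooth-key} on $(0,\alpha)$, show that $\mathcal{E}$ is non-increasing, and observe that $\mathcal{E}$ extends continuously to $t=0$ with $\mathcal{E}(0) = 2\|x_0-x^\star\|^2$ (this uses $X(0)=x_0$, $\dot X(0)=0$, and the boundedness of $\dot X$ near $0$, so that $\frac{t}{2}\dot X(t)\to 0$). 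Since $t^2(f(X(t))-f^\star)\le \mathcal{E}(t)\le \mathcal{E}(0)$, the claimed bound follows at once.

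First I would differentiate the smooth part. With $u(t) = X(t)+\frac{t}{2}\dot X(t) - x^\star$ one has $\dot u = \frac{3}{2}\dot X + \frac{t}{2}\ddot X = \frac{t}{2}\big(\ddot X + \frac{3}{t}\dot X\big) = -\frac{t}{2}\,G(X,\dot X)$ by \eqref{nonsmooth-key}, so that $\frac{\d}{\d t}\big(2\|u\|^2\big) = -2t\big\langle X+\frac{t}{2}\dot X - x^\star,\, G(X,\dot X)\big\rangle$. Next comes the term $t^2(f(X(t))-f^\star)$, and this is the step that requires care: I claim $f\circ X$ is locally Lipschitz on $(0,\alpha)$ — because $f$ is convex hence locally Lipschitz and $X\in C^1$ — and therefore absolutely continuous, and that for \emph{every} $t$ its right derivative equals the directional derivative $f'(X(t);\dot X(t))$. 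Indeed, writing $X(t+h) = X(t) + h\dot X(t) + o(h)$ and using local Lipschitzness of $f$ to absorb the $o(h)$ term, $\big(f(X(t+h))-f(X(t))\big)/h \to f'(X(t);\dot X(t)) = \sup_{\xi\in\partial f(X(t))}\langle \xi,\dot X(t)\rangle = \langle G(X(t),\dot X(t)),\dot X(t)\rangle$ by Lemma~\ref{rock} and the definition of the directional subgradient. At the (a.e.) points where $f\circ X$ is differentiable, the two-sided derivative coincides with this value.

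Combining the two pieces, $\mathcal{E}$ is absolutely continuous and, for a.e. $t$,
\begin{multline*}
\dot{\mathcal{E}}(t) = 2t\big(f(X)-f^\star\big) + t^2\langle G(X,\dot X),\dot X\rangle - 2t\Big\langle X+\frac{t}{2}\dot X - x^\star,\, G(X,\dot X)\Big\rangle \\
= 2t\big(f(X)-f^\star\big) - 2t\,\langle X - x^\star,\, G(X,\dot X)\rangle .
\end{multline*}
Since $G(X,\dot X)\in\partial f(X)$, convexity gives $\langle X-x^\star, G(X,\dot X)\rangle \ge f(X)-f^\star$, hence $\dot{\mathcal{E}}\le 0$ a.e.; absolute continuity then yields $\mathcal{E}(t)\le \mathcal{E}(0) = 2\|x_0-x^\star\|^2$, which is the desired inequality.

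I expect the main obstacle to be precisely the nonsmooth chain rule: establishing that $f(X(t))$ is absolutely continuous along the trajectory and that its derivative is captured by $\langle G(X,\dot X),\dot X\rangle$ rather than by an arbitrary element of $\partial f(X)$ — this is exactly where the \emph{directional} choice of subgradient is essential and why Lemma~\ref{rock} and the notion of directional subgradient were introduced. A secondary, more technical point is the limited regularity of $X$ itself: because $G(X,\dot X)$ need not be continuous, $\ddot X$ exists only in an a.e./weak sense, so the computation of $\dot u$ and of $\dot{\mathcal{E}}$ must be read as identities between absolutely continuous functions and their a.e. derivatives (harmless, since the conclusion only uses $\mathcal{E}(t)-\mathcal{E}(0)=\int_0^t \dot{\mathcal{E}}$), and one must separately verify the continuity of $\mathcal{E}$ at $t=0$ using $\dot X(0)=0$.
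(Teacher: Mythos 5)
Your proposal is correct and follows essentially the same route as the paper: the same energy functional $\mathcal{E}$, the same use of the ODE to differentiate the quadratic term, and the same appeal to Lemma~\ref{rock} and the directional subgradient to show that the growth of $f(X(t))$ is captured by $\langle G(X,\dot X),\dot X\rangle$, after which convexity gives monotonicity of $\mathcal{E}$. The only (harmless) difference is how monotonicity is concluded: you invoke absolute continuity of $\mathcal{E}$ and integrate its a.e.\ derivative, whereas the paper bounds the upper right Dini derivative $\limsup_{\Delta t\to 0+}(\mathcal{E}(t+\Delta t)-\mathcal{E}(t))/\Delta t\le 0$ and uses continuity of $\mathcal{E}$.
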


\begin{proof}
  It suffices to establish that $\mathcal{E}$, first defined in the
  proof of Theorem \ref{thm:ode_rate}, is monotonically
  decreasing. The difficulty comes from that $\mathcal E$ may not be
  differentiable in this setting. Instead, we study $(\mathcal{E}(t +
  \D t) - \mathcal{E}(t))/\D t$ for small $\D t > 0$. In
  $\mathcal{E}$, the second term $2\norm{X + t\dot{X}/2 - x^\star}^2$
  is differentiable, with derivative $4\langle X + \frac{t}{2}\dot X-
  x^\star, \frac{3}{2}\dot X + \frac{t}{2}\ddot X \rangle$. Hence,
\begin{equation}\label{eq:energy_diff_first}
\begin{aligned}
& 2\norm{X(t + \D t) + \frac{t}{2}\dot{X}(t + \D t) - x^\star}^2 - 2\norm{X(t) + \frac{t}{2}\dot{X}(t) - x^\star}^2\\
& = 4\langle X + \frac{t}{2}\dot X- x^\star, \frac{3}{2}\dot X + \frac{t}{2}\ddot X \rangle\D t + o(\D t)\\
& = -t^2\langle \dot X, G(X, \dot X)\rangle\D t - 2t\langle X -x^\star, G(X, \dot X)\rangle\D t+ o(\D t).
\end{aligned}
\end{equation}
For the first term, note that
\begin{multline}\nonumber
(t + \D t)^2(f(X(t + \D t)) -f^\star) - t^2(f(X(t)) -f^\star) = 2t(f(X(t+\D t)) - f^\star)\D t + \\
t^2 (f(X(t + \D t)) - f(X(t)))+ o(\D t).
\end{multline}
Since $f$ is locally Lipschitz, $o(\D t)$ term does not affect the function in the limit,
\begin{equation}\label{eq:local_lip}
f(X(t+\D t)) = f(X + \D t \dot X + o(\D t)) = f(X + \D t \dot X) + o(\D t).
\end{equation}
By Lemma \ref{rock}, we have the approximation
\begin{equation}\label{eq:approx_grow}
f(X + \D t \dot X) = f(X) +  \langle \dot X,G(X, \dot X)\rangle\D t + o(\D t).
\end{equation}
Combining all of \eqref{eq:energy_diff_first}, \eqref{eq:local_lip} and \eqref{eq:approx_grow}, we obtain
\begin{gather*}
\mathcal{E}(t + \D t) - \mathcal{E}(t) = 2t(f(X(t+\D t)) - f^\star)\D t + t^2\langle\dot X,G(X,\dot X) \rangle\D t -t^2\langle \dot X, G(X, \dot X)\rangle\D t \\
- 2t\langle X -x^\star, G(X, \dot X)\rangle\D t+ o(\D t)\\
=2t(f(X) - f^\star)\D t - 2t\langle X -x^\star, G(X, \dot X)\rangle\D t+ o(\D t)\le o(\D t),
\end{gather*}
where the last inequality follows from the convexity of $f$. Thus, 
$$\limsup_{\D t\rightarrow 0+}\frac{\mathcal{E}(t+\D t)-\mathcal{E}(t)}{\D t} \le 0,$$
which along with the continuity of $\mathcal{E}$, concludes that $\mathcal{E}(t)$ is a non-increasing function of $t$. 

\end{proof}

We give a simple example as follows. Consider the Lasso problem
\[
\mbox{minimize}\quad \frac{1}{2}\norm{y - Ax}^2 + \lambda \norm{x}_1.
\]
Any directional subgradients admits the form $G(x, p) = -A^T(y - Ax) + \lambda\mathop{\mathrm{sgn}}(x, p)$, where 
\[
\mathop{\mathrm{sgn}}(x, p)_i=
\begin{cases}
\mathop{\mathrm{sgn}}(x_i),~ &x_i \ne 0\\
\mathop{\mathrm{sgn}}(p_i), ~ &x_i = 0, p_i \ne 0\\
\in [-1, 1],~ &x_i = 0, p_i = 0.
\end{cases}
\]
To encourage sparsity, for any index $i$ with $x_i = 0, p_i = 0$, we let
\[
G(x, p)_i = \mathop{\mathrm{sgn}}\left( A_i^T(Ax-y) \right) \left( |A_i^T(Ax-y)|-\lambda \right)_+.
\]

\section*{Appendix D. Proof of Theorem \ref{thm:dis_t3}}\label{app:d}
\begin{proof}
Let $g$ be $\mu$--strongly convex and $h$ be convex. For $f = g + h$, we show that \eqref{eq:prox_ineq} can be strengthened to
\begin{equation}\label{eq:str_prox}
f(y-sG_s(y)) \le f(x) + G_s(y)^T(y-x) - \frac{s}{2}\|G_s(y)\|^2 - \frac{\mu}{2}\|y-x\|^2.
\end{equation}
Summing $(4k-3)\times\eqref{eq:str_prox}$ with $x = x_{k-1}, y=y_{k-1}$ and $(4r-6)\times\eqref{eq:str_prox}$ with $x = x^\star, y=y_{k-1}$ yields
\begin{multline}\label{eq:bound_for_g}
(4k+4r-9)f(x_k)\le (4k-3)f(x_{k-1}) + (4r-6)f^\star \\
+ G_s(y_{k-1})^T[(4k+4r-9)y_{k-1} - (4k-3)x_{k-1}-(4r-6)x^\star]\\ - \frac{s(4k+4r-9)}{2}\|G_s(y_{k-1})\|^2 - \frac{\mu(4k-3)}{2}\|y_{k-1}-x_{k-1}\|^2 - \mu(2r-3)\|y_{k-1}-x^\star\|^2\\
\le (4k-3)f(x_{k-1}) + (4r-6)f^\star - \mu(2r-3)\|y_{k-1}-x^\star\|^2\\
+ G_s(y_{k-1})^T\left[ (4k+4r-9)(y_{k-1}-x^\star) 
- (4k-3)(x_{k-1}-x^\star) \right],
\end{multline}
which gives a lower bound on $G_s(y_{k-1})^T \left[ (4k+4r-9)y_{k-1} - (4k-3)x_{k-1}-(4r-6)x^\star \right]$. Denote by $\Delta_k$ the second term of $\tilde{\mathcal{E}}(k)$ in \eqref{eq:dis_t3_energy}, namely, 
\[
\Delta_k \triangleq \frac{k+d}{8}\|(2k+2r-2)(y_k-x^\star) - (2k+1)(x_k-x^\star)\|^2,
\]
where $d := 3r/2-5/2$. Then by \eqref{eq:bound_for_g}, we get
\begin{multline}\nonumber
\Delta_k - \Delta_{k-1} = -\frac{k+d}{8}\Big\langle s(2r+2k-5)G_s(y_{k-1})+\frac{k-2}{k+r-2}(x_{k-1}-x_{k-2}), (4k+4r-9)(y_{k-1}-x^\star) \\
- (4k-3)(x_{k-1}-x^\star) \Big\rangle + \frac{1}{8}\|(2k+2r-4)(y_{k-1}-x^\star) - (2k-1)(x_{k-1}-x^\star)\|^2\\
\le -\frac{s(k+d)(2k+2r-5)}{8} \big[(4k+4r-9)(f(x_k)-f^\star)\\
-(4k-3)(f(x_{k-1})-f^\star) + \mu(2r-3)\|y_{k-1}-x^\star\|^2 \big]\\
-\frac{(k+d)(k-2)}{8(k+r-2)}\left\langle x_{k-1}-x_{k-2}, (4k+4r-9)(y_{k-1}-x^\star)-(4k-3)(x_{k-1}-x^\star) \right\rangle \\+ \frac{1}{8}\|2(k+r-2)(y_{k-1}-x^\star) - (2k-1)(x_{k-1}-x^\star)\|^2.\\
\end{multline}
Hence,
\begin{multline}\label{eq:prox_main_t3}
\Delta_k + \frac{s(k+d)(2k+2r-5)(4k+4r-9)}{8}(f(x_k)-f^\star)\\
\le \Delta_{k-1} + \frac{s(k+d)(2k+2r-5)(4k-3)}{8}(f(x_{k-1})-f^\star)\\
-\frac{s\mu(2r-3)(k+d)(2k+2r-5)}{8}\|y_{k-1}-x^\star\|^2 + \Pi_1 + \Pi_2,
\end{multline}
where
\begin{equation}\nonumber
\Pi_1 \triangleq -\frac{(k+d)(k-2)}{8(k+r-2)}\langle x_{k-1}-x_{k-2}, (4k+4r-9)(y_{k-1}-x^\star)-(4k-3)(x_{k-1}-x^\star)\rangle,
\end{equation}
\begin{equation}\nonumber
\Pi_2 \triangleq \frac{1}{8}\|2(k+r-2)(y_{k-1}-x^\star) - (2k-1)(x_{k-1}-x^\star)\|^2.
\end{equation}
By the iterations defined in \eqref{eq:nesterov_general}, one can show that
\begin{multline}\nonumber
\Pi_1 = -\frac{(2r-3)(k+d)(k-2)}{8(k+r-2)}(\|x_{k-1} - x^\star\|^2 - \|x_{k-2} - x^\star\|^2) \\- \frac{(k-2)^2(4k+4r-9)(k+d) + (2r-3)(k-2)(k+r-2)(k+d)}{8(k+r-2)^2}\|x_{k-1}-x_{k-2}\|^2,
\end{multline}
\begin{multline}\nonumber
\Pi_2 = \frac{(2r-3)^2}{8}\|y_{k-1}-x^\star\|^2 + \frac{(2r-3)(2k-1)(k-2)}{8(k+r-2)}(\|x_{k-1} - x^\star\|^2 - \|x_{k-2} - x^\star\|^2) \\+ \frac{(k-2)^2(2k-1)(2k+4r-7) + (2r-3)(2k-1)(k-2)(k+r-2)}{8(k+r-2)^2}\|x_{k-1}-x_{k-2}\|^2.
\end{multline}
Although this is a little tedious, it is straightforward to check that
$(k-2)^2(4k+4r-9)(k+d) + (2r-3)(k-2)(k+r-2)(k+d) \geq
(k-2)^2(2k-1)(2k+4r-7) + (2r-3)(2k-1)(k-2)(k+r-2)$ for any
$k$. Therefore, $\Pi_1 + \Pi_2$ is bounded as
\begin{equation}\nonumber
\Pi_1 + \Pi_2 \le \frac{(2r-3)^2}{8}\|y_{k-1}-x^\star\|^2 + \frac{(2r-3)(k-d-1)(k-2)}{8(k+r-2)}(\|x_{k-1} - x^\star\|^2 - \|x_{k-2} - x^\star\|^2),
\end{equation}
which, together with the fact that $s\mu(2r-3)(k+d)(2k+2r-5) \geq (2r-3)^2$ when $k\geq \sqrt{(2r-3)/(2s\mu)}$, reduces \eqref{eq:prox_main_t3} to
\begin{multline}\nonumber
\Delta_k + \frac{s(k+d)(2k+2r-5)(4k+4r-9)}{8}(f(x_k)-f^\star)\\
\le \Delta_{k-1} + \frac{s(k+d)(2k+2r-5)(4k-3)}{8}(f(x_{k-1})-f^\star) \\+ \frac{(2r-3)(k-d-1)(k-2)}{8(k+r-2)}(\|x_{k-1} - x^\star\|^2 - \|x_{k-2} - x^\star\|^2).
\end{multline}
This can be further simplified as
\begin{equation}\label{eq:prox_main_t3_2}
\tilde{\mathcal{E}}(k) + A_k(f(x_{k-1})-f^\star) \le \tilde{\mathcal{E}}(k-1) + B_k(\|x_{k-1} - x^\star\|^2 - \|x_{k-2} - x^\star\|^2)
\end{equation}
for $k\geq \sqrt{(2r-3)/(2s\mu)}$, where $A_k = (8r-36)k^2+(20r^2-126r+200)k+12r^3-100r^2+288r-281 > 0$ since $r\geq 9/2$ and $B_k = (2r-3)(k-d-1)(k-2)/(8(k+r-2))$. Denote by $k^\star = \lceil\max\{\sqrt{(2r-3)/(2s\mu)}, 3r/2-3/2\}\rceil \asymp 1/\sqrt{s\mu}$. Then $B_k$ is a positive increasing sequence if $k > k^\star$. Summing \eqref{eq:prox_main_t3_2} from $k$ to $k^\star + 1$, we obtain
\begin{equation}\nonumber
\begin{aligned}
&\mathcal{E}(k) + \sum_{i=k^\star+1}^k A_i(f(x_{i-1}) - f^\star) \le \mathcal{E}(k^\star) + \sum_{i=k^\star+1}^k B_i(\|x_{i-1} - x^\star\|^2 - \|x_{i-2} - x^\star\|^2)\\
& = \mathcal{E}(k^\star) + B_k\|x_{k-1} - x^\star\|^2 - B_{k^\star+1}\|x_{k^\star-1} - x^\star\|^2 + \sum_{i=k^\star+1}^{k-1} (B_{j} - B_{j+1})\|x_{j-1} - x^\star\|^2\\
&\le \mathcal{E}(k^\star) + B_k\|x_{k-1} - x^\star\|^2.
\end{aligned}
\end{equation}
Similarly, as in the proof of Theorem \ref{thm:cube_rate}, we can bound $\mathcal{E}(k^\star)$ via another energy functional defined from Theorem \ref{thm:large_r},
\begin{multline}\label{eq:ener_dis_bound1}
\mathcal{E}(k^\star) \le \frac{s(2k^\star+3r-5)(k^\star+r-2)^2}{2}(f(x_{k^\star}) - f^\star) \\+ \frac{2k^\star+3r-5}{16}\|2(k^\star+r-1)y_{k^\star} - 2k^\star x_{k^\star} - 2(r-1)x^\star -(x_{k^\star} - x^\star)\|^2\\
\le \frac{s(2k^\star+3r-5)(k^\star+r-2)^2}{2}(f(x_{k^\star}) - f^\star) \\
+ \frac{2k^\star+3r-5}{8}\|2(k^\star+r-1)y_{k^\star} - 2k^\star x_{k^\star} - 2(r-1)x^\star\|^2\\ + \frac{2k^\star+3r-5}{8}\|x_{k^\star} - x^\star\|^2 \le \frac{(r-1)^2(2k^\star+3r-5)}{2}\|x_0 - x^\star\|^2 \\+ \frac{(r-1)^2(2k^\star+3r-5)}{8s\mu(k^\star+r-2)^2}\|x_0 - x^\star\|^2 \lesssim \frac{\|x_0-x^\star\|^2}{\sqrt{s\mu}}.
\end{multline}
For the second term, it follows from Theorem \ref{thm:large_r_dis} that
\begin{equation}\label{eq:ener_dis_bound2}
\begin{aligned}
B_k\|x_{k-1} - x^\star\|^2 &\le  \frac{(2r-3)(2k-3r+3)(k-2)}{8\mu(k+r-2)}(f(x_{k-1}) - x^\star)\\
&\le \frac{(2r-3)(2k-3r+3)(k-2)}{8\mu(k+r-2)}\frac{(r-1)^2\|x_0-x^\star\|^2}{2s(k+r-3)^2} \\
&\le \frac{(2r-3)(r-1)^2(2k^\star-3r+3)(k^\star-2)}{16s\mu(k^\star+r-2)(k^\star+r-3)^2} \|x_0-x^\star\|^2 \lesssim  \frac{\|x_0-x^\star\|^2}{\sqrt{s\mu}}.
\end{aligned}
\end{equation}
For $k > k^\star$, \eqref{eq:ener_dis_bound1} together with \eqref{eq:ener_dis_bound2} this gives
\begin{equation}\nonumber
\begin{aligned}
f(x_k)-f^\star &\le \frac{16\mathcal{E}(k)}{s(2k+3r-5)(2k+2r-5)(4k+4r-9)}\\
&\le \frac{16(\mathcal{E}(k^\star) + B_k\|x_{k-1}-x^\star\|^2)}{s(2k+3r-5)(2k+2r-5)(4k+4r-9)} \lesssim \frac{\|x_0 - x^\star\|^2}{s^{\frac{3}{2}}\mu^{\half}k^3}.
\end{aligned}
\end{equation}
To conclusion, note that by Theorem \ref{thm:large_r_dis} the gap
$f(x_k) - f^\star$ for $k \le k^\star$ is bounded by
\begin{equation}\nonumber
\frac{(r-1)^2\|x_0-x^\star\|^2}{2s(k+r-2)^2} = \frac{(r-1)^2\sqrt{s\mu}k^3}{2(k+r-2)^2}\frac{\|x_0 - x^\star\|^2}{s^{\frac{3}{2}}\mu^{\half}k^3} \lesssim \sqrt{s\mu}k^\star\frac{\|x_0 - x^\star\|^2}{s^{\frac{3}{2}}\mu^{\half}k^3} \lesssim \frac{\|x_0 - x^\star\|^2}{s^{\frac{3}{2}}\mu^{\half}k^3}.
\end{equation}

\end{proof}
\section*{Appendix E. Proof of Lemmas in Section \ref{sec:accelerate}}\label{app:e}
First, we prove Lemma \ref{lm:m_upper}.
\begin{proof}
  To begin with, note that the ODE \eqref{key} is equivalent to
  $\d(t^3\dot X(t))/\d t = -t^3\nabla f(X(t))$, which by integration
  leads to
\begin{equation}\label{eq:equiv_ode}
t^3\dot X(t)  = -\frac{t^4}{4}\nabla f(x_0) - \int^t_0 u^3(\nabla f(X(u)) - \nabla f(x_0)) \d u = -\frac{t^4}{4}\nabla f(x_0) - I(t).
\end{equation}
Dividing \eqref{eq:equiv_ode} by $t^4$ and applying the bound on $I(t)$, we obtain
\begin{equation*}
\frac{\norm{\dot X(t)}}{t} \le \frac{\norm{\nabla f(x_0)}}{4} + \frac{\|I(t)\|}{t^4} \le \frac{\norm{\nabla f(x_0)}}{4} + \frac{LM(t)t^2}{12}.
\end{equation*}
Note that the right-hand side of the last display is monotonically increasing in $t$. Hence, by taking the supremum of the left-hand side over $(0, t]$, we get
\begin{equation}\nonumber
M(t) \le \frac{\norm{\nabla f(x_0)}}{4} + \frac{LM(t)t^2}{12},
\end{equation}
which completes the proof by rearrangement.
  
\end{proof}

Next, we prove the lemma used in the proof of Lemma \ref{lm:decayfast}.
\begin{lemma}\label{lm:stopping}
The speed restarting time $T$ satisfies
\[T(x_0, f) \geq \frac{4}{5\sqrt{L}}.\]
\end{lemma}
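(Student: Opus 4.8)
The plan is to prove the stronger statement that $\|\dot X(u)\|$ is \emph{strictly increasing} on $(0, 4/(5\sqrt{L}))$; by the definition of the speed restarting time this is exactly $T(x_0;f) \ge 4/(5\sqrt{L})$. Throughout write $g_0 := \nabla f(x_0)$, which is nonzero since the case $x_0 = x^\star$ is excluded. The idea is to expand both $\dot X$ and $\ddot X$ to leading order in $t$, with the remainders controlled via $M(t)$ and Lemma \ref{lm:m_upper}, and then to check that $\langle \dot X(t), \ddot X(t)\rangle > 0$ on the stated interval.

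First I would record the two expansions. Equation \eqref{eq:equiv_ode} gives $\dot X(t) = -\frac{t}{4}g_0 - I(t)/t^3$, and combining the bound $\|I(t)\| \le LM(t)t^6/12$ with Lemma \ref{lm:m_upper} yields, for $t < \sqrt{12/L}$,
\[
\Bigl\| \dot X(t) + \tfrac{t}{4}g_0 \Bigr\| \le \frac{L\|g_0\| t^3}{48\,(1 - Lt^2/12)} =: \varepsilon_1(t),
\]
which is precisely the estimate already used in the proof of Lemma \ref{lm:decayfast}. Differentiating $t^3\dot X(t) = -\frac{t^4}{4}g_0 - I(t)$ and substituting the previous display gives $\ddot X(t) = -\frac14 g_0 + \frac{3I(t)}{t^4} - \bigl(\nabla f(X(t)) - g_0\bigr)$; bounding $\|I(t)\|$ as above and $\|\nabla f(X(t)) - g_0\| \le LM(t)t^2/2$, and invoking Lemma \ref{lm:m_upper} once more, one obtains
\[
\Bigl\| \ddot X(t) + \tfrac14 g_0 \Bigr\| \le \frac{3L\|g_0\|t^2}{16\,(1 - Lt^2/12)} =: \varepsilon_2(t).
\]

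Then I would compute $\tfrac12\,\frac{\d\|\dot X(t)\|^2}{\d t} = \langle \dot X(t), \ddot X(t)\rangle$ by plugging in $\dot X = -\frac{t}{4}g_0 + e_1$ and $\ddot X = -\frac14 g_0 + e_2$ with $\|e_i\| \le \varepsilon_i$: the inner product equals $\frac{t}{16}\|g_0\|^2$ minus terms bounded by $\frac{t}{4}\|g_0\|\varepsilon_2 + \frac14\|g_0\|\varepsilon_1 + \varepsilon_1\varepsilon_2$. Dividing by $\frac{t}{16}\|g_0\|^2 > 0$ gives
\[
\frac{8}{t\|g_0\|^2}\,\frac{\d\|\dot X(t)\|^2}{\d t} \ge 1 - \frac{3Lt^2}{4(1-Lt^2/12)} - \frac{Lt^2}{12(1-Lt^2/12)} - \frac{L^2t^4}{16(1-Lt^2/12)^2}.
\]
For $t < 4/(5\sqrt{L})$ one has $Lt^2 < 16/25$, so $1 - Lt^2/12 > 0.94$, and a direct numerical check shows the right-hand side exceeds $0.4 > 0$ on the whole interval. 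Hence $\|\dot X(u)\|^2$ is strictly increasing on $(0, 4/(5\sqrt{L}))$, and the conclusion $T \ge 4/(5\sqrt{L})$ follows.

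I expect the only delicate point to be this last estimate: one has to verify that the \emph{accumulated} remainders, which scale like $Lt^2$ relative to the leading term $\frac{t}{16}\|g_0\|^2$, stay below $1$ for all $t$ up to $4/(5\sqrt{L})$ and not merely for $t$ near $0$. The computation above shows this holds with a comfortable margin, and it is exactly here that the particular constant $4/(5\sqrt{L})$ enters; everything else is bookkeeping with estimates already in hand.
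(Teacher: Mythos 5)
Your proposal is correct and follows essentially the same route as the paper: it uses the integral identity $t^3\dot X(t) = -\tfrac{t^4}{4}\nabla f(x_0) - I(t)$ and its derivative to write $\dot X \approx -\tfrac{t}{4}\nabla f(x_0)$ and $\ddot X \approx -\tfrac14\nabla f(x_0)$ with remainders controlled by $M(t)$ and Lemma \ref{lm:m_upper}, and then checks positivity of $\langle \dot X,\ddot X\rangle$ up to $t = 4/(5\sqrt{L})$. Your leading-term-plus-error bookkeeping even yields slightly more comfortable constants than the paper's direct expansion (margin $\approx 0.4$ versus the paper's near-zero slack), but the argument is the same.
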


\begin{proof}
The proof is based on studying $\langle \dot X(t), \ddot X(t) \rangle$. Dividing \eqref{eq:equiv_ode} by $t^3$, we get an expression for $\dot X$,
\begin{equation}\label{eq:xdot}
\dot X(t) = -\frac{t}{4}\nabla f(x_0) - \frac{1}{t^3}\int^t_0 u^3(\nabla f(X(u)) - \nabla f(x_0)) \d u.
\end{equation}
Differentiating the above, we also obtain an expression for $\ddot X$:
\begin{equation}\label{eq:xddot}
\ddot X(t) = -\nabla f(X(t)) + \frac{3}{4}\nabla f(x_0) + \frac{3}{t^4}\int^t_0 u^3(\nabla f(X(u)) - \nabla f(x_0)) \d u.
\end{equation}
Using the two equations we can show that $\d \norm{\dot X}^2/{\d t} =
2\langle \dot X(t), \ddot X(t) \rangle > 0$ for $0< t <
4/(5\sqrt{L})$. Continue by observing that \eqref{eq:xdot} and
\eqref{eq:xddot} yield
\begin{equation}\nonumber
\begin{aligned}
&\langle \dot X(t), \ddot X(t) \rangle = \Big{\langle} -\frac{t}{4}\nabla f(x_0) - \frac{1}{t^3}I(t),~ -\nabla f(X(t)) + \frac{3}{4}\nabla f(x_0) + \frac{3}{t^4}I(t) \Big{\rangle}\\
&\geq \frac{t}{4}\langle \nabla f(x_0), \nabla f(X(t)) \rangle - \frac{3t}{16}\norm{\nabla f(x_0)}^2 - \frac{1}{t^3}\norm{I(t)}\Big{(}\norm{\nabla f(X(t))} + \frac{3}{2}\norm{\nabla f(x_0)}\Big{)} - \frac{3}{t^7}\norm{I(t)}^2\\
&\geq \frac{t}{4}\norm{\nabla f(x_0)}^2 - \frac{t}{4}\norm{\nabla f(x_0)}\norm{\nabla f(X(t)) - \nabla f(x_0)} - \frac{3t}{16}\norm{\nabla f(x_0)}^2 \\
&\quad\quad - \frac{LM(t)t^3}{12}\Big{(}\norm{\nabla f(X(t)) - \nabla f(x_0)} + \frac{5}{2}\norm{\nabla f(x_0)}\Big{)} - \frac{L^2M(t)^2t^{5}}{48}\\
&\geq 
\frac{t}{16}\norm{\nabla f(x_0)}^2 - \frac{LM(t)t^3\|\nabla f(x_0)\|}{8} - \frac{LM(t)t^3}{12}\Big{(}\frac{LM(t)t^2}{2} + \frac{5}{2}\norm{\nabla f(x_0)}\Big{)} - \frac{L^2M(t)^2t^{5}}{48}\\
&= \frac{t}{16}\norm{\nabla f(x_0)}^2 - \frac{LM(t)t^3}{3}\norm{\nabla f(x_0)}   - \frac{L^2M(t)^2t^5}{16}.
\end{aligned}
\end{equation}
To complete the proof, applying Lemma \ref{lm:m_upper}, the last inequality yields
\begin{equation}\nonumber
\langle\dot X(t), \ddot X(t)\rangle
\geq \Big{(}\frac{1}{16} - \frac{Lt^2}{12(1 - Lt^2/12)} - \frac{L^2t^4}{256(1 - Lt^2/12)^2}\Big{)}\norm{\nabla f(x_0)}^2t \geq 0
\end{equation}
for $t < \min\{\sqrt{12/L}, 4/(5\sqrt{L})\}=4/(5\sqrt{L})$, where the
positivity follows from 
$$\frac{1}{16} - \frac{Lt^2}{12(1 - Lt^2/12)} - \frac{L^2t^4}{256(1 - Lt^2/12)^2} > 0,$$
which is valid for $0 < t \le 4/(5\sqrt{L})$.
\end{proof}




\vskip 0.2in
\bibliography{ref}

\begin{thebibliography}{31}
\providecommand{\natexlab}[1]{#1}
\providecommand{\url}[1]{\texttt{#1}}
\expandafter\ifx\csname urlstyle\endcsname\relax
  \providecommand{\doi}[1]{doi: #1}\else
  \providecommand{\doi}{doi: \begingroup \urlstyle{rm}\Url}\fi

\bibitem[Beck(2014)]{beck2014introduction}
A.~Beck.
\newblock \emph{Introduction to Nonlinear Optimization: Theory, Algorithms, and
  Applications with {MATLAB}}.
\newblock SIAM, 2014.

\bibitem[Beck and Teboulle(2009)]{BeckTeboulle}
A.~Beck and M.~Teboulle.
\newblock A fast iterative shrinkage-thresholding algorithm for linear inverse
  problems.
\newblock \emph{SIAM Journal on Imaging Sciences}, 2\penalty0 (1):\penalty0
  183--202, 2009.

\bibitem[Becker et~al.(2011)Becker, Bobin, and Cand{\`e}s]{becker}
S.~Becker, J.~Bobin, and E.~J. Cand{\`e}s.
\newblock {NESTA}: {A} fast and accurate first-order method for sparse
  recovery.
\newblock \emph{SIAM Journal on Imaging Sciences}, 4\penalty0 (1):\penalty0
  1--39, 2011.

\bibitem[Bogdan et~al.(2015)Bogdan, Berg, Sabatti, Su, and Cand\`es]{slope}
M.~Bogdan, E.~v.~d. Berg, C.~Sabatti, W.~Su, and E.~J. Cand\`es.
\newblock {SLOPE}--adaptive variable selection via convex optimization.
\newblock \emph{The Annals of Applied Statistics}, 9\penalty0 (3):\penalty0
  1103--1140, 2015.

\bibitem[Boyd and Vandenberghe(2004)]{boydconvex}
S.~Boyd and L.~Vandenberghe.
\newblock \emph{Convex Optimization}.
\newblock Cambridge University Press, 2004.

\bibitem[Boyd et~al.(2011)Boyd, Parikh, Chu, Peleato, and Eckstein]{ADMM}
S.~Boyd, N.~Parikh, E.~Chu, B.~Peleato, and J.~Eckstein.
\newblock Distributed optimization and statistical learning via the alternating
  direction method of multipliers.
\newblock \emph{Foundations and Trends in Machine Learning}, 3\penalty0
  (1):\penalty0 1--122, 2011.

\bibitem[D{\"u}rr and Ebenbauer(2012)]{durr2012a}
H.-B. D{\"u}rr and C.~Ebenbauer.
\newblock On a class of smooth optimization algorithms with applications in
  control.
\newblock \emph{Nonlinear Model Predictive Control}, 4\penalty0 (1):\penalty0
  291--298, 2012.

\bibitem[D{\"u}rr et~al.(2012)D{\"u}rr, Saka, and Ebenbauer]{durr2012b}
H.-B. D{\"u}rr, E.~Saka, and C.~Ebenbauer.
\newblock A smooth vector field for quadratic programming.
\newblock In \emph{51st IEEE Conference on Decision and Control}, pages
  2515--2520, 2012.

\bibitem[Fiori(2005)]{fiori2005}
S.~Fiori.
\newblock Quasi-geodesic neural learning algorithms over the orthogonal group:
  A tutorial.
\newblock \emph{Journal of Machine Learning Research}, 6:\penalty0 743--781,
  2005.

\bibitem[Helmke and Moore(1996)]{dynamic}
U.~Helmke and J.~Moore.
\newblock Optimization and dynamical systems.
\newblock \emph{Proceedings of the IEEE}, 84\penalty0 (6):\penalty0 907, 1996.

\bibitem[Hinton(2005)]{sturm}
D.~Hinton.
\newblock Sturm's 1836 oscillation results evolution of the theory.
\newblock In \emph{Sturm-{L}iouville theory}, pages 1--27. Birkh\"auser, Basel,
  2005.

\bibitem[Leader(2004)]{leadernumerical}
J.~J. Leader.
\newblock \emph{Numerical Analysis and Scientific Computation}.
\newblock Pearson Addison Wesley, 2004.

\bibitem[Lessard et~al.(2014)Lessard, Recht, and Packard]{recht}
L.~Lessard, B.~Recht, and A.~Packard.
\newblock Analysis and design of optimization algorithms via integral quadratic
  constraints.
\newblock \emph{arXiv preprint arXiv:1408.3595}, 2014.

\bibitem[Monteiro et~al.(2012)Monteiro, Ortiz, and Svaiter]{restarting_gatech}
R.~Monteiro, C.~Ortiz, and B.~Svaiter.
\newblock An adaptive accelerated first-order method for convex optimization.
\newblock Technical report, ISyE, Gatech, 2012.

\bibitem[Nesterov(1983)]{nesterov}
Y.~Nesterov.
\newblock A method of solving a convex programming problem with convergence
  rate ${O}(1/k^2)$.
\newblock \emph{Soviet Mathematics Doklady}, 27\penalty0 (2):\penalty0
  372--376, 1983.

\bibitem[Nesterov(2004)]{nesterov-book}
Y.~Nesterov.
\newblock \emph{Introductory Lectures on Convex Pptimization: {A} Basic
  Course}, volume~87 of \emph{Applied Optimization}.
\newblock Kluwer Academic Publishers, Boston, MA, 2004.

\bibitem[Nesterov(2005)]{nesterovsmooth}
Y.~Nesterov.
\newblock Smooth minimization of non-smooth functions.
\newblock \emph{Mathematical Programming}, 103\penalty0 (1):\penalty0 127--152,
  2005.

\bibitem[Nesterov(2013)]{nesterov_compo}
Y.~Nesterov.
\newblock Gradient methods for minimizing composite functions.
\newblock \emph{Mathematical Programming}, 140\penalty0 (1):\penalty0 125--161,
  2013.

\bibitem[Nocedal and Wright(2006)]{nocedal2006numerical}
J.~Nocedal and S.~Wright.
\newblock \emph{Numerical Optimization}.
\newblock Springer Science \& Business Media, 2006.

\bibitem[O'Donoghue and Cand{\`e}s(2013)]{restart}
B.~O'Donoghue and E.~J. Cand{\`e}s.
\newblock Adaptive restart for accelerated gradient schemes.
\newblock \emph{Found. Comput. Math.}, 2013.

\bibitem[Osher et~al.(2014)Osher, Ruan, Xiong, Yao, and Yin]{osher2014sparse}
S.~Osher, F.~Ruan, J.~Xiong, Y.~Yao, and W.~Yin.
\newblock Sparse recovery via differential inclusions.
\newblock \emph{arXiv preprint arXiv:1406.7728}, 2014.

\bibitem[Polyak(1987)]{polyak1987}
B.~T. Polyak.
\newblock \emph{Introduction to optimization}.
\newblock Optimization Software New York, 1987.

\bibitem[Qin and Goldfarb(2012)]{qin2012}
Z.~Qin and D.~Goldfarb.
\newblock Structured sparsity via alternating direction methods.
\newblock \emph{Journal of Machine Learning Research}, 13\penalty0
  (1):\penalty0 1435--1468, 2012.

\bibitem[Rockafellar(1997)]{rockafellar}
R.~T. Rockafellar.
\newblock \emph{Convex Analysis}.
\newblock Princeton Landmarks in Mathematics. Princeton University Press, 1997.
\newblock Reprint of the 1970 original.

\bibitem[Ruszczy{\'n}ski(2006)]{ruszczynski2006}
A.~P. Ruszczy{\'n}ski.
\newblock \emph{Nonlinear Optimization}.
\newblock Princeton University Press, 2006.

\bibitem[Schropp and Singer(2000)]{schropp}
J.~Schropp and I.~Singer.
\newblock A dynamical systems approach to constrained minimization.
\newblock \emph{Numerical functional analysis and optimization}, 21\penalty0
  (3-4):\penalty0 537--551, 2000.

\bibitem[Shor(2012)]{shor2012}
N.~Z. Shor.
\newblock \emph{Minimization Methods for Non-Differentiable Functions}.
\newblock Springer Science \& Business Media, 2012.

\bibitem[Sutskever et~al.(2013)Sutskever, Martens, Dahl, and
  Hinton]{sutskever2013}
I.~Sutskever, J.~Martens, G.~Dahl, and G.~Hinton.
\newblock On the importance of initialization and momentum in deep learning.
\newblock In \emph{Proceedings of the 30th International Conference on Machine
  Learning}, pages 1139--1147, 2013.

\bibitem[Tseng(2008)]{tseng}
P.~Tseng.
\newblock On accelerated proximal gradient methods for convex-concave
  optimization.
\newblock \url{http://pages.cs.wisc.edu/~brecht/cs726docs/Tseng.APG.pdf}, 2008.

\bibitem[Tseng(2010)]{tsengapprox}
P.~Tseng.
\newblock Approximation accuracy, gradient methods, and error bound for
  structured convex optimization.
\newblock \emph{Mathematical Programming}, 125\penalty0 (2):\penalty0 263--295,
  2010.

\bibitem[Watson(1995)]{besselbook}
G.~N. Watson.
\newblock \emph{A Treatise on the Theory of Bessel Functions}.
\newblock Cambridge Mathematical Library. Cambridge University Press, 1995.
\newblock Reprint of the second (1944) edition.

\end{thebibliography}

\end{document}